\def\eqref#1{equation~\ref{#1}}
\def\1{\bm{1}}
\DeclareMathAlphabet{\mathsfit}{\encodingdefault}{\sfdefault}{m}{sl}
\SetMathAlphabet{\mathsfit}{bold}{\encodingdefault}{\sfdefault}{bx}{n}
\DeclareMathOperator*{\argmax}{arg\,max}
\title{Federated $Q$-Learning with Reference-Advantage Decomposition: Almost Optimal Regret and Logarithmic Communication Cost}
\author{Zhong Zheng, Haochen Zhang \& Lingzhou Xue\thanks{Z. Zheng and H. Zhang are co-first authors. L. Xue is the corresponding author.}   \\
Department of Statistics\\
The Pennsylvania State University\\
State College, PA, 16802, USA \\
\texttt{\{zvz5337,hqz5340,lzxue\}@psu.edu}
}
\newtheorem{theorem}{Theorem}[section]
\newtheorem{lemma}{Lemma}[section]
\newcommand{\sah}{\mathcal{S}\times \mathcal{A}\times [H]}
\newcommand{\be}{\begin{equation}}
\newcommand{\ee}{\end{equation}}
\newcommand{\nref}{{\textnormal{ref}}}
\newcommand{\REF}{{\textnormal{REF}}}
\newcommand{\adv}{{\textnormal{adv}}}
\newcommand{\ph}{{h^\prime}}
\newcommand{\Mc}{\mathcal{M}}
\newcommand{\Eb}{\mathbb{E}}
\newcommand{\pluseq}{\stackrel{+}{=}}
\begin{document}

\maketitle

\begin{abstract}
In this paper, we consider model-free federated reinforcement learning for tabular episodic Markov decision processes. Under the coordination of a central server, multiple agents collaboratively explore the environment and learn an optimal policy without sharing their raw data. Despite recent advances in federated $Q$-learning algorithms achieving near-linear regret speedup with low communication cost, existing algorithms only attain suboptimal regrets compared to the information bound. We propose a novel model-free federated $Q$-learning algorithm, termed FedQ-Advantage. Our algorithm leverages reference-advantage decomposition for variance reduction and adopts three novel designs: separate event-triggered communication and policy switching, heterogeneous communication triggering conditions, and optional forced synchronization. We prove that our algorithm not only requires a lower logarithmic communication cost but also achieves an almost optimal regret, reaching the information bound up to a logarithmic factor and near-linear regret speedup compared to its single-agent counterpart when the time horizon is sufficiently large.
\end{abstract}

\section{Introduction}
Federated reinforcement learning (FRL) is a distributed learning framework that combines the principles of reinforcement learning (RL) \citep{1998Reinforcement} and federated learning (FL) \citep{mcmahan2017communication}. Focusing on sequential decision-making, FRL aims to learn an optimal policy through parallel explorations by multiple agents under the coordination of a central server. Often modeled as a Markov decision process (MDP), multiple agents independently interact with an initially unknown environment and collaboratively train their decision-making models with limited information exchange between the agents. This approach accelerates the learning process with low communication costs. Some model-based algorithms (e.g., \cite{chen2023byzantine}) and policy-based algorithms (e.g., \cite{fan2021fault}) have shown speedup with respect to the number of agents in terms of learning regret or convergence rate. Recent progress has been made in FRL algorithms based on model-free value-based approaches, which directly learn the value functions and the optimal policy without estimating the underlying model (e.g., \cite{woo2023blessing}). However, limiting our focus to tabular MDPs, most existing model-free federated algorithms do not actively update the exploration policies for local agents and fail to provide low regret. A comprehensive literature review is provided in Appendix \ref{related}.

\subsection{Federated $Q$-learning: prior works and limitations}
In this paper, we focus on model-free FRL based on the classic $Q$-learning algorithm \citep{watkins1989learning}, tailored for episodic tabular MDPs with inhomogeneous transition kernels. Specifically, we assume the presence of a central server and $M$ local agents in the system. Each agent interacts independently with an episodic MDP consisting of $S$ states, $A$ actions, and $H$ steps per episode.

Let $T$ denote the number of steps for each agent. Under the single-agent setting of the episodic MDP, \cite{domingues2021episodic} and \cite{jin2018q} established a lower bound for the expected total regret of $\Omega(\sqrt{H^2SAT})$. An algorithm is considered almost optimal when it achieves a regret upper bound of $\tilde{O}(\sqrt{H^2SAT})$\footnote{$\tilde{O}$ hides logarithmic factors.} for large values of $T$. Multiple model-based algorithms (e.g., \cite{zhang2023settling}) have been shown to be almost optimal. Research on provably efficient model-free algorithms began with \cite{jin2018q} and was further advanced by \cite{bai2019provably, zhang2020almost, li2021breaking}. Specifically, \cite{zhang2020almost, li2021breaking} proposed almost optimal algorithms that utilized reference-advantage decomposition for variance reduction.

For the federated setting, the information bound naturally translates to $\Omega(\sqrt{H^2SAMT})$, allowing us to define almost optimal federated algorithms similarly. However, the literature on federated model-free algorithms is quite limited. \cite{bai2019provably} and \cite{zhang2020almost} proposed concurrent algorithms where multiple agents generate episodes simultaneously and share their original data with the central server. These designs achieved low policy-switching costs $O(\sqrt{H^3SAT})$ and $O(\sqrt{H^2SAT})$ respectively but incurred a high communication cost of $O(MT)$. \cite{zheng2023federated} proposed federated algorithms with near-linear regret speedup compared to \cite{jin2018q} and \cite{bai2019provably} and logarithmic communication cost, but they only achieved a suboptimal regret of $\tilde{O}(\sqrt{MH^3SAT})$. This raises the following question:
\begin{center}
\textit{Is it possible to design an almost optimal federated model-free RL algorithm that enjoys a logarithmic communication cost?}
\end{center}
\subsection{Summary of our contributions}
We answer this question affirmatively by proposing the  FedQ-Advantage algorithm to achieve the almost optimal regret and the logarithmic communication cost. Our contributions are summarized below.
\begin{itemize}[topsep=0pt, left=0pt] 
\item \textbf{Algorithmic design.} 
In FedQ-Advantage, the server coordinates the agents by actively updating their policies, while the agents execute these policies, collect trajectories, and periodically share local aggregations with the server. We adopt upper confidence bounds (UCB) to promote exploration and use the reference-advantage decomposition when updating the $Q$-function.
The algorithm design incorporates the following key elements that are crucial for achieving near-optimal regret.

(1) {\it Aligned round-wise communication and unaligned stage-wise updates.} We design a new mechanism based on event-triggered communication and policy switching, both of which are triggered when specific conditions are satisfied. This structure divides the learning process into aligned communication rounds, which are grouped into stages. These stages are unaligned across different state-action-step tuples. Communication takes place after each round, while policy switching occurs only at the end of each stage. This mechanism employs the unaligned stage design from \cite{zhang2020almost}, which is not used in \cite{zheng2023federated}. During communication, local agents share the round-wise aggregated sums of function values over visits to each tuple rather than entire trajectories. The central server then constructs global estimates within the reference-advantage decomposition framework, maintaining low communication costs.

(2) {\it Heterogeneous event-triggered Communication.} An agent terminates its exploration and requests communication in a round when the number of visits to any state-action-step tuple reaches a threshold, which guarantees sufficient exploration under restrictions. We adopt a heterogeneous design for the threshold that encourages more visits in the early rounds of a stage and limits the visits in later rounds to form desired stage renewals. This differs from the condition in \cite{zheng2023federated} that always poses strict limits.

(3) \textit{An optional forced synchronization mechanism.} 
Under this mechanism offered by FedQ-Advantage, when one agent triggers the communication condition, the central server terminates the exploration for all agents and initiates a new round. This approach enhances robustness to heterogeneity in agents' exploration speeds and eliminates waiting time. In the absence of forced synchronization, the central server waits for each agent to individually meet the communication condition, thereby reducing the number of communication rounds required.
\item \textbf{Performance guarantees.} 
FedQ-Advantage provably achieves an almost optimal regret and near-linear speedup in the number of agents compared with its single-agent counterparts \citep{zhang2020almost} when  $T$ is sufficiently large. The regret bound holds regardless of whether forced synchronization is used.
Its communication cost scales logarithmically with $T$, outperforming the federated algorithms in \cite{zheng2023federated} and matching the policy switching cost in \cite{zhang2020almost}, which is the best cost for $Q$-learning in the literature. To the best of our knowledge, it is the first model-free federated RL algorithm to achieve almost optimal regret with logarithmic communication cost. We compare the regret and communication costs under multi-agent tabular episodic MDPs in Table \ref{table:comp}. Numerical experiments also demonstrate that FedQ-Advantage has better regret and communication cost compared to the federated algorithms in \cite{zheng2023federated}.

\item \textbf{Technical novelty.} 
We highlight two technical contributions here. (1) \textit{Stage-wise approximations in non-martingale analysis.} The event-triggered stage renewal presents a non-trivial challenge involving the concentration of the sum of non-martingale difference sequences. The weight assigned to each visit of a given tuple $(s,a,h)$ depends on the total number of visits between two model aggregation points, which is not causally known during the visitation. This paper proves the concentration by relating the sequence to a martingale difference sequence and bounding their stage-wise gap, which is 
different from \cite{woo2023blessing} and \cite{woo2024federated} that used static behavior policies and bound similar gaps element-wisely. Our approach does not rely on a stationary visiting probability or the estimation of visiting numbers. (2) \textit{Heterogeneous triggering conditions for synchronization.} For different rounds (of synchronization) in a given stage (of policy update), we use different triggering conditions that allow more visits of a tuple $(s,a,h)$ in early rounds. This reduces the number of synchronizations within a stage to $O(f_M\log H)$ from $O(f_MH)$, which would occur under homogeneous triggering conditions in \cite{zheng2023federated}. This is key to improving the communication cost of \cite{zheng2023federated} and matching the policy switching cost of \cite{zhang2020almost}.

\end{itemize}
\begin{table*}[t]
    \caption{Comparison of regrets and communication costs for multi-agent RL algorithms.}
\label{table:comp}
    \begin{center}
    \adjustbox{max width= \linewidth}{
    \begin{tabular}{cccc}
    \Xhline{3\arrayrulewidth}
    Type & Algorithm (Reference) & Regret & Communication cost \\
    \Xhline{2\arrayrulewidth}
    \multirow{3}{*}{Model-based}
    & Multi-batch RL \citep{zhang2022near} & $\tilde{O}(\sqrt{H^{2}SAMT})$ & -\\ 
    & APEVE \citep{qiao2022sample} & $\tilde{O}(\sqrt{H^{4}S^2AMT})$ & -\\ 
    & Byzan-UCBVI \citep{chen2023byzantine} & $\tilde{O}(\sqrt{H^3S^2AMT})$ & $O(M^2H^2S^2A^2\log T)$ \\
    \midrule
    \multirow{5}{*}{Model-free}
    & Concurrent Q-UCB2H  \citep{bai2019provably} & $\tilde{O}(\sqrt{H^{4}SAMT})$  & $O(MT)$ \\
    & Concurrent Q-UCB2B \citep{bai2019provably} & $\tilde{O}(\sqrt{H^{3}SAMT})$ & $O(MT)$  \\        
    & Concurrent UCB-Advantage \citep{zhang2020almost} & $\tilde{O}(\sqrt{H^{2}SAMT})$ & $O(MT)$  \\ 
    & FedQ-Hoeffding \citep{zheng2023federated} &  $\tilde{O}(\sqrt{H^{4}SAMT})$ &  $O(M^2H^4S^2A\log T)$ \\ 
    & FedQ-Bernstein \citep{zheng2023federated} &  $\tilde{O}(\sqrt{H^{3}SAMT})$ & $O(M^2H^4S^2A\log T)$ \\
        & FedQ-Advantage ({\bf this work}) &  $\tilde{O}(\sqrt{H^{2}SAMT})$ & $O(f_MMH^3S^2A(\log H)\log T)$ \\
    \Xhline{3\arrayrulewidth}
    \end{tabular}
    }
    \end{center}
    
    \begin{center}
        \scriptsize
	    $H$: number of steps per episode; $T$: total number of steps; $S$: number of states; $A$: number of actions; $M$: number of agents. -: not discussed. $f_M$ equals to $M$ if the forced synchronization design is used and equals to 1 else. 
	\end{center} 
	\vspace{-0.2in}
		
\end{table*}

The rest of this paper is organized as follows. Section \ref{sec:background} provides the background and problem formulation. Section \ref{sec:alg_design} presents the algorithm design of FedQ-Advantage. Section \ref{sec:performance} studies the performance guarantees in terms of regret and communication cost. Section \ref{sec:conclusion} concludes the paper. Related works, proofs, numerical experiments, and more details are presented in the appendices.

\section{Background and problem formulation}\label{sec:background}

\subsection{Preliminaries}
We first introduce the mathematical model and background on Markov decision processes. Throughout this paper, we assume that $0/0 = 0$. For any $C\in \mathbb{N}$, we use $[C]$ to denote the set $\{1,2,\ldots C\}$. We use $\mathbb{I}[x]$ to denote the indicator function, which equals 1 when the event $x$ is true and 0 otherwise.

\textbf{Tabular episodic Markov decision process (MDP).}
A tabular episodic MDP is denoted as $\Mc:=(\mathcal{S}, \mathcal{A}, H, \mathbb{P}, r)$, where $\mathcal{S}$ is the set of states with $|\mathcal{S}|=S, \mathcal{A}$ is the set of actions with $|\mathcal{A}|=A$, $H$ is the number of steps in each episode, $\mathbb{P}:=\{\mathbb{P}_h\}_{h=1}^H$ is the transition kernel so that $\mathbb{P}_h(\cdot \mid s, a)$ characterizes the distribution over the next state given the state action pair $(s,a)$ at step $h$, and $r:=\{r_h\}_{h=1}^H$ is the collection of reward functions. We assume that $r_h(s,a)\in [0,1]$ is a {deterministic} function of $(s,a)$, while the results can be easily extended to the case when $r_h$ is random. 
	
	In each episode of $\Mc$, an initial state $s_1$ is selected arbitrarily by an adversary. Then, at each step $h \in[H]$, an agent observes a state $s_h \in \mathcal{S}$, picks an action $a_h \in \mathcal{A}$, receives the reward $r_h = r_h(s_h,a_h)$ and then transits to the next state $s_{h+1}$. 
 The episode ends when an absorbing state $s_{H+1}$ is reached. Later on, for the ease of presentation, we use 
 ``for any $(\forall)$ $(s,a,h)$" to represent ``for any $(\forall)$ $ (s,a,h)\in\sah$" and
 denote
$\mathbb { P }_{s,a,h}f = \mathbb{E}_{s_{h+1}\sim \mathbb{P}_h(\cdot|s,a)}(f(s_{h+1})|s_h=s,a_h=a)$ and $\mathbbm { 1 }_s f = f(s), \forall (s,a,h)$
 for any function $f: \mathcal{S} \rightarrow \mathbb{R}$. 


\textbf{Policies, state value functions, and action value functions.}
	A policy $\pi$ is a collection of $H$ functions $\left\{\pi_h: \mathcal{S} \rightarrow \Delta^\mathcal{A}\right\}_{h \in[H]}$, where $\Delta^\mathcal{A}$ is the set of probability distributions over $\mathcal{A}$. A policy is deterministic if for any $s\in\mathcal{S}$,  $\pi_h(s)$ concentrates all the probability mass on an action $a\in\mathcal{A}$. In this case, we denote $\pi_h(s) = a$. 
 
 Let $V_h^\pi: \mathcal{S} \rightarrow \mathbb{R}$ and $Q_h^\pi: \mathcal{S} \times \mathcal{A} \rightarrow \mathbb{R}$ denote the state value function and the action value function at step $h$ under policy $\pi$.
 Mathematically, $V_h^\pi(s):=\sum_{h^{\prime}=h}^H \mathbb{E}_{(s_{h^{\prime}},a_{h^{\prime}})\sim(\mathbb{P}, \pi)}\left[r_{h^{\prime}}(s_{h^{\prime}},a_{h^{\prime}}) \left. \right\vert s_h = s\right]$.
We also use $Q_h^\pi(s, a):=r_h(s,a)+\sum_{h^{\prime}=h+1}^H\mathbb{E}_{(s_{\ph},a_\ph)\sim\left(\mathbb{P},\pi\right)}\left[ r_{h^{\prime}}(s_{h^{\prime}},a_{h^{\prime}}) \left. \right\vert s_h=s, a_h=a\right]$. 
	Since the state and action spaces and the horizon are all finite, there always exists an optimal policy $\pi^{\star}$ that achieves the optimal value $V_h^{\star}(s)=\sup _\pi V_h^\pi(s)=V_h^{\pi^*}(s)$ for all $s \in \mathcal{S}$ and $h \in[H]$ \citep{azar2017minimax}. The Bellman equation and
	the Bellman optimality equation can be expressed as
	\begin{equation}\label{eq_Bellman}
		\left\{\begin{array} { l } 
			{ V _ { h } ^ { \pi } ( s ) = \Eb_{a'\sim \pi _ { h } ( s )}[Q _ { h } ^ { \pi } ( s , a' ) }] \\
			{ Q _ { h } ^ { \pi } ( s , a ) : =  r _ { h }(s,a) + \mathbb { P } _ { s,a,h } V _ { h + 1 } ^ { \pi } } \\
			{ V _ { H + 1 } ^ { \pi } ( s ) = 0,  \forall (s,a,h) }
		\end{array}  \text { and } \left\{\begin{array}{l}
			V_h^{\star}(s)=\max _{a' \in \mathcal{A}} Q_h^{\star}(s, a') \\
			Q_h^{\star}(s, a):=r_h(s,a)+\mathbb{P}_{s,a,h} V_{h+1}^{\star}\\
			V_{H+1}^{\star}(s)=0,  \forall (s,a,h).
		\end{array}\right.\right.
	\end{equation}

\subsection{The federated RL framework}

We consider an FRL setting with a central server and $M$ agents, each interacting with an independent copy of $\Mc$. The agents communicate with the server periodically: after receiving local information, the central server aggregates it and broadcasts certain information to the agents to coordinate their exploration. We assume that the central server knows the reward functions $\{r_h\}_{h=1}^H$ beforehand\footnote{To handle unknown reward functions, we only need to slightly modify our algorithm to let agents share this information. This will not affect our Theorems \ref{thm_regret_advantage} and \ref{thm_comm_cost_advantage} on regret and communication cost.}. We define the communication cost of an algorithm as the number of scalars (integers or real numbers) communicated between the server and agents similar to \cite{zheng2023federated}.

For agent $m$, let $U_m$ be the number of generated episodes,
$\pi^{m,u}$ be the policy in the $u$-th episode, and $x_1^{m,u}$ be the corresponding initial state. The regret of $M$ agents over $\hat{T}=H\sum_{m=1}^M U_m$ total steps is
$$\mbox{Regret}(T) = \sum_{m \in [M]} \sum_{u=1}^{U_m} \left(V_1^\star(s_1^{m,u}) - V_1^{\pi^{m,u}}(s_1^{m,u})\right).$$
Here, $T:=\hat{T}/M$ is the average total steps for $M$ agents.

\section{Algorithm design}\label{sec:alg_design}
In this section, we elaborate on our model-free federated RL algorithm termed FedQ-Advantage.

\subsection{Basic Structure: Aligned Rounds and Unaligned Stages}\label{subsec:structure}
We first review the single-agent algorithm in \cite{zhang2020almost}. The agent generates episodes and splits them into \textbf{stages} for each $(s, a, h)$. Denoting $y_t(s,a,h)$ as the number of visits to $(s,a,h)$ in the $t$-th stage for $(s,a,h)$, it requires that $y_{t+1}(s,a,h) = \lfloor (1+1/H)y_t(s,a,h)\rfloor$, and the updates of estimated $Q$-functions at $(s,a,h)$ only happen at the end of each stage. Due to the randomness of the visits, stage renewals for different triples might not happen simultaneously, resulting in unaligned stages. The exponential increase of the stage size leads to a low policy switching cost: the number of different implemented policies is upper bounded by $O(H^2SA\log T)$. This provides the potential to parallelize the episodes generated under the same policy to multiple agents. 

However, the simple design in \cite{zheng2023federated} does not accommodate the unaligned stages. Thus, FedQ-Advantage designs novel aligned rounds for unaligned stages. Next, we introduce our algorithm design, which is also visually shown in \Cref{Fig_diagram}. It proceeds in rounds indexed by $k \in [K]$ and agent $m$ generates $n^{m,k}$ episodes in round $k$. The communication between agents and the central server occurs at the end of each round. For each $(s,a,h)$, we divide rounds $k \in [K]$ into stages $t = 1, 2, \ldots$. Each stage contains consecutive multiple rounds: denote $k^t_h(s,a)$ as the index of the first round that belongs to stage $t$, so $k^t_h < k^{t+1}_h$, and stage $t$ is composed of rounds $k^t_h, k^t_h + 1, \ldots, k^{t+1}_h - 1$. Note that the definition of stages is specific to $(s,a,h)$, meaning that a given round may belong to different stages for different $(s,a,h)$. Each round equips the agents with a common policy $\pi^k$ for independent explorations and an event-triggered termination condition that will be explained later. At the end of each round, state renewal is judged for each $(s,a,h)$ separately, resulting in unaligned stages. 

\begin{figure}[H]
    \centering
\begin{tikzpicture}
    \draw[fill=blue!20] (0,0) rectangle (10,1);;
    \node at (-1, 0.5) {$K$ rounds};
    \draw[thick] (0, 0.5) -- (10, 0.5);
    \draw[decorate, decoration={brace, amplitude=10pt}] (0, 1) -- (3, 1) node[midway, yshift= 0.6cm] {Stage 1};
    \draw[decorate, decoration={brace, amplitude=10pt}] (3, 1) -- (5, 1) node[midway, yshift= 0.6cm] {Stage 2};
    \node at (6,1.25) {$\cdots$};
    \draw[decorate, decoration={brace, amplitude=10pt}] (7, 1) -- (10, 1) node[midway, yshift= 0.6cm] {Stage $T_1$};
    
    \draw[decorate, decoration={brace, amplitude=10pt, mirror, mirror, mirror}] (0, 0) -- (2.5, 0) node[midway, yshift= -0.6cm] {Stage 1};
    \draw[decorate, decoration={brace, amplitude=10pt, mirror, mirror}] (2.5, 0) -- (6.2, 0) node[midway, yshift= -0.6cm] {Stage 2};
    \node at (7.1,-0.25) {$\cdots$};
    \draw[decorate, decoration={brace, amplitude=10pt, mirror}] (8,0) -- (10, 0) node[midway, yshift= -0.6cm] {Stage $T_2$};
    
    \node at (-1, 1.6) {$(s^1, a^1, h^1)$};
    \node at (-1, -0.6) {$(s^2, a^2, h^2)$};
    
    \foreach \x in {2, 6, 10, 14, 19}
        \draw[thick] (\x*0.5, 1) -- (\x*0.5, 0.5);
    \draw[thick] (0.6, 1) -- (0.6, 0.5);
    \draw[thick] (4.3, 1) -- (4.3, 0.5);
    \draw[thick] (2.52, 1) -- (2.52, 0.5);
    \node at (1.75, 0.75) {$\cdots$};
    \node at (3.65,0.75) {$\cdots$};
    \node at (6,0.75) {$\cdots$};
    \node at (8.25,0.75) {$\cdots$};
    \node at (0.3,0.75) {$1$};
    \node at (0.8,0.75) {$2$};
    \node at (2.78,0.75) {$k_{11}$};
    \node at (4.65,0.75) {$k_{12}$};
    \node at (9.75,0.75) {$K$};

    \foreach \x in {2, 4, 5, 16, 19}
        \draw[thick] (\x*0.5, 0) -- (\x*0.5, 0.5);
    \draw[thick] (0.6, 0) -- (0.6, 0.5);
    \draw[thick] (5.2, 0) -- (5.2, 0.5);
    \draw[thick] (6.2, 0) -- (6.2, 0.5);
    \node at (1.5, 0.25) {$\cdots$};
    \node at (3.85,0.25) {$\cdots$};
    \node at (7.1,0.25) {$\cdots$};
    \node at (8.75,0.25) {$\cdots$};
    \node at (0.3,0.25) {$1$};
    \node at (0.8,0.25) {$2$};
    \node at (2.25,0.25) {$k_{12}$};
    \node at (5.7,0.25) {$k_{22}$};
    \node at (9.75,0.25) {$K$};
\end{tikzpicture}
\caption{The relationship between rounds and stages for different triples $(s^1,a^1,h^1)$ and $(s^2,a^2,h^2)$. Each square represents a round, and the number inside indicates the round index. A stage is composed of consecutive rounds. Communication occurs at the end of each round and the estimated $Q$-function is updated at the end of each stage. We can find from the figure that a round may belong to different stages for different triples. For example, the round $k_{11}$ is in stage $1$ of $(s^1,a^1,h^1)$, while in stage $2$ of  $(s^2,a^2,h^2)$. Here, $k_{it} = k_{h^i}^{t+1}(s^i,a^i) - 1$ represents the index of the last round in stage $t$ for $(s^i,a^i,h^i)$, $t\in \{1,2,\cdots,T_i\}$ and $i\in \{1,2\}$.  $T_1$ and $T_2$ are the total number of stages for $(s^1,a^1,h^1)$ and $(s^2,a^2,h^2)$ respectively.} \label{Fig_diagram}
\end{figure}

FedQ-Advantage updates the estimated $Q$-function at $(s,a,h)$ only at the end of each stage using stage-wise or global mean values regarding the next states of visits to $(s,a,h)$. Thus, agents only need to prepare and share corresponding local round-wise means for global aggregations. It results in an $O(MHS)$ communication cost within each round that is independent of the number of episodes.
\subsection{Algorithm details}
We provide a notation table in \Cref{notation} to facilitate understanding of this section. For the $j$-th ($j\in[n^{m,k}]$) episode in the $k$-th round, let $s_{1}^{k,m,j}$ be the initial state for the $m$-th agent, and $\{(s_h^{k,m,j}, a_h^{k,m,j}, r_h^{k,m,j})\}_{h=1}^H$ be the corresponding trajectory. Define $\{V_{h}^k: \mathcal{S}\rightarrow \mathbb{R}\}_{h=1}^{H+1}$, $\{Q_{h}^k: \mathcal{S}\times \mathcal{A}\rightarrow \mathbb{R}\}_{h=1}^{H+1}$ and $\{V^{\nref, k}_{h}: \mathcal{S}\rightarrow \mathbb{R}\}_{h=1}^{H+1}$ as the estimated $V$-function, the estimated $Q$-function and the reference function at the beginning of round $k$. Here, $Q_{H+1}^k,V_{H+1}^k,V_{H+1}^{\nref,k} = 0$. We use $V_{h}^{\adv,k} = V_{h}^{k} - V_{h}^{\nref,k}$ to denote the estimated advantage function. For any predefined functions $g:\mathcal{S}\times \mathcal{A}\rightarrow \mathbb{R}$ or  $f:\mathcal{S}\rightarrow \mathbb{R}$, we will use $g$ or $f$ in replace of $g(s,a)$ or $f(s)$ when there is no ambiguity for simplification. We also denote $t_h^k(s,a)$ as the stage index in round $k$ and $\mathbb{I}_h^{\textnormal{ren},k}(s,a) = \mathbb{I}[t_h^k>t_h^{k-1}]$ as a stage renewal indicator with $\mathbb{I}_h^{\textnormal{ren},1} = 1,\forall (s,a,h)$.

Then we briefly explain each component of the algorithm in round $k$ as follows.

 \textbf{Step 1. Coordinated exploration for agents.} At the beginning of round $k$, the server holds the values on all states, actions, and steps for functions 
 $\{Q_h^k,V_h^k,V^{\nref, k}_{h}, N_h^k,n_h^k,\tilde{n}_h^k\}$. Here, $N_h^k(s,a)$ is the total number of visits for all agents up to but not including stage $t_h^k$, $n_h^{k}(s,a)$ is the total number of visits for all agents in the stage $t_h^k-1$, and $\tilde{n}_{h}^k(s,a)$ is the number of visits for all the agents in the stage $t_h^k$ before the start of round $k$. Here, $n_h^k = 0$ if $t_h^k = 1$.
 When $k=1$, $Q_h^1 = V_h^1 = V_h^{\nref, 1} = H, \forall (s,a,h)$ and $\pi^1$ is an arbitrary deterministic policy. It also holds the following global values $\mu_h^{\nref,k}(s,a),\sigma_h^{\nref,k}(s,a) ,\mu_h^{\adv,k}(s,a) ,\sigma_h^{\adv,k}(s,a) ,\mu_h^{\textnormal{val},k}(s,a),\forall (s,a,h)$. When $k=1,$ they are initialized as 0. Further explanations will be provided in their updates in Step 4. Next, the central server decides a deterministic policy $\pi^k = \{\pi_{h}^k\}_{h=1}^H$, and then broadcasts $\pi_h^k$ along with $\{n_h^k(s,\pi_{h}^k(s)),\tilde{n}_h^k(s,\pi_{h}^k(s))\}_{s,h}$ and $\{V_{h}^k,V_{h}^{\nref,k}\}_{s,h}$ to all of the agents.  Once receiving such information, the agents will execute policy $\pi^k$ and start collecting trajectories. 

 \textbf{Step 2. Event-triggered termination of exploration.}
We introduce a Boolean variable $u_{syn}$ as an input to the algorithm for the forced synchronization. During the exploration under $\pi^k$, every agent will monitor its total number of visits for each $(s,a,h)$ triple within the current round. Define
\begin{equation}\label{trigger}
c_h^k(s,a) = \left\{
\begin{aligned}
    &\left\lfloor n_h^k(s,a)/(MH)\right\rfloor, \text{ if } n_h^k(s,a)>0, \tilde{n}_{h}^k(s,a) > (1-1/H)n_h^k(s,a),\\
     &\max\left\{1, \left\lceil (n_h^k(s,a)-\tilde{n}_{h}^k(s,a))/M \right\rceil\right\},\text{ otherwise. } 
\end{aligned}
\right.
\end{equation}
If $u_{syn} = \textnormal{TRUE}$, for any agent $m$, at the end of each episode, if any $(s,a,h)$ has been visited by $c_h^k(s,a)$ times, the agent will stop exploration and send a signal to the server that requests all agents to abort the exploration. If $u_{syn} = \textnormal{FALSE}$, the central server will wait until for each agent, there exists a triple $(s,a,h)$ that has been visited by $c_h^k(s,a)$ times. 

During this process, each agent $m$ collect $n^{m,k}$ trajectories $\{(s_h^{k,m,j}, a_h^{k,m,j}, r_h^{k,m,j})\}_{h=1}^H,j\in [n^{m,k}]$ and calculates the following local quantities:
\begin{equation}\label{local_quantities}
n_h^{m,k}(s,a),\mu_{h,\nref}^{m,k}(s,a),\mu_{h,\adv}^{m,k}(s,a),\mu_{h,\textnormal{val}}^{m,k},\sigma_{h,\nref}^{m,k}(s,a),\sigma_{h,\adv}^{m,k}(s,a),\forall (s,a,h).
\end{equation}
Here, $n_h^{m,k}(s,a)$ is the number of visits to $(s,a,h)$ for agent $m$ in round $k$. Thus, we have 
\begin{equation*}\label{eq_local_visit_bound}
    \forall (s,a,h,m,k),\ n^{m,k}_h(s,a)\leq c_h^k(s,a),
\end{equation*}
\begin{align}
    \forall k,\ \exists (s,a,h,m),\ s.t.\ n^{m,k}_h(s,a) = c_h^k(s,a), & \quad\mbox{ if }u_{syn} = \textnormal{TRUE} \label{ineq_usyn_true}\\ 
    \forall m,k,\ \exists (s,a,h),\ s.t.\ n^{m,k}_h(s,a) = c_h^k(s,a), & \quad\mbox{ if }u_{syn} = \textnormal{FALSE}.\label{ineq_usyn_false}
\end{align}
Other quantities correspond to the summation of the values of five different functions applied to the next states of all the visits to $(s,a,h)$ for agent $m$ in round $k$. These five functions are $V_{h+1}^{\nref,k},V_{h+1}^{\adv,k},V_{h+1}^{k},[V_{h+1}^{\nref,k}]^2,[V_{h+1}^{\adv,k}]^2$. Mathematically, for $f:\mathcal{S}\rightarrow \mathbb{R}$, letting $\mathbb{A}_{m,s,a,h}^{k}(f) = \sum_{j = 1}^{n^{m,k}} f(s_{h+1}^{k,m,j})\times$ $\mathbb{I}[(s_h^{k,m,j},a_h^{k,m,j}) = (s,a)]$ as the summation of $f$ on the next states for all the visits to $(s,a,h)$ for agent $m$ in round $k$. When there is no ambiguity, we will use the simplified notation $\mathbb{A}_m^{k}(f) = \mathbb{A}_{m,s,a,h}^{k}(f)$. Then,  $\mu_{h,\nref}^{m,k}(s,a) = \mathbb{A}_{m}^{k}(V_{h+1}^{\nref,k})$, $\mu_{h,\adv}^{m,k}(s,a)= \mathbb{A}_{m}^{k}(V_{h+1}^{\adv,k}),$ $\mu_{h,\textnormal{val}}^{m,k}(s,a) = \mathbb{A}_{m}^{k}(V_{h+1}^{k})$, $\sigma_{h,\nref}^{m,k}(s,a) = \mathbb{A}_{m}^{k}([V_{h+1}^{\nref,k}]^2)$ and $\sigma_{h,\adv}^{m,k}(s,a) = \mathbb{A}_{m}^{k}([V_{h+1}^{\adv,k}]^2)$. These quantities correspond to local aggregations of different types of value functions and can be adaptively calculated when collecting the trajectories as shown in \Cref{FedQ-Advantage_agent}.

\textbf{Step 3. Stage renewal.} After the exploration in round $k$, agents share local quantities in \Cref{local_quantities} on all $(s,a,h)$ such that $a = \pi_h^k(s)$ to the central server. Then it finds existing visits in stage $t_h^k$ as 
\begin{equation}\label{update_hat_nhk}
    \hat{n}_h^{k+1}(s,a) =\tilde{n}_h^{k}(s,a) + \sum_{m = 1}^M n_h^{m,k}(s,a),\forall (s,a,h),
\end{equation}
 and renew the stages for triples that are sufficiently visited:  $\forall (s,a,h),$
\begin{equation}\label{eq_stage_update}
    t_h^{k+1}(s,a) = t_h^{k}(s,a) + 1\iff\ \hat{n}_h^{k+1}(s,a)\geq \mathbb{I}[n_h^k(s,a)=0]MH+(1+1/H)n_h^k(s,a).
\end{equation}
In \Cref{eq_stage_update}, when $n_h^k = 0$, i.e., $t_h^k = 1$, the state renewal threshold is $MH$. Else, it is $(1+1/H)n_h^k$. With \Cref{eq_stage_update}, the central can determine the stage renewal indicator $\mathbb{I}_h^{\textnormal{ren},k+1}$ and the counts $N_h^{k+1},n_h^{k+1},\tilde{n}_h^{k+1}$ as shown in lines 13 and 18 in \Cref{FedQ_Advantage_server}.

\textbf{Step 4. Updates of estimated value functions and policies.} According to the stage renewal, the central server updates the global values $\mu_h^{\nref,k},\sigma_h^{\nref,k} ,\mu_h^{\adv,k} ,\sigma_h^{\adv,k} ,\mu_h^{\textnormal{val},k}$ at all $(s,a,h)$ as follows:
\begin{equation}\label{update_global_estimations_reference}
    (\mu,\sigma)_h^{\nref,k+1} = (\mu,\sigma)_h^{\nref,k} + \sum_m (\mu,\sigma)_{h,\nref}^{m,k},
\end{equation}
\begin{equation}\label{update_global_estimations_advantage}(\mu^{\adv},\mu^{\textnormal{val}},\sigma^{\adv})_h^{k+1} = (1-\mathbb{I}_h^{\textnormal{ren},k})(\mu^{\adv},\mu^{\textnormal{val}},\sigma^{\adv})_h^{k} + \sum_m (\mu_{h,\adv},\mu_{h,\textnormal{val}},\sigma_{h,\adv})^{m,k}. 
\end{equation}
\Cref{update_global_estimations_reference} implies that $(\mu,\sigma)_h^{\nref,k+1}(s,a)$ gives the sum of the estimated reference functions and squared reference functions at the next states for all agents and all visits to $(s,a,h)$ up to the end of round $k$.
$(\mu^{\adv},\mu^{\textnormal{val}},\sigma^{\adv})_h^{k+1}(s,a)$ gives the sum of the estimated advantage functions, value functions, and squared advantage functions at the next states for all agents and all visits to $(s,a,h)$ in stage $t_h^k$ and up to the end of round $k$. Here, $(1-\mathbb{I}_h^{\textnormal{ren},k})$ clears the historical cumulation if $k$ and $k-1$ belong to different stages.

Next, the central server updates the estimated $Q$-function for all $(s,a,h)$ triples with a stage renewal while keeping others unchanged:
\begin{equation}\label{eq_q_update}
    Q_h^{k+1} = \min\{Q_h^{k+1,1}, Q_h^{k+1,2}, Q_h^k\}\mathbb{I}[t_h^{k+1} > t_h^{k}] + Q_h^k\mathbb{I}[t_h^{k+1} = t_h^{k}],\forall (s,a,h),
\end{equation}
Here, $Q_h^{k+1,1},Q_h^{k+1,2}$ represents the Hoeffding-type used in \cite{zheng2023federated}, and the reference-advantage type update used in \cite{zhang2020almost}, respectively:
\begin{equation}\label{update_q_hoeffding}
Q_h^{k+1,1}(s,a) =r_h(s,a)+\mu_h^{\textnormal{val}, k+1}/n_h^{k+1}+b_h^{k+1,1}(s,a), 
\end{equation}
\begin{equation}\label{update_q_decomposition}
    Q_h^{k+1,2}(s,a) = r_h(s,a)+\mu_h^{\nref,k+1}/N_h^{k+1}+\mu_h^{\adv,k+1}/n_h^{k+1}+b_h^{k+1,2}(s,a).
\end{equation}
In these updates where stage renewal happens, $N_h^{k+1},n_h^{k+1}$ count all historical visits and visits in stage $t_h^k$, respectively. Thus, $\mu_h^{\textnormal{val}, k+1}/n_h^{k+1}$ is the stage-wise mean of the estimated value function, $\mu_h^{\adv,k+1}/n_h^{k+1}$ is the stage-wise mean of the estimated advantage function, and $\mu_h^{\nref,k+1}/N_h^{k+1}$ gives the all-history estimated mean of reference function.
$b_h^{k+1,1},b_h^{k+1,2}$ are upper confidence bounds (UCB) that dominate the variances in the above empirical mean estimations. Their expressions are provided in line 15 of \Cref{FedQ_Advantage_server}.

Next, the central server updates the estimated $V$-function and the policy as follows:
\begin{equation}\label{v_pi-update}
  V_h^{k+1}(s) = \max _{a' \in \mathcal{A}} Q_h^{k+1}(s, a'), \pi_{h}^{k+1}(s)\in \argmax_{a' \in \mathcal{A}} Q_h^{k+1}\left(s, a'\right), \forall (s,h)\in \mathcal{S}\times [H].  
\end{equation}

\textbf{Step 5. Updates of the reference function.} With a constant $N_0\in\mathbb{R}_+$,  the central server conducts
\begin{equation}\label{eq_update_ref}
    V_h^{\nref,k+1}(s) = V_h^{\nref,1}(s)\mathbb{I}[k < k_{s,h}] + V_h^{k_{s,h}+1}(s)\mathbb{I}[k \geq k_{s,h}],\forall (s,h)\in\mathcal{S}\times [H].
\end{equation}
Here, $k_{s,h}  = \inf\{k\in\mathbb{N}_+:\sum_{a'\in\mathcal{A}} N_h^{k+1}(s,a')\geq N_0\}$. \Cref{eq_update_ref} means that at the end of round $k$,  for all $(s,h)$ such that the stage for $(s,\pi_h^k(s),h)$ is renewed, we will update the reference function at $(s,h)$ based on the updated value function $V_h^{k+1}$ if round $k$ is the first round such that the global visiting number to $(s,h)$ across all complete stages reaches $N_0$. ``First round" indicates that the reference update on each $(s,h)$ happens at most once during the whole learning process with $k = 1,2\ldots$, and the reference function on $(s,h)$ will be settled after its update. This design matches the single-agent algorithms in \cite{zhang2020almost} and \cite{li2021breaking}.

\begin{algorithm}[!ht]
\caption{FedQ-Advantage (Central Server)}
\label{FedQ_Advantage_server}
\begin{algorithmic}[1]
    \STATE {\bf Input and Initialization:} $T_0,N_0\in\mathbb{N}_+,p\in (0,1)$. Functions $Q_h^1 = V_h^1= V_h^{\nref,1} = H$, function $\mathbb{I}_h^{\textnormal{ren},1} = 1$, functions $\mathbb{I}_h^{\nref,1} = N_h^1 = n_h^1 = \tilde{n}_h^1 = \mu_h^{\nref,1} = \mu_h^{\adv,1} = \mu_h^{\textnormal{val},1} = \sigma_h^{\nref,1} = \sigma_h^{\adv,1} = 0,\forall (s,a,h)$. Arbitrary deterministic policy $\pi^1$. $n_{step} = k=0$. $u_{syn}\in \{\textnormal{TRUE},\textnormal{FALSE}\}$.
   \WHILE{$n_{step}<T_0$}
   \STATE \% Step 1. Coordinated exploration for agents.
   \STATE Broadcast $\pi^k$ and $(V_h^k,V_h^{\nref,k},n_h^k,\tilde{n}_h^k), \forall (s,a,h)$ with $a = \pi_h^k(s)$ to all clients.

   \% Step 2. Event-triggered termination of exploration.
   \IF {$u_{syn} = \textnormal{TRUE}$}
   \STATE Wait until receiving an abortion signal and send the signal to all agents.   
   \ELSE
   \STATE Wait until receiving abortion signals from all agents.
   \ENDIF 
   \STATE 
   Receive $n_h^{m,k}$ and $\{\mu_{h,\nref}^{m,k},\mu_{h,\adv}^{m,k},\mu_{h,\textnormal{val}}^{m,k}\},\{\sigma_{h,\nref}^{m,k},\sigma_{h,\adv}^{m,k}\},\forall (s,\pi_h^k(s),h)$ from agents.
   \STATE Calculate $\hat{n}_h^{k+1}$, $(\mu,\sigma)_h^{\nref,k+1}$, $(\mu^{\adv},\mu^{\textnormal{val}},\sigma^{\adv})_h^{k+1}$ via \Cref{update_hat_nhk,update_global_estimations_reference,update_global_estimations_advantage}, $\forall (s,a,h)$. 

   \% Step 3. Stage renewal.
   
    \FOR{$\forall (s,a,h)$} 
    \IF{$\hat{n}_h^{k+1}(s,a) \geq \mathbb{I}[n_h^k(s,a)=0]MH+(1+1/H)n_h^k(s,a)$,}
    \STATE (Stage renewal) $\mathbb{I}_h^{\textnormal{ren},k+1} = 1$, $n_h^{k+1} = \hat{n}_h^{k+1},\tilde{n}_h^{k+1} = 0$, $N_h^{k+1} = N_h^k(s,a) + n_h^{k+1}$.
    
    \% Step 4. Updates of estimated value functions and policies.
    \STATE Update $Q_h^{k+1}(s,a)$ based on \Cref{eq_q_update,update_q_hoeffding,update_q_decomposition}. Here,  $b_h^{k+1,1}(s,a) = \sqrt{2H^2\iota/n_h^{k+1}}$. $b_h^{k+1,2}(s,a)= 2\sqrt{\hat{\mbox{var}}_h^{\textnormal{ref},k+1}/N_h^{k+1}} + 2\sqrt{\hat{\mbox{var}}_h^{\textnormal{adv},k+1}/n_h^{k+1}} + 10H((\iota/N_h^{k+1})^{3/4}+(\iota/n_h^{k+1})^{3/4} + \iota/N_h^{k+1} + \iota/n_h^{k+1})$, $\iota = \log (2/p)$, $\hat{\mbox{var}}_h^{\textnormal{ref},k+1} = \sigma_h^{\nref,k+1}/N_h^{k+1} - (\mu_h^{\nref,k+1}/N_h^{k+1})^2$, $\hat{\mbox{var}}_h^{\textnormal{adv},k+1} = \sigma_h^{\adv,k+1}/n_h^{k+1} - (\mu_h^{\adv,k+1}/n_h^{k+1})^2$.
    \ELSE 
    \STATE (Stage unchanged)  $Q_h^{k+1} = Q_h^{k}$, $\mathbb{I}_h^{\textnormal{ren},k+1} = 0$, $n_h^{k+1}= n_h^k,\tilde{n}_h^{k+1} = \hat{n}_h^{k+1}$, $N_h^{k+1} = N_h^{k}$.
    \ENDIF
    \ENDFOR
   \STATE Find $V_h^{k+1},\pi_h^{k+1}$ from \Cref{v_pi-update}, $\mathbb{I}_h^{\nref,k+1} = \mathbb{I}[\sum_{a'\in\mathcal{A}} N_h^{k+1}(s,a')\geq N_0],\forall (s,h)$.

   \% Step 5. Updates of the reference function.
   \STATE $V_h^{\nref,k+1} = V_h^{\nref,k}\left(1-\mathbb{I}_h^{\nref,k+1}(1-\mathbb{I}_h^{\nref,k})\right) + V_h^{k+1}\mathbb{I}_h^{\nref,k+1}(1-\mathbb{I}_h^{\nref,k}),\forall (s,h)$.
   \STATE $n_{step} \pluseq \sum_{m,s,a,h} n_h^{m,k},k\pluseq 1$.
   \ENDWHILE
\end{algorithmic}
\end{algorithm}
Now we are ready to provide FedQ-Advantage in Algorithms \ref{FedQ_Advantage_server} and \ref{FedQ-Advantage_agent} for the behaviors of the central server and the agents.
\begin{algorithm}[!ht]
\caption{FedQ-Advantage (Agent $m$ in round $k$)}
\label{FedQ-Advantage_agent}
\begin{algorithmic}[1]
   \STATE Receive $\pi^k$ and $(V_h^k,V_h^{\nref,k},n_h^k,\tilde{n}_h^k), \forall (s,a,h)$ with $a = \pi_h^k(s)$ from the central server. 
    \STATE Initialization: functions $n_h^{m},\mu_{h,\nref}^{m}, \mu_{h,\adv}^{m},\mu_{h,\textnormal{val}}^{m},\sigma_{h,\nref}^{m},\sigma_{h,\adv}^{m}\leftarrow 0,\forall (s,\pi_h^k(s),h)$.
    \WHILE{no abortion signal sent or from the central server}
      \WHILE{$n_h^{m}(s,a) < c_h^k(s,a), \forall (s,a,h)$ with $a = \pi_h^k(s)$} 
   \STATE Collect a new trajectory $\{(s_h,a_h,r_h)\}_{h=1}^H$ with $a_h = \pi_h^k(s_h)$.
   \STATE Update local incremental quantities: $(n_h^{m} ,\mu_{h,\nref}^{m} , \mu_{h,\adv}^{m},\mu_{h,\textnormal{val}}^{m},\sigma_{h,\nref}^{m},\sigma_{h,\adv}^{m})(s_h,a_h)$
   \STATE $\pluseq (1, V_{h+1}^{\nref,k}, V_{h+1}^{\adv,k}, V_{h+1}^{k}, [V_{h+1}^{\nref,k}]^2, [V_{h+1}^{\adv,k}]^2)(s_{h+1}),\forall h.$
   \ENDWHILE
   \STATE Send an abortion signal to the central server.
   \ENDWHILE
\STATE Functions $(n_h^{m,k},\mu_{h,\nref}^{m,k} ,\mu_{h,\adv}^{m,k} , \mu_{h,\textnormal{val}}^{m,k} , \sigma_{h,\nref}^{m,k}, \sigma_{h,\adv}^{m,k}) \leftarrow (n_h^{m},\mu_{h,\nref}^{m} ,\mu_{h,\adv}^{m} , \mu_{h,\textnormal{val}}^{m} ,\sigma_{h,\nref}^{m}, $ $\sigma_{h,\adv}^{m}), \forall (s,\pi_h^k(s),h)$ and send $\left\{(n_h^{m,k},\mu_{h,\nref}^{m,k} ,\mu_{h,\adv}^{m,k} , \mu_{h,\textnormal{val}}^{m,k} , \sigma_{h,\nref}^{m,k}, \sigma_{h,\adv}^{m,k})\right\}_{s,h}$ to the central server.
\end{algorithmic}
\end{algorithm}
In Algorithm \ref{FedQ_Advantage_server}, $T_0$ limits the total number of steps for all agents. Line 20 in Algorithm \ref{FedQ_Advantage_server} updates the reference function at $(s,h)$ when the visiting number exceeds $N_0$ for the first time and keeps it unchanged for other situations, coinciding with \Cref{eq_update_ref}.

\subsection{Intuition behind the Algorithm Design}\label{subsec:intuition_remark}
\textbf{Exponentially increasing stage sizes: infrequent policy switching.} 
FedQ-Advantage guarantees that $y_{t+1}(s,a,h) = (1+\Theta(1)/H)y_t(s,a,h)$, where $y_t(s,a,h)$ represents the number of visits to $(s,a,h)$ in stage $t$. The exponential increasing rate is controlled by the threshold $c_h^k(s,a)$ in \Cref{trigger} and stage renewal condition in \Cref{eq_stage_update}. By analyzing the two cases of \Cref{trigger}, we can prove that $\hat{n}_h^{k+1}\leq (1+2/H)n_h^k$, which implies that $y_{t+1}\leq (1+2/H)y_t$. \Cref{eq_stage_update} further implies that $y_{t+1}\geq (1+1/H)y_t$. The details are provided in (d) and (e) of \Cref{algorithm relationship}. Our visits in stages satisfy a similar exponential increasing pattern as $y_{t+1} = \lfloor (1+1/H)y_t\rfloor$ in \cite{zhang2020almost}, and FedQ-Advantage switches policies infrequently since estimated $Q-$functions and the policies are only updated after stage renewals.

\textbf{Reference-advantage decompositions: the key to the almost optimal regret.} $Q$-learning algorithms \citep{jin2018q,bai2019provably,zhang2020almost,li2021breaking,zheng2023federated} update the estimated $Q$-function in the following form: $Q_h(s,a) \leftarrow r_h(s,a) + \mbox{EST}(\mathbb{P}_{s,a,h} V_{h+1}^\star) + b.$ Here, $b>0$ is the upper confidence bound (UCB) that promotes exploration, and $\mbox{EST}(\cdot)$ represents the empirical estimation, which takes the form of a weighted sum of the historically estimated value functions for the next states following the visits to $(s,a,h).$ This update is motivated by the Bellman optimality equation. The error $\mbox{EST}(\mathbb{P}_{s,a,h} V_{h+1}^\star) - \mathbb{P}_{s,a,h} V_{h+1}^\star$ can be decomposed into the variance from the random transitions to the next states and the bias in the estimated value functions. To handle the bias that is more severe in the early visits, \cite{jin2018q,bai2019provably,zheng2023federated} required that the weights concentrate on the most recent $\Theta(1/H)$ proportion of visits like \Cref{update_q_hoeffding} that only use visits in the current stage, which causes sample inefficiency and suboptimal regret.

To address this issue, FedQ-Advantage uses the reference-advantage decomposition adopted by \cite{zhang2020almost,li2021breaking}. \Cref{update_q_decomposition} in Step 4 represents the decomposition. We decompose the estimation of $\mathbb{P}_{s,a,h} V_{h+1}^\star$ into the reference part $\mu_h^{\nref,k+1}/N_h^{k+1}$ and the advantage part $\mu_h^{\adv,k+1}/n_h^{k+1}$. For the advantage part, we use stage-wise mean to eliminate the large biases in early value estimations. For the reference part, since the reference function will settle after $N_0 = \tilde{O}(1)$ visits as shown in Step 5, we can neglect the bias and use the mean of all historical visits. This design reduces the error in the empirical estimation by improving the sample efficiency in the reference part and restricting the error ranges in the advantage part. The reference-advantage decomposition, together with the exponentially increasing rate of stage sizes, is the key to our improved regret compared to \cite{zheng2023federated} and our linear regret speedup compared to the almost optimal regret given in \cite{zhang2020almost}. 

\textbf{Heterogeneous event-triggered communication: the key to our improved communication cost.} FedQ-Advantage uses $c_h^k$ given in \Cref{trigger} to limit the number of new visits for agent $m$ in round $k$. While the first case in \Cref{trigger} is similar to the homogeneous condition in \cite{zheng2023federated}, we design the second case to allow more new visits when the number of existing visits in the current stage is small. Specifically, when $n_h^k\geq MH$ and  $\tilde{n}_{h}^k\leq (1-1/H)n_h^k$, $c_h^k = \left\lceil (n_h^k-\tilde{n}_{h}^k)/M \right\rceil\geq \left\lfloor n_h^k/(MH)\right\rfloor$. Thus, FedQ-Advantage allows more visits in the early rounds of each stage compared to \cite{zheng2023federated} and reduces the number of communication rounds, which is key to our improved communication cost shown in \Cref{table:comp}.

\textbf{Optional forced synchronization: accommodating heterogeneous exploration speeds.} \Cref{eq_local_visit_bound,ineq_usyn_true,ineq_usyn_false} highlight the effect of the optional forced synchronization used in Step 2. \Cref{eq_local_visit_bound} shows a common limitation of new visits, which is sufficient for our linear regret speedup compared to the single-agent algorithm in \cite{zhang2020almost}. Next, we discuss the robustness and trade-offs of optional forced synchronization when under heterogeneous exploration speeds of agents. 

When optional forced synchronization is enabled (i.e., $u_{syn} = \textnormal{TRUE}$), exploration and communication occur as soon as \textbf{one} agent reaches the threshold $c_h^k(s,a)$. This allows faster agents to avoid waiting for slower ones, minimizing waiting time. However, \Cref{ineq_usyn_true} guarantees sufficient exploration by only one agent, resulting in varied episode counts across agents. This configuration is suitable for tasks sensitive to waiting time.

When optional forced synchronization is disabled  (i.e., $u_{syn} = \textnormal{FALSE}$), communication occurs only after \textbf{all} agents meet the threshold $c_h^k(s,a)$. \Cref{ineq_usyn_false} ensures sufficient exploration by all agents, with episode counts being roughly balanced. This allows for more extensive exploration within a round, reducing communication costs but potentially increasing waiting time for faster agents.

\section{Performance guarantees}\label{sec:performance}
	Next, we provide regret upper bound for FedQ-Advantage as follows.
	\begin{theorem}[Regret of FedQ-Advantage]\label{thm_regret_advantage} Let $\iota = \log (2/p)$ with $p\in(0,1)$ and $N_0 = 5184\frac{SAH^5\iota}{\beta^2} + 16\frac{MSAH^3}{\beta}$ with $\beta\in (0,H]$. For Algorithms \ref{FedQ_Advantage_server} and \ref{FedQ-Advantage_agent}, with probability at least $1-(4SAT_1^5+SAHT_1^4+5SAT_1^2/H+5SAT_1+5)p$, we have
 $$\mbox{Regret}(T)\leq \tilde{O}\left((1+\beta)\sqrt{MSAH^2T} + M\mbox{poly}(HSA,1/\beta)\right).$$
 Here, $K$ is the total number of rounds, $T = H\sum_{k = 1}^K n^k$ is the total number of steps for each agent, $T_1 = (2+\frac{2}{H})T_0 + MSAH(H+1)$, and $\tilde{O}$ hides logarithmic multipliers on $T_0,M,H,S,A,1/p$ and $\mbox{poly}$ represents some polynomial. This result does not depend on the value of $u_{syn}$. See \Cref{regret_upper_advantage} in Appendix \ref{sec:proof_regret} for the complete upper bound.
 \end{theorem}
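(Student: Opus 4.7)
The plan is to adapt the single-agent reference-advantage regret analysis of \cite{zhang2020almost} to the federated event-triggered setting, with the bulk of the new work going into the non-martingale concentration forced by synchronization. The argument rests on three pieces: (i) optimism, $Q_h^k(s,a)\geq Q_h^\star(s,a)$ for all $(s,a,h,k)$ with high probability; (ii) reference stability, $\|V_h^{\nref,k}-V_h^\star\|_\infty \leq \beta$ once the aggregate visit count at $(s,h)$ has exceeded $N_0$; and (iii) a per-step recursion that bounds $\delta_h^{k,m,j}:= V_h^k(s_h^{k,m,j}) - V_h^{\pi^k}(s_h^{k,m,j})$ in terms of $\delta_{h+1}$ at the observed next state, the UCB bonuses, and empirical-variance fluctuations. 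Combining optimism with the standard identity $\mbox{Regret}(T)\leq \sum_{k,m,j}\delta_1^{k,m,j}$ reduces the theorem to bounding those last two contributions after telescoping in $h$.

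I would establish optimism by backward induction on $h$, using the $Q$-update \eqref{eq_q_update}--\eqref{update_q_decomposition}. For each stage-renewing $(s,a,h,k)$ I need to control the deviation between $\mu_h^{\nref,k+1}/N_h^{k+1}+\mu_h^{\adv,k+1}/n_h^{k+1}$ and $\mathbb{P}_{s,a,h}V_{h+1}^\star$; decomposing the latter as $\mathbb{P}_{s,a,h}V_{h+1}^{\nref,k+1}+\mathbb{P}_{s,a,h}(V_{h+1}^\star-V_{h+1}^{\nref,k+1})$, the Bernstein bonus $b_h^{k+1,2}$ with the empirical variances $\hat{\mbox{var}}_h^{\textnormal{ref},k+1}$ and $\hat{\mbox{var}}_h^{\textnormal{adv},k+1}$ absorbs the stochastic fluctuation, the reference bias is handled by piece (ii), and $b_h^{k+1,1}$ catches the small-sample regime where the Hoeffding update dominates. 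Reference stability itself follows because Step~5 freezes $V_h^{\nref,k}=V_h^{k_{s,h}+1}$ once the aggregate count across actions exceeds $N_0$; the prescribed $N_0=5184SAH^5\iota/\beta^2+16MSAH^3/\beta$ is calibrated so that the same concentration argument applied at round $k_{s,h}+1$ gives $|V_h^{k_{s,h}+1}(s)-V_h^\star(s)|\leq \beta$, with the additive $MSAH^3/\beta$ term absorbing the federated overshoot of up to $M$ extra visits per tuple per round.

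The main obstacle, and where I expect the bulk of the effort, is the concentration of the empirical averages entering the $Q$-update. Under event-triggered synchronization, the normalizer $n_h^{k+1}(s,a)$ is a stopping time that depends on every agent's trajectory within the round, so the per-transition weight $1/n_h^{k+1}$ is not predictable and the usual Azuma/Freedman martingale inequality does not directly apply. Following the non-martingale strategy highlighted in the \emph{Technical novelty} section, I would pair $\mu_h^{\adv,k+1}/n_h^{k+1}$ with the corresponding sum under a predictable (round-by-round) weighting and bound the stage-wise gap using the uniform per-agent cap \eqref{eq_local_visit_bound} together with the stage-growth relation $y_{t+1}\in[(1+1/H)y_t,(1+2/H)y_t]$ derived from \eqref{trigger}--\eqref{eq_stage_update}. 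Because both \eqref{ineq_usyn_true} and \eqref{ineq_usyn_false} only certify $n_h^{m,k}\leq c_h^k$ deterministically and the non-martingale gap estimate depends solely on this uniform cap, the bound is identical regardless of $u_{syn}$, which is precisely why the regret guarantee is insensitive to that flag. Freedman's inequality then applies to the predictable part, yielding empirical-variance bounds that the bonuses dominate.

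With optimism and the concentration bounds in hand, I would telescope the $\delta_h^{k,m,j}$ recursion along each trajectory and sum over $(k,m,j)$. Cauchy--Schwarz combined with the identity $\sum_{s,a,h}N_h^{K+1}(s,a)=MT$ converts the $\sum 1/\sqrt{n_h}$ Hoeffding-bonus contribution into $\tilde{O}(\sqrt{MSAH^2T})$, and the law of total variance applied to $\sum_h\hat{\mbox{var}}_h^{\textnormal{ref},k+1}$ converts the Bernstein piece into the leading $\sqrt{MSAH^2T}$ term with the sharp $H^2$ (rather than $H^3$) dependence. The reference-error contribution is controlled by $\beta$, producing the $(1+\beta)$ prefactor, while the $N_0$ burn-in per $(s,h)$, the polynomial correction from the non-martingale gap, and all remaining lower-order terms collect into $M\mathrm{poly}(HSA,1/\beta)$. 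The failure probability is obtained by a union bound of the Bernstein/Freedman events over $(s,a,h,k)$, yielding the $(4SAT_1^5+\ldots)p$ expression, with $T_1$ serving as a deterministic upper bound on the round index since each round contributes at least one step.
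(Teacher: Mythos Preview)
Your plan matches the paper's structure closely: optimism (the paper's Lemma~\ref{Q}), reference stability via the weighted-sum argument (Lemma~\ref{N_0}), and a recursion on $\zeta_h^k=\sum_{m,j}(V_h^k-V_h^{\pi^k})(s_h^{k,m,j})$ telescoped over $h$ with geometric weights $(1+2/H)^{h-1}$, followed by the law-of-total-variance bound on the reference bonus and the burn-in accounting.

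One point deserves correction. You locate the non-martingale difficulty in ``the concentration of the empirical averages entering the $Q$-update,'' but in fact the events needed for optimism ($\mathcal{E}_1$--$\mathcal{E}_7$ in the paper) are handled by ordinary Azuma/Freedman after conditioning on $n_h^k=n$ and union-bounding over $n\in[T_1/H]$; the normalizer being a stopping time is not an obstacle there. The genuinely non-martingale term appears in the regret recursion itself, specifically in $\epsilon_{h+1}^k$ (the paper's Lemma~\ref{epsilon}): after re-indexing, each visit in stage $t$ of $(s,a,h)$ acquires the weight $y_h^{t+1}(s,a)/y_h^t(s,a)$, whose numerator is the size of the \emph{next} stage and hence not measurable at the time of the visit. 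The fix is the splitting you describe, but applied here: write $y_h^{t+1}/y_h^t=(1+1/H)+\text{remainder}$, treat the constant part as a martingale (event $\mathcal{E}_9$), and control the remainder stage-by-stage via a per-stage Azuma bound (event $\mathcal{E}_{10}$), using that the remainder is $O(1/H)$ for all but the last two stages. The cap \eqref{eq_local_visit_bound} enters only to certify $y_{t+1}/y_t\in[1+1/H,1+2/H]$, which is what makes the argument (and hence the regret bound) independent of $u_{syn}$.
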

 
 Theorem \ref{thm_regret_advantage} indicates that the total regret scales as $\Tilde{O}(\sqrt{H^2MTSA})$ when $T$ is larger than some polynomial of $MHSA$ and $\beta = \Omega(1)$ in $N_0$. This is almost optimal compared to the information bound $\Omega(\sqrt{H^2SAMT})$ and is better than $\tilde{O}(\sqrt{H^3SAMT})$ for algorithms in \cite{zheng2023federated}. When $M=1$, our regret bound becomes $\tilde{O}((1+\beta)\sqrt{H^2TSA})$ when $T$ is large, which is better than $\tilde{O}((1+\beta\sqrt{H})\sqrt{H^2TSA})$ in \cite{zhang2020almost} thanks to our tighter regret analysis. This also means that to reach an almost optimal regret bound, \cite{zhang2020almost} requires $\beta\leq O(1/\sqrt{H})$ and FedQ-Advantage lays a weaker one $\beta\leq \Omega(1)$. When $M>1$, focusing on the dominate terms $\tilde{O}\left((1+\beta)\sqrt{MSAH^2T}\right)$ when $T$ is large, our algorithm achieves a near-linear regret speedup while the overhead term $\tilde{O}(M\mbox{poly}(HSA,1/\beta))$ results from the burn-in cost for using reference-advantage decomposition \citep{zhang2020almost}, and the $\Omega(HM)$ visits collected in the first stage for each $(s,a,h)$, which servers as the multi-agent burn-in cost that is common in federated algorithms (see e.g. \cite{zheng2023federated,woo2023blessing,woo2024federated}).

 	Next, we discuss the improved communication cost compared to \cite{zheng2023federated} as follows.
 \begin{theorem}[Communication rounds of FedQ-Advantage]\label{thm_comm_cost_advantage}
Under Algorithms \ref{FedQ_Advantage_server} and \ref{FedQ-Advantage_agent} with $u_{syn} = \textnormal{TRUE}$,
the number of communication rounds $K$ and the total number of steps $T$ satisfy that
    $$K \leq MSAH^2+4MSAH^2(\log(H)+3)\log\left(\frac{T}{SAH^3}+1\right).$$
    If $u_{syn} = \textnormal{FALSE}$,
the number of communication rounds $K$ and the total number of steps $T$ satisfy
    $$K \leq SAH^2+4SAH^2(\log(H)+3)\log\left(\frac{T}{SAH^3}+1\right).$$
	\end{theorem}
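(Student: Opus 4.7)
The plan is to bound $K$ by counting ``trigger events''. For each round $k$, an agent $m$ and triple $(s,a,h)$ form a \textit{trigger} when $n^{m,k}_h(s,a)=c_h^k(s,a)$; let $\Psi(s,a,h)$ denote the total number of such $(k,m)$ pairs for $(s,a,h)$ and set $R_k(s,a,h):=\#\{m:n^{m,k}_h(s,a)=c_h^k(s,a)\}$. Under $u_{syn}=\textnormal{TRUE}$ every round contains at least one trigger, so $K\le\sum_{s,a,h}\Psi(s,a,h)$. Under $u_{syn}=\textnormal{FALSE}$ each of the $M$ agents must trigger at least once per round, so $\sum_{s,a,h}R_k(s,a,h)\ge M$ and $K\le\sum_{s,a,h}\Psi(s,a,h)/M$. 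This single accounting difference produces the factor-$M$ gap; the remainder of the proof is a uniform bound on $\sum_{s,a,h}\Psi(s,a,h)$ that works for both synchronization modes.

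For the per-triple bound I would work stage by stage. Fix $(s,a,h)$ and a stage $t\ge 2$ with previous-stage size $n_t\ge MH$, and let $g_k:=n_t-\tilde n_h^k(s,a)\ge 0$ denote the ``gap'' at the start of round $k$. In a case-2 round ($g_k\ge n_t/H$) the allowance $c_h^k=\lceil g_k/M\rceil\ge g_k/M$, so each triggering agent adds at least $g_k/M$ new visits, giving the multiplicative contraction $g_{k+1}\le(1-R_k(s,a,h)/M)\,g_k$. Over every case-2 round except the one in which case 2 is exited, $g_k\ge n_t/H$ still holds, hence $\prod_k(1-R_k(s,a,h)/M)\ge 1/H$; taking logs and using $-\log(1-x)\ge x$ yields $\sum R_k(s,a,h)\le M\log H$. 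Counting the final case-2 round separately ($R_k\le M$) gives at most $M(\log H+1)$ case-2 triggers per stage. In case 1 ($g_k<n_t/H$) we have $c_h^k=\lfloor n_t/(MH)\rfloor\ge n_t/(2MH)$ and only $O(n_t/H)$ additional growth of $\tilde n_h^k$ is possible before renewal, so $\sum R_k(s,a,h)=O(M)$. Hence $\Psi_t(s,a,h)=O(M\log H)$ for every stage $t\ge 2$. Stage 1 is special: $n_t=0$ forces $c_h^k=1$ throughout, so every trigger is one new visit and $\Psi_1(s,a,h)$ is bounded by the stage-1 renewal threshold $MH$; summed over $(s,a,h)$ this piece contributes exactly the $MSAH^2$ leading term.

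Finally I would bound the total number of stages. The renewal rule forces $n_{t+1}\ge(1+1/H)n_t$ for $t\ge 2$ together with $n_2\ge MH$, so $T_h(s,a)\le 2+(H+1)\log(N_h^\infty(s,a)/(MH)+1)$ where $N_h^\infty(s,a)$ is the final visit count. Summing over $(s,a,h)$ using $\sum_{s,a,h}N_h^\infty(s,a)\le MT$ and Jensen's inequality on the concave logarithm yields $\sum_{s,a,h}T_h(s,a)=O(SAH^2\log(T/(SAH^3)+1))$. Combining with the per-stage bound gives $\sum_{s,a,h}\Psi(s,a,h)\le MSAH^2+O(MSAH^2(\log H)\log(T/(SAH^3)+1))$, and substituting into $K\le\sum_{s,a,h}\Psi(s,a,h)$ or $K\le\sum_{s,a,h}\Psi(s,a,h)/M$ produces the two stated bounds.

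The main obstacle will be the case-2 contraction: one must handle the exit round carefully (where $g_{k+1}$ may jump past zero if $\tilde n_h^k$ overshoots $n_t$ in a single round), verify the multiplicative inequality despite the ceiling inside $c_h^k$, and stitch case~2 and case~1 together across the threshold $\tilde n_h^k\approx(1-1/H)n_t$ without double-counting the exit round. Matching the explicit constant $4(\log H+3)$ in the theorem further requires careful bookkeeping of the rounding losses in $\lfloor\cdot\rfloor$ and $\lceil\cdot\rceil$, of the stage-1 boundary, and of the constants in the Jensen step.
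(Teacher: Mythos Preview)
Your proposal is correct and follows the paper's approach: count trigger events per triple, split each stage $t\ge 2$ into the two regimes of $c_h^k$ to obtain an $O(M\log H)$ per-stage bound (your gap contraction $g_{k+1}\le(1-R_k/M)\,g_k$ is just the complement of the paper's recursion $S_i\ge\tfrac{M-1}{M}S_{i-1}+\tfrac{n_t}{M}$), and bound the total number of stages via the geometric growth $y_{t+1}\ge(1+1/H)y_t$ together with Jensen. Your decision to count agent--round trigger pairs $(k,m)$ with the multiplier $R_k$, rather than only trigger rounds, is a mild refinement that makes the $u_{syn}=\textnormal{FALSE}$ case more explicit than the paper's one-line appeal, but the argument is otherwise identical.
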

	Theorem \ref{thm_comm_cost_advantage} implies if $u_{syn} = \textnormal{TRUE}$ and $T$ is sufficiently large, $K = O\left(MH^2SA(\log H)\log T\right)$. Since the total number of communicated scalars is $O(MHS)$ in each round, the total communication cost scales in $O(M^2H^3S^2A(\log H)\log T)$. Thanks to the heterogeneous design in $c_h^k(s,a)$, it is better than $O(M^2H^4S^2A\log T)$ for FedQ-Hoeffding and FedQ-Bernstein in \cite{zheng2023federated}. If $u_{syn} = \textnormal{FALSE}$, when $T$ is sufficiently large, $K = O\left(H^2SA(\log H)\log T\right)$, which is independent of $M$. Since the total number of communicated scalars is $O(MHS)$ in each round, the total communication cost scales in $O(MH^3S^2A(\log H)\log T)$.

Our result for $u_{syn} = \textnormal{FALSE}$ also implies a low policy switching cost, which is defined as the times of policy switching. Knowing that the cost of \cite{zhang2020almost} is $O(H^2SA\log (T))$ and $K = O\left(H^2SA(\log H)\log T\right)$ for FedQ-Advantage, our communication round matches the policy switching cost up to a logarithmic factor under restrictions on sharing original trajectories. We also remark \Cref{stage_number} in \Cref{sec:proof_comm} shows that FedQ-Advantage can also reach the same local switching cost as \cite{zhang2020almost}. We refer readers to \cite{zhang2020almost} for more information.

We will provide the complete proofs of Theorems \ref{thm_regret_advantage} and \ref{thm_comm_cost_advantage} in Appendices \ref{sec:proof_regret} and \ref{sec:proof_comm} respectively.

\section{Conclusion}\label{sec:conclusion}
This paper develops the model-free FRL algorithm FedQ-Advantage with provably almost optimal regret and logarithmic communication cost. Specifically, it achieves an almost-optimal regret, reaching the information bound up to a logarithmic factor and near-linear regret speedup compared to its single-agent counterpart when the time horizon is sufficiently large. Our algorithm also improves the logarithmic communication cost in the literature. Technically, our algorithm uses the UCB, reference-advantage decomposition and designs separate mechanisms for synchronization and policy switching, which can find broader applications for other RL problems.

\section*{Acknowledgment}
The work of Z. Zheng, H. Zhang, and L. Xue was supported by the U.S. National Science Foundation under the grants  DMS-1811552, DMS-1953189, and CCF-2007823 and by the U.S. National Institutes of Health under the grant 1R01GM152812.

\bibliography{main.bib,bandits.bib}
\bibliographystyle{iclr2025_conference}

\newpage

\appendix
\textbf{Organization of the appendix.}
In the appendix, we first provide two tables to summarize the notations in Section \ref{notation}. Section \ref{related} provides related works. Section \ref{sec:more_numerical} presents numerical results. Section \ref{sec:basic_facts} provides basic facts of FedQ-Advantage and a lemma of concentration inequalities for regret analysis. Section \ref{sec:proof_regret} presents  the proof of Theorem \ref{thm_regret_advantage} (regret). Section \ref{sec:proof_comm} presents the proof of Theorem \ref{thm_comm_cost_advantage} (communication cost).


\section{Notation tables}\label{notation}
In this section, we provide two notation tables to enhance the readability of the paper. The notations are categorized into two groups: one group consists of global variables utilized for central server aggregation, while the other group consists of local variables employed for agent training.
\begin{table}[ht!]
\caption{Global Variables}
\renewcommand{\arraystretch}{1.3}
\begin{tabular}{|>{\centering\arraybackslash}m{1.9cm}|m{11.2cm}|}
\hline
Variable&  \multicolumn{1}{c|}{Definition} \\ \hline
$V_h^\pi$ & the state value function at step $h$ under policy $\pi$ \\ \hline
$Q_h^\pi$ & the state-action value function at step $h$ under policy $\pi$\\ \hline
$V_h^\star$ & the state value function at step $h$ under the optimal policy $\pi^\star$ \\ \hline
$Q_h^\star$ & the state-action value function at step $h$ under the optimal policy $\pi^\star$\\ \hline
$\mathbb { P }_{s,a,h}f $ & $\mathbb{E}_{s_{h+1}\sim \mathbb{P}_h(\cdot|s,a)}(f(s_{h+1})|s_h=s,a_h=a)$ \\ \hline
$\mathbbm { 1 }_{s,a,h}f $ & $f(s)$ \\ \hline
$V_h^{\nref} $ & the reference function at step $h$ \\ \hline
$V_h^{\adv} $ & the advantage function at step $h$ \\ \hline
$V_h^k$ & the estimated state value function of step $h$ at the beginning of the round $k$ \\ \hline
$Q_h^k$ & the estimated state-action value function of step $h$ at the beginning of the round $k$ \\ \hline\
$V_h^{\nref,k}$ & the reference function of step $h$ at the beginning of the round $k$ \\ \hline
$V_h^{\adv,k}$ & the advantage function of step $h$ at the beginning of the round $k$ \\ \hline
$t_h^k(s,a)$ & the stage index of the triple $(s,a,h)$ in the round $k$ \\ \hline
$\mathbb{I}_h^{\textnormal{ren},k}$ & $\mathbb{I}[t_h^k>t_h^{k-1}]$, a stage renewal indicator in the round $k-1$ \\ \hline
$N_h^k(s,a)$ & the total number of visits to $(s,a,h)$ before the stage $t_h^k(s,a)$ \\ \hline
$n_h^k(s,a)$ & the total number of visits to $(s,a,h)$ in the stage $t_h^k(s,a)-1$ \\ \hline
$\tilde{n}_{h}^k(s,a)$ & the number of visits to $(s,a,h)$ in the stage $t_h^k(s,a)$  before the start of round $k$\\ \hline
$\hat{n}_{h}^k(s,a)$ & the number of visits to $(s,a,h)$ in the stage $t_h^{k-1}(s,a)$  before the start of round $k$\\ \hline
$\mu_h^{\nref,k}(s,a)$ &  the sum of the reference function at step $h+1$ with regard to all visits to $(s,a,h)$ before round $k$\\ \hline
$\sigma_h^{\nref,k}(s,a)$ &  the sum of the squared reference function at step $h+1$ with regard to all visits to $(s,a,h)$ before round $k$\\ \hline
$\mu_h^{\adv,k}(s,a)$ &  the sum of the advantage function at step $h+1$ with regard to all visits to $(s,a,h)$ during stage $t_h^{k-1}$ and before round $k$\\ \hline
$\sigma_h^{\adv,k}(s,a)$ &  the sum of the squared advantage function at step $h+1$ with regard to all visits to $(s,a,h)$ during stage $t_h^{k-1}$ and before round $k$\\ \hline
$\mu_h^{\textnormal{val},k}(s,a)$ &  the sum of the estimated state value function at step $h+1$ with regard to all visits to $(s,a,h)$ during stage $t_h^{k-1}$ and before round $k$\\ \hline
$c_h^k(s,a)$ &  the exploration termination constant for triple $(s,a,h)$\\ \hline
$\mathbb{I}_h^{\nref,k}$ & a reference function renewal indicator in the round $k-1$  \\ \hline
\end{tabular}
\end{table}
\begin{table}[ht!]
\caption{Local Variables}
\renewcommand{\arraystretch}{1.3}
\begin{tabular}{|>{\centering\arraybackslash}m{1.9cm}|m{11.2cm}|}
\hline
Variable& \multicolumn{1}{c|}{Definition} \\ \hline
$n_h^{m,k}(s,a)$ &  the total number of visits to $(s,a,h)$ of the agent $m$ in the round $k$\\ \hline
$\mu_{h,\nref}^{m,k}(s,a)$ &  the sum of the reference function at step $h+1$ with regard to all visits to $(s,a,h)$ of the agent $m$ in the round $k$\\ \hline
$\sigma_{h,\nref}^{m,k}(s,a)$ &  the sum of the squared reference function at step $h+1$ with regard to all visits to $(s,a,h)$ of the agent $m$ in the round $k$\\ \hline
$\mu_{h,\adv}^{m,k}(s,a)$ &  the sum of the advantage function at step $h+1$ with regard to all visits to $(s,a,h)$ of the agent $m$ in the round $k$\\ \hline
$\sigma_{h,\adv}^{m,k}(s,a)$ &  the sum of the squared advantage function at step $h+1$ with regard to all visits to $(s,a,h)$ of the agent $m$ in the round $k$\\ \hline
$\mu_{h,\textnormal{val}}^{m,k}(s,a)$ &  the sum of the estimated state value function at step $h+1$ with regard to all visits to $(s,a,h)$ of the agent $m$ in the round $k$\\ \hline
\end{tabular}
\end{table}
First, we introduce some local quantities. For $f:\mathcal{S}\rightarrow \mathbb{R}$, letting $\mathbb{A}_{m,s,a,h}^{k}(f) = \sum_{j = 1}^{n^{m,k}} f(s_{h+1}^{k,m,j})\times$ $\mathbb{I}[(s,a)_h^{k,m,j} = (s,a)]$ as the summation of $f$ on the next states for all the visits to $(s,a,h)$ in round $k$ for agent $m$. When there is no ambiguity, we will use the simplified notation $\mathbb{A}_m^{k}(f) = \mathbb{A}_{m,s,a,h}^{k}(f)$. Then, we let $n_h^{m,k}(s,a) = \mathbb{A}_{m}^{k}(1)$, $\mu_{h,\nref}^{m,k}(s,a) = \mathbb{A}_{m}^{k}(V_{h+1}^{\nref,k})$, $\mu_{h,\adv}^{m,k}(s,a)= \mathbb{A}_{m}^{k}(V_{h+1}^{\adv,k}),$ $\mu_{h,\textnormal{val}}^{m,k}(s,a) = \mathbb{A}_{m}^{k}(V_{h+1}^{k})$, $\sigma_{h,\nref}^{m,k}(s,a) = \mathbb{A}_{m}^{k}([V_{h+1}^{\nref,k}]^2)$ and $\sigma_{h,\adv}^{m,k}(s,a) = \mathbb{A}_{m}^{k}([V_{h+1}^{\adv,k}]^2)$. For these functions of $(s,a)$, $n_h^{m,k}$ is the local count of visits for agent $m$ in round $k$, and the remaining ones are local summations related to the reference function, the estimated advantage functions, and the estimated value functions for visits.

Accordingly, we define some global quantities. First, we focus on visiting counts. We let $N_h^k(s,a) = \sum_{k':t_h^{k'}<t_h^k}\sum_m n_h^{m,k'}$ be the total number of visits to $(s,a,h)$ up to but not including stage $t_h^k$, $n_h^{k}(s,a) = \sum_{k':t_h^{k'} = t_h^k-1}\sum_{m} n_h^{m,k'}$ be the number of visits to $(s,a,h)$ in the stage $t_h^k-1$. Here, $n_h^k = 0$ if $t_h^k = 1$. We also let $\tilde{n}_{h}^k(s,a) = \sum_{k':k'<k,t_h^{k'} = t_h^{k}}\sum_m n_h^{m,k'}$ and $\hat{n}_{h}^k(s,a) = \sum_{k':k'<k,t_h^{k'} = t_h^{k-1}}\sum_m n_h^{m,k'}$ be the number of visits to $(s,a,h)$ in the stage $t_h^k$ or $t_h^{k-1}$ before the start of round $k$. Next, we provide quantities of summations. Let $\mu_h^{\nref,k}(s,a) = \sum_{k':k'<k}\sum_m \mu_{h,\nref}^{m,k'}$, $\sigma_h^{\nref,k}(s,a) = \sum_{k':k'<k}\sum_m \sigma_{h,\nref}^{m,k'}$, $\mu_h^{\adv,k}(s,a) = \sum_{k':k'<k,t_h^{k'} = t_h^{k-1}}\sum_m \mu_{h,\adv}^{m,k'}$, $\sigma_h^{\adv,k}(s,a) = \sum_{k':k'<k,t_h^{k'} = t_h^{k-1}}\sum_m \sigma_{h,\adv}^{m,k'}$ and $\mu_h^{\textnormal{val},k}(s,a) = \sum_{k':k'<k,t_h^{k'} = t_h^{k-1}}\sum_m \mu_{h,\textnormal{val}}^{m,k'}$. Here, $\mu_h^{\nref,k}$ and $\sigma_h^{\nref,k}$ represent the sum of the reference function or squared reference function at step $h+1$ with regard to all visits of $(s,a,h)$ before round $k$, and $\mu_h^{\adv,k},\sigma_h^{\adv,k},\mu_h^{\textnormal{val},k}$
 are the sum of the advantage function, squared advantage function, and the estimated value function at step $h+1$ with regard to visits of $(s,a,h)$ during stage $t_h^{k-1}$ and before round $k$. 

\section{Related works}\label{related}
\textbf{Single-agent episodic MDPs.} There are mainly two types of algorithms for reinforcement learning: model-based and model-free learning. Model-based algorithms learn a model from past experience and make decisions based on this model, while model-free algorithms only maintain a group of value functions and take the induced optimal actions. Due to these differences, model-free algorithms are usually more space-efficient and time-efficient compared to model-based algorithms. However, model-based algorithms may achieve better learning performance by leveraging the learned model.

Next, we discuss the literature on model-based and model-free algorithms for single-agent episodic MDPs. \cite{auer2008near}, \cite{agrawal2017optimistic}, \cite{azar2017minimax}, \cite{kakade2018variance}, \cite{agarwal2020model}, \cite{dann2019policy}, \cite{zanette2019tighter},\cite{zhang2021reinforcement},\cite{zhou2023sharp} and \cite{zhang2023settling} worked on model-based algorithms. Notably, \cite{zhang2023settling} provided an algorithm that achieves a regret of $\Tilde{O}\left(\min \{\sqrt{SAH^2T},T\}\right)$, which matches the information lower bound. \cite{jin2018q}, \cite{yang2021q}, \cite{zhang2020almost}, \cite{li2021breaking} and \cite{menard2021ucb} work on model-free algorithms. The latter three have introduced algorithms that achieve minimax regret of $\Tilde{O}\left(\sqrt{SAH^2T}\right)$.

\textbf{Variance reduction in RL.} The reference-advantage decomposition used in \cite{zhang2020almost} and \cite{li2021breaking} is a technique of variance reduction that was originally proposed for finite-sum stochastic optimization (see e.g. \cite{gower2020variance,johnson2013accelerating,nguyen2017sarah}). Later on, model-free RL algorithms also used variance reduction to improve the sample efficiency. For example, it was used in learning with generative models \citep{sidford2018near,sidford2023variance,wainwright2019variance}, policy evaluation \citep{du2017stochastic,khamaru2021temporal,wai2019variance,xu2020reanalysis}, offline RL \citep{shi2022pessimistic,yin2021near}, and $Q$-learning \citep{li2020sample,zhang2020almost,li2021breaking,yan2022efficacy}.  


\textbf{RL with low switching cost and batched RL}. Research in RL with low-switching cost aims to minimize the number of policy switches while maintaining comparable regret bounds to fully adaptive counterparts, and it can be applied to federated RL. In batched RL (e.g., \cite{perchet2016batched}, \cite{gao2019batched}), the agent sets the number of batches and the length of each batch upfront, implementing an unchanged policy in a batch and aiming for fewer batches and lower regret. \cite{bai2019provably} first introduced the problem of RL with low-switching cost and proposed a $Q$-learning algorithm with lazy updates, achieving $\Tilde{O}(SAH^3\log T)$ switching costs. This work was advanced by \cite{zhang2020almost}, which improved the regret upper bound and the switching cost. Additionally, \cite{wang2021provably} studied RL under the adaptivity constraint. Recently, \cite{qiao2022sample} proposed a model-based algorithm with $\Tilde{O}(\log \log T)$ switching costs. \cite{zhang2022near} proposed a batched RL algorithm that is well-suited for the federated setting. Both \cite{zheng2024gap} and \cite{zhang2025gap} analyzed low switching cost in the gap-dependent setting.

\textbf{Multi-agent RL (MARL) with event-triggered communications.} We review a few recent works for on-policy MARL with linear function approximations. \cite{dubey2021provably} introduced Coop-LSVI for cooperative MARL. \cite{min2023cooperative} proposed an asynchronous version of LSVI-UCB that originates from \cite{jin2020provably}, matching the same regret bound with improved communication complexity compared to \cite{dubey2021provably}. \cite{hsu2024randomized} developed two algorithms that incorporate randomized exploration, achieving the same regret and communication complexity as \cite{min2023cooperative}. \cite{dubey2021provably,min2023cooperative,hsu2024randomized} employed event-triggered communication conditions based on determinants of certain quantities. Different from our federated algorithm, during the synchronization in \cite{dubey2021provably} and \cite{min2023cooperative}, local agents share original rewards or trajectories with the server. On the other hand, \cite{hsu2024randomized} reduces communication cost by sharing compressed statistics in the non-tabular setting with linear function approximation.

\textbf{Federated and distributed RL}.
Existing literature on federated and distributed RL algorithms highlights various aspects. For value-based algorithms, \cite{guo2015concurrent}, \cite{zheng2023federated}, and \cite{woo2023blessing} focused on  linear speed up. \cite{agarwal2021communication} proposed a parallel RL algorithm with low communication cost. \cite{woo2023blessing} and \cite{woo2024federated} discussed the improved covering power of heterogeneity. \cite{wu2021byzantine} and \cite{chen2023byzantine} worked on robustness. Particularly, \cite{chen2023byzantine} proposed algorithms in both offline and online settings, obtaining near-optimal sample complexities and achieving superior robustness guarantees. In addition, several works have investigated value-based algorithms such as $Q$-learning in different settings, including \cite{beikmohammadi2024compressed}, \cite{jin2022federated}, \cite{khodadadian2022federated}, \cite{fan2023fedhql}, \cite{woo2023blessing}, and \cite{woo2024federated,anwar2021multi,zhao2023federated,yang2023federated,zhang2024finite}.  The convergence of decentralized temporal difference algorithms has been analyzed by \cite{doan2019finite}, \cite{doan2021finite}, \cite{chen2021multi}, \cite{sun2020finite}, \cite{wai2020convergence}, \cite{wang2020decentralized}, \cite{zeng2021finite}, and \cite{liu2023distributed}.

Some other works focus on policy gradient-based algorithms. Communication-efficient policy gradient algorithms have been studied by \cite{fan2021fault} and \cite{chen2021communication}. \cite{lan2023improved} further shows the
simplicity compared to the other RL methods, and a linear speedup has been demonstrated in the synchronous setting. Optimal sample complexity for global optimality in federated RL, even in the presence of adversaries, is studied in \cite{ganesh2024global}. \cite{lan2024asynchronous} propose an algorithm to address the challenge of lagged policies in asynchronous settings.

The convergence of distributed actor-critic algorithms has been analyzed by \cite{shen2023towards} and \cite{chen2022sample}. Federated actor-learner architectures have been explored by \cite{assran2019gossip}, \cite{espeholt2018impala}, and \cite{mnih2016asynchronous}. Distributed inverse reinforcement learning has been examined by \cite{banerjee2021identity, gong2023federated, liu2022distributed, liu2023meta, liu2024learning, liutrajectory}.



\section{Numerical Experiments}\label{sec:more_numerical}
\subsection{Experiments Comparing Candidate Algorithms}\label{sec:numerical}
In this subsection, we conduct experiments\footnote{All the experiments are run on a server with Intel Xeon E5-2650v4 (2.2GHz) and 100 cores. Each replication is limited to a single core and 4GB RAM. The total execution time is less than 2 hours. The code for the numerical experiments is included in the supplementary materials along with the submission. } in a synthetic environment to demonstrate the better regret and communication cost of FedQ-Advantage compared to FedQ-Hoeffding and FedQ-Bernstein proposed by \citep{zheng2023federated}. We follow \cite{zheng2023federated} to use forced synchronization and generate a synthetic environment to evaluate the proposed algorithms on a tabular episodic MDP. We set $H = 10$, $S = 5$, and $A = 5$. The reward $r_h(s, a)$ for each $(s,a,h)$ is generated independently and uniformly at random from $[0,1]$. $\mathbb{P}_h(\cdot \mid s, a)$ is generated on the $S$-dimensional simplex independently and uniformly at random for $(s,a,h)$. Under the given MDP, we set $M = 10$ and generate $10^5$ episodes for each agent, resulting in a total of $10^6$ episodes for all algorithms. For each episode, we randomly choose the initial state uniformly from the $S$ states. In FedQ-Hoeffding and FedQ-Bernstein, we use their hyper-parameter settings based on their publicly available code\footnote{\url{https://openreview.net/attachment?id=fe6ANBxcKM&name=supplementary_material}}. For FedQ-Advantage, we set $\iota = 1$ and $N_0 = 200$. To show error bars, we collect 10 sample paths for all algorithms under the same MDP environment and show the regret and communication cost in \Cref{comparison_image}. For both panels, the solid line represents the median of the 10 sample paths, while the shaded area shows the 10th and 90th percentiles.

\begin{figure}[!ht]
	\centering
\includegraphics[width=0.85\textwidth]{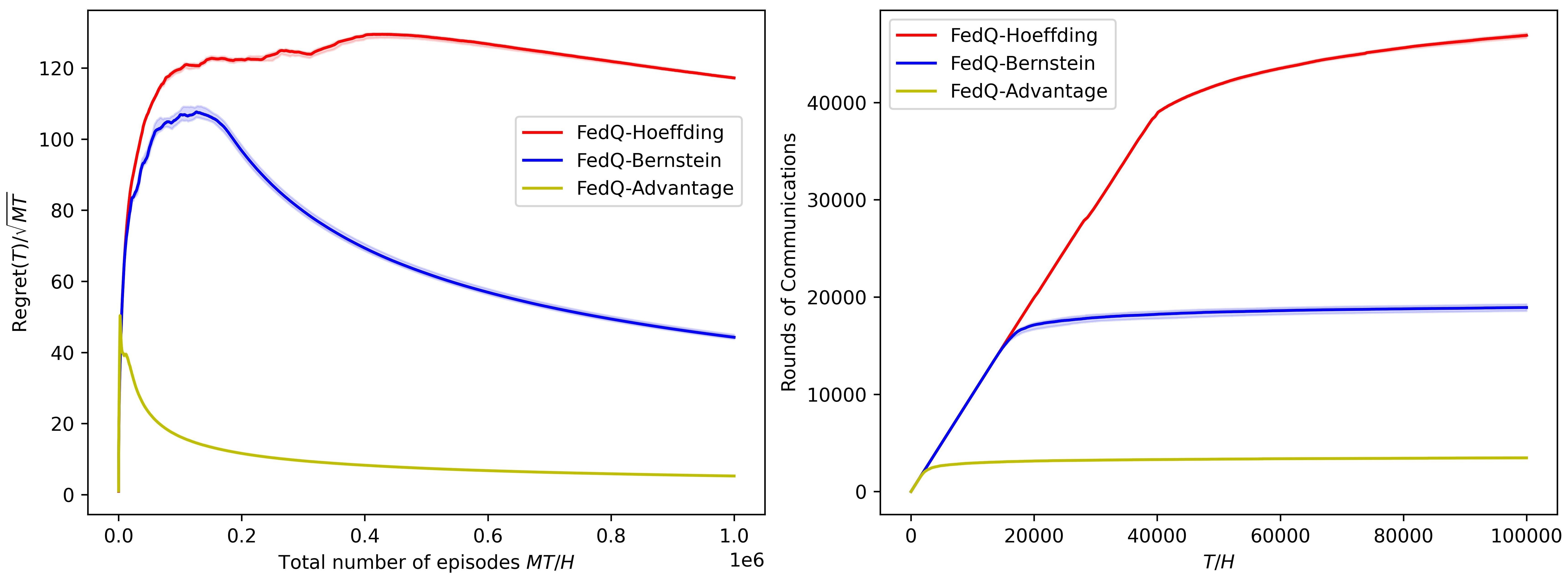}
	\caption{Numerical comparison of regrets and communication costs.}
	\label{comparison_image}
\end{figure}

The left panel of \Cref{comparison_image} plots $\mbox{Regret}(T)/\sqrt{MT}$ versus $MT/H$, the total number of episodes for all agents, showing the lower regret of FedQ-Advantage compared to FedQ-Hoeffding and FedQ-Bernstein. The right panel tracks the number of communication rounds throughout the learning process. All three federated algorithms show a sublinear pattern when $T$ is large, and FedQ-Advantage requires the fewest communication rounds. Since the communication cost for one synchronization is $O(MHS)$ for each of the three algorithms, FedQ-Advantage enjoys the least communication cost. These numerical results are consistent with our theoretical results in Table \ref{table:comp}. We also provide numerical experiments to replicate the setting in \cite{zheng2023federated}, show the performance on different combinations of $H,S,A$, and explore the multi-agent speedup in Appendix \ref{otherhsa}. The conclusions are consistent with those for \Cref{comparison_image}.
\subsection{Experiments for multi-agent speedup.}

In this subsection, we provide experiments on the multi-agent speedup of FedQ-Advantage under the same experimental setting as \Cref{sec:numerical}. Figure \ref{comparison_image_speed} reports $\mbox{R}(T)/\sqrt{T}$ versus $T/H$ based on the 10 sample trajectories for FedQ-Advantage and UCB-A. Here, $R(T) = \mbox{Regret}(T)/M$. UCB-A is our single-agent counterpart from \cite{zhang2020almost}, and we show the experimental results for both $10^5$ episodes for the single-agent experiment and $10^6$ episodes, representing the situation where a single agent generates all episodes for FedQ-Advantage under a high communication cost. When showing $R(T)$ for UCB-A with $10^6$ episodes, we pretend that $M = 10$ and the total number of episodes is $10^5$ so that the three situations are comparable. We find that FedQ-Advantage shows a multi-agent speedup compared to UCB-A with $10^5$ episodes. However, it exhibits larger regret compared to UCB-A with $10^6$ episodes, which results from the multi-agent burn-in cost discussed in \Cref{sec:performance}.

\begin{figure}[!ht]
	\centering
\includegraphics[width=0.8\textwidth]{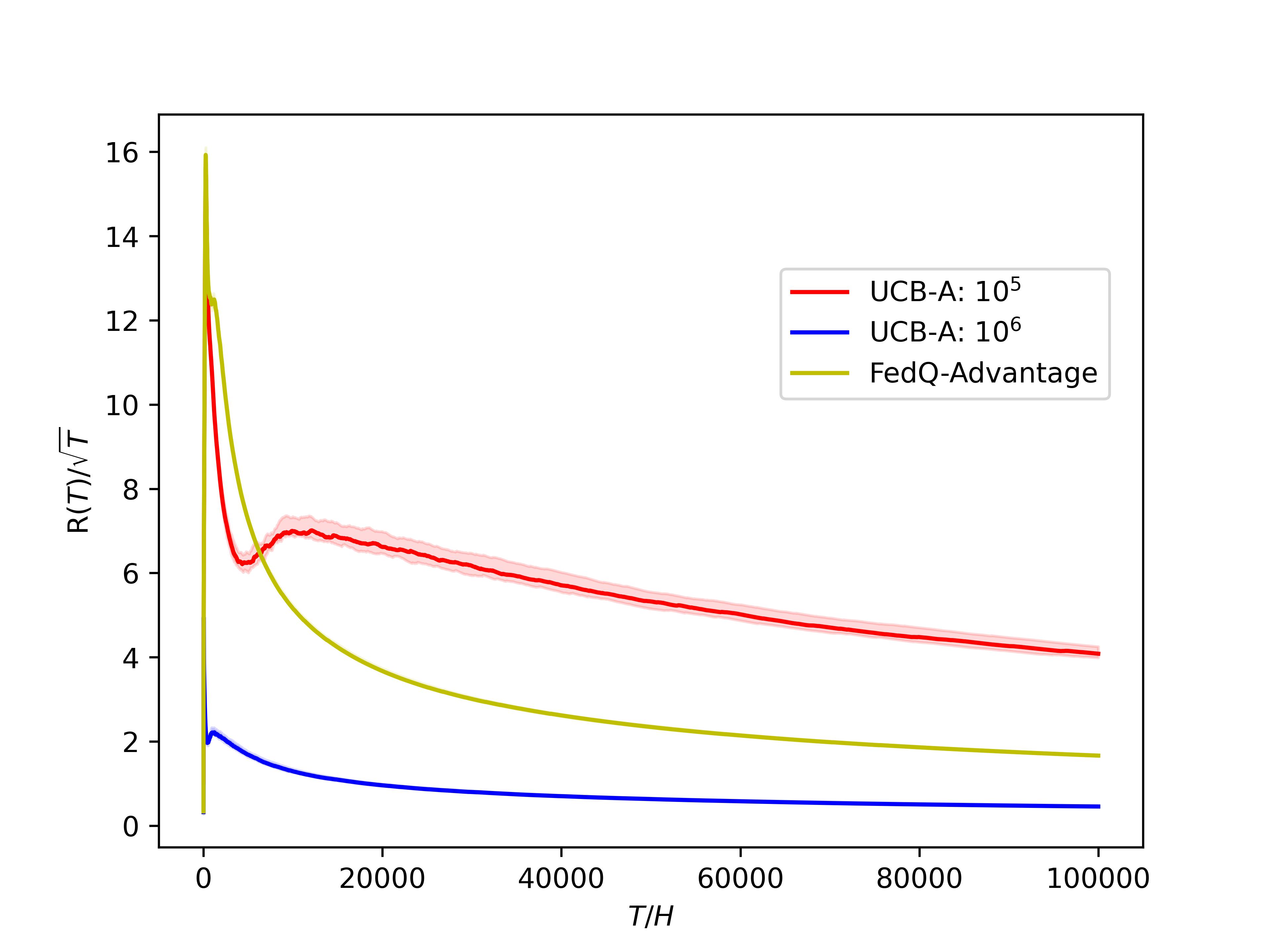}
	\caption{Multi-agent speedup}
\label{comparison_image_speed}
\end{figure}

\subsection{Other Combinations of $H,S,A$}
\label{otherhsa}
\Cref{comparison_image_speed} gives additional numerical experiment under different combinations of $H,S,A$. The top panels show 10 replications on $S=3,A=2,H=5$, which replicates the experiments in \cite{zheng2023federated}. The second one chooses relatively large parameters where $H=20,S=20,A=5$ and shows one replication. The conclusion is the same as that in \Cref{sec:numerical}. 
\begin{figure}[h!]
    \centering
    
    \subfloat[Regret: replication]{
        \includegraphics[width=0.45\textwidth]{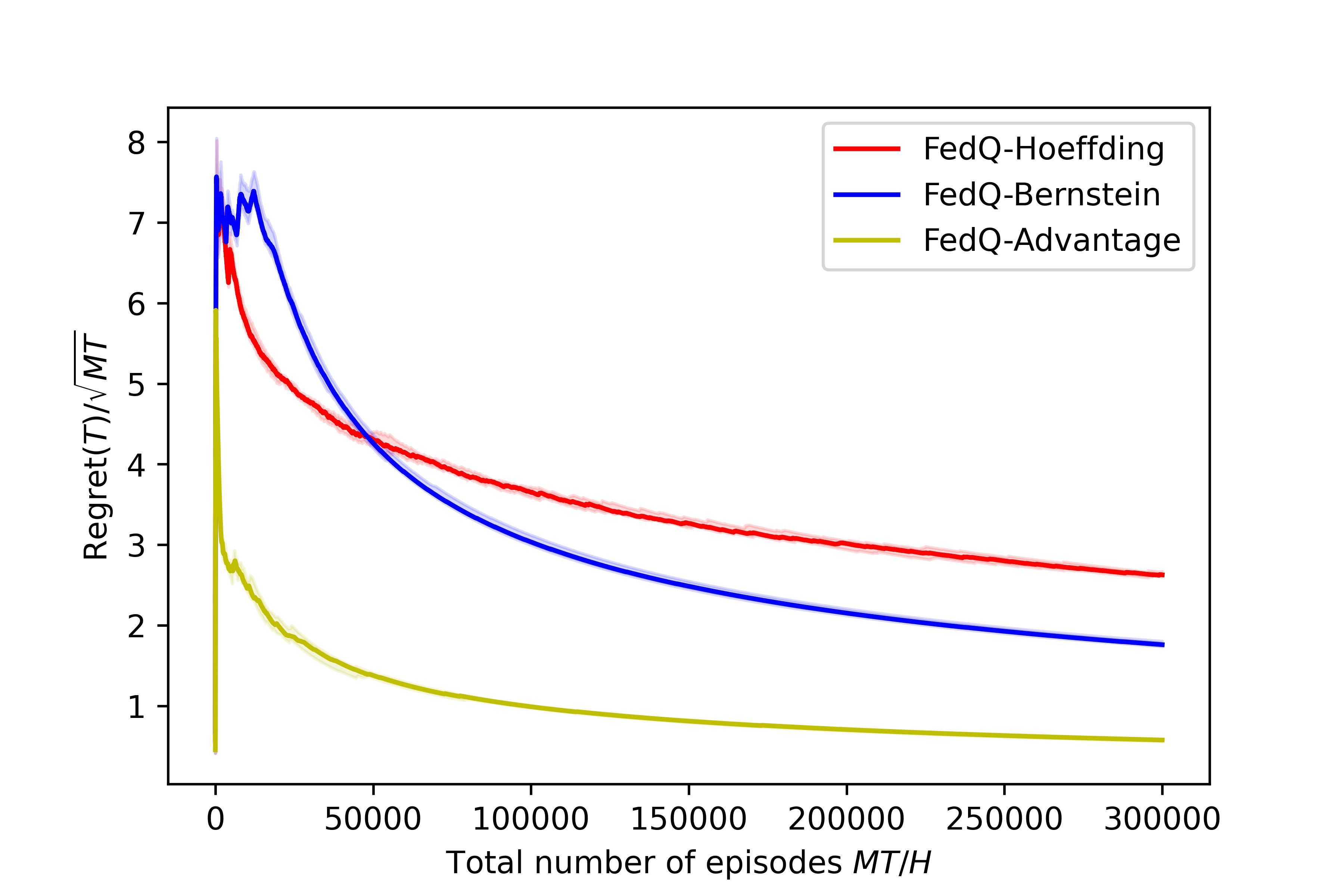}
    }
    \hfill
    \subfloat[Communication cost: replication]{
        \includegraphics[width=0.45\textwidth]{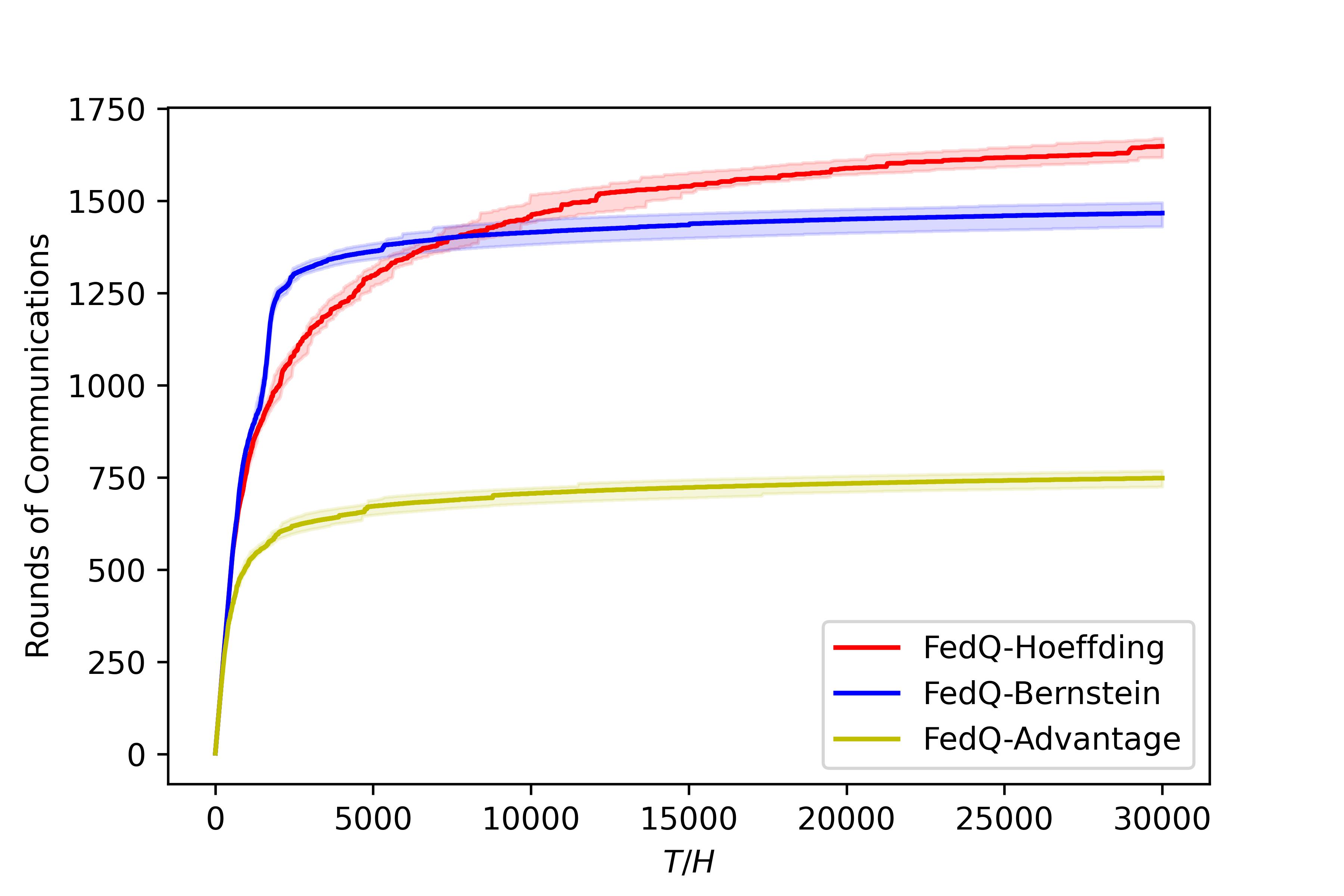}
    }

    \vspace{0.4cm}
    
    \subfloat[Regret: large scale]{
        \includegraphics[width=0.45\textwidth]{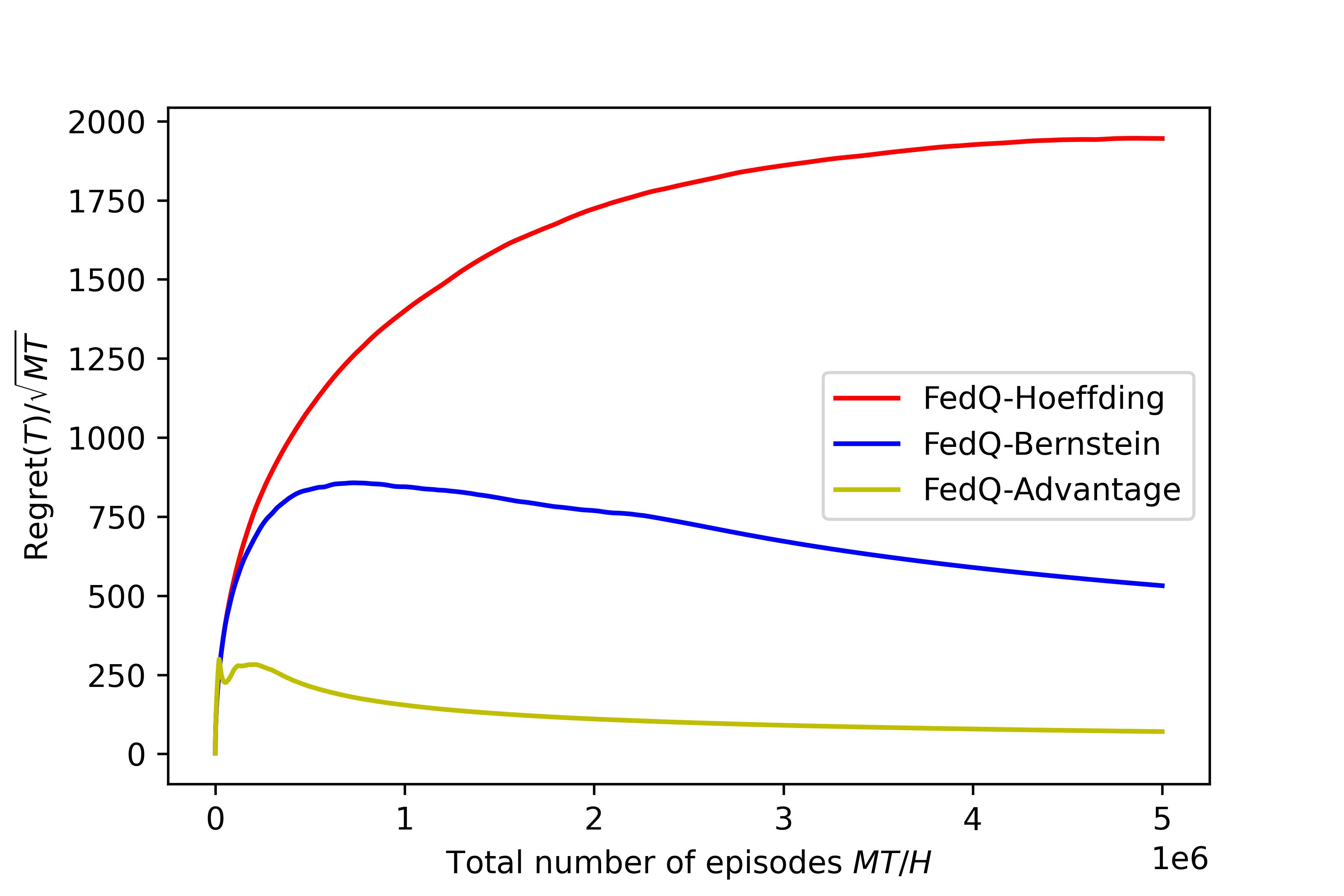}
    }
    \hfill
    \subfloat[Communication cost: large scale]{
        \includegraphics[width=0.45\textwidth]{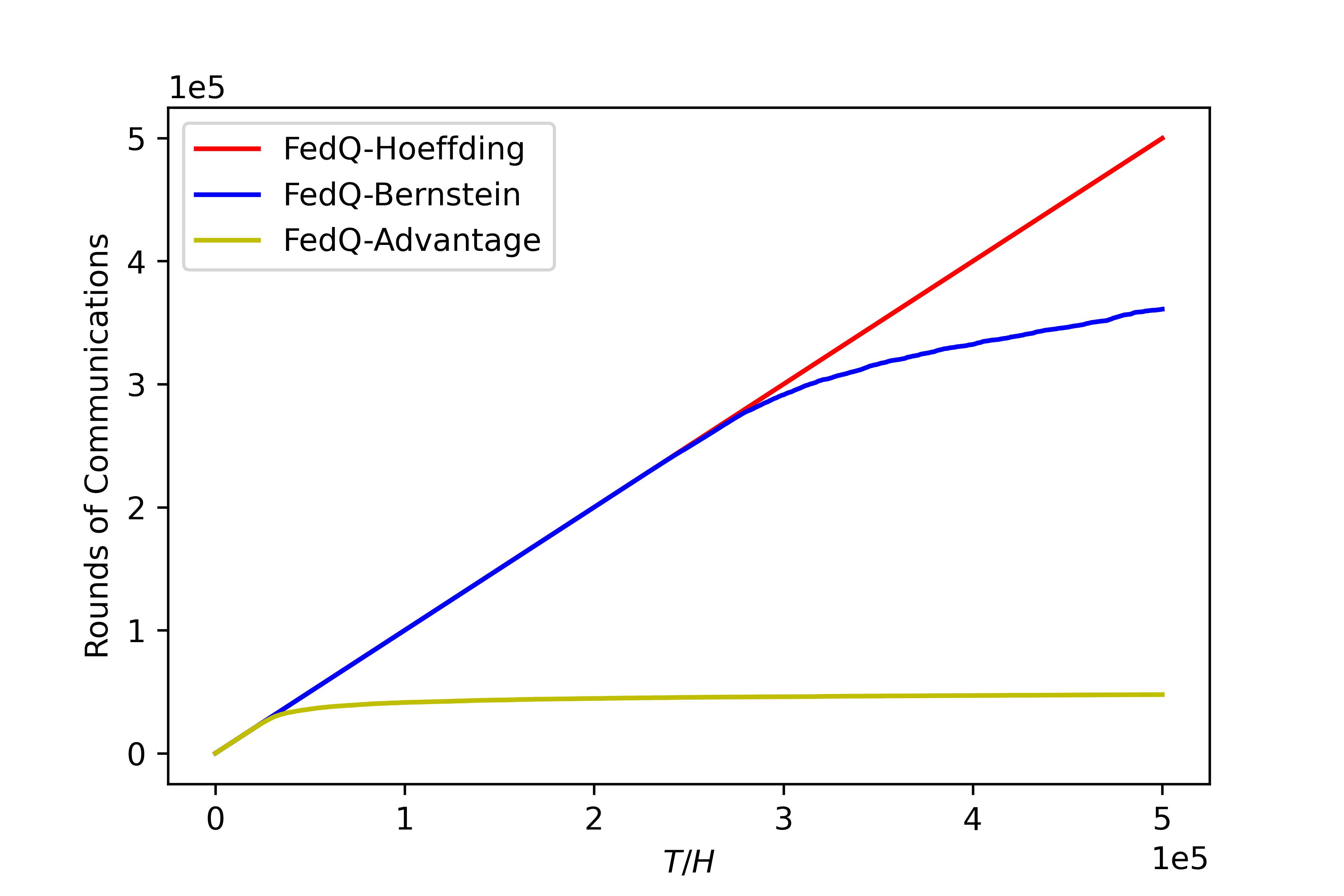}
    }
    \caption{Additional Experiments. The upper panels (a) and (b) replicate the numerical setting in \cite{zheng2023federated} where $S=3,A=2,H=5$. The bottom panels (c) and (d) perform experiments with $H=20,S=20,A=5$.}
    \label{fig:Experiments}
\end{figure}

\section{Basic facts and concentration inequalities}\label{sec:basic_facts}
In this section, we provide some basic facts and lemmas of concentration inequalities for FedQ-Advantage. 
For any triple $(s,a,h) \in \mathcal{S} \times \mathcal{A} \times [H]$, we let $Y_h^{t}(s,a) = \sum_{m,k:k\leq K} n_h^{m,k}\mathbb{I}[t_h^{k}\leq t]$ be the number of visits to $(s,a,h)$ up to and including stage $t$ and $y_h^t(s,a)= \sum_{m,k:k\leq K} n_h^{m,k}\mathbb{I}[t_h^{k} = t]$ be the visiting number in stage $t$. Here, we have that $Y_h^0= y_h^0=0, N_h^k = Y_h^{t_h^k - 1}, n_h^k = y_h^{t_h^k - 1}$. We also denote $T_h(s,a) = t_h^K(s,a)$ as the total number of stages for $(s,a,h)$. Here, we emphasize that the stage renewal condition might not be triggered in FedQ-Advantage for the last stage $T_h(s,a)$.

Next, we assign an order to the visits of any $(s,a,h)$. Let $L_i(s,a,h)$ denote the $i$-th visit to $(s,a,h)$ in FedQ-Advantage for $i\in\mathbb{N}_+$, and $(k_{L_i}(s,a,h),m_{L_i}(s,a,h), j_{L_i}(s,a,h))$ be the corresponding (round, agent, episode) index of the $i$-th visit. Similarly, let $l_i(s,a,h,k),i \in [n_h^k(s,a)]$ denote the $i$-th visit to $(s,a,h)$ during the stage $t_h^k(s,a)-1$, and $(k_{l_i}(s,a,h,k),m_{l_i}(s,a,h,k), j_{l_i}(s,a,h,k))$ be the corresponding (round, agent, episode) index of the $i$-th visit $l_i(s,a,h,k)$. The indices follow the chronological order of the visits. Specifically, under the synchronization assumption $n^{m,k} = n^k,\forall m\in [M]$, a viable order can be determined by the “round index first, episode index second, agent index third” rule. When $(s,a,h,k)$ is clear from the context, we use $L_i,l_i,(k,m,j)_{L_i},(k,m,j)_{l_i}$ for simplicity.

Next, we provide Lemma \ref{algorithm relationship} on some basic relationships for quantities in FedQ-Advantage.
\begin{lemma}
\label{algorithm relationship}
For any $(s,a,h,k)\in \mathcal{S} \times \mathcal{A} \times [H]\times [K]$, the following relationships hold for FedQ-Advantage.
    \begin{itemize}
        \item[(a)] $T_0 \leq \hat{T}$.
        \item[(b)] $n_h^{m,k}(s,a)\leq c_h^k(s,a),\ \forall m\in [M]$. In addition, $\forall k\in [K]$, there exists $(s,a,h,m)\in \sah\times [M]$ such that $n_h^{m,k}(s,a) = c_h^k(s,a)$.
        \item[(c)] If $T_h(s,a)\geq 2$, we have $MH\leq Y_h^1(s,a)\leq MH+M$.
        \item[(d)] If $n_h^k(s,a)>0$, we have that $\hat{n}_h^{k+1}(s,a)\leq (1+2/H)n_h^k(s,a)$, $\forall k\in [K]$.
        \item[(e)] $1+\frac{1}{H}\leq\frac{y_h^{t+1}(s,a)}{y_h^t(s,a)}\leq 1+\frac{2}{H},\forall t\in [1,T_h(s,a) - 2],t\in\mathbb{N}.$ In addition, $0\leq \frac{y_h^{T_h}(s,a)}{y_h^{T_h-1}(s,a)}\leq 1+\frac{2}{H}.$
        \item[(f)] $1\leq \frac{Y_h^{t+1}(s,a)}{Y_h^t(s,a)}\leq 2+\frac{2}{H} \leq 4,\forall t\in [T_h(s,a)-1].$
    \item[(g)] The following relationships hold. 
    \begin{equation}
    \label{Y-y upper}
        \frac{Y_h^t(s,a)}{y_h^t(s,a)}\leq 2H, \forall t\in [T_h(s,a)-1].
    \end{equation}
    \begin{equation}
    \label{split}
        \sqrt{y_h^t(s,a)} \leq 3\sqrt{H}\left(\sqrt{Y_h^t(s,a)}-\sqrt{Y_h^{t-1}(s,a)}\right), \forall t\in [T_h(s,a)-1].
    \end{equation}
    \begin{equation}
    \label{Y-y lower}
        \frac{Y_h^t(s,a)}{y_h^t(s,a)}\geq \frac{H}{4},\forall t\in [H,T_h(s,a)],t\in\mathbb{N}.
    \end{equation}
    \item[(h)] $\sum_{s,a,h}y_h^{T_h(s,a)-1}(s,a)\leq 9MSAH(H+1)+\frac{4T_0}{H}$. $\sum_{s,a,h}y_h^{T_h(s,a)}(s,a)\leq 9MSAH(H+1)+\frac{4T_0}{H}$.
    \item[(i)] Denote $T_1 = (2+\frac{2}{H})T_0 + MSAH(H+1)$, we have
    $$\hat{T}\leq T_1 \leq (2+\frac{2}{H})\hat{T} + MSAH(H+1).$$
    \item[(j)] $Q_h^{k+1}(s,a)\leq Q_h^{k}(s,a), V_h^{k+1}(s)\leq V_h^{k}(s),V_h^{\nref,k+1}(s)\leq V_h^{\nref,k}(s)$, $\forall (s,a,h,k)\in \sah\times [K-1]$.
    \end{itemize}
 Here, we use $\sum_{s,a}$ as a simplified notation for $\sum_{s \in \mathcal{S}} \sum_{a\in \mathcal{A}}$ and $\sum_{s,a,h}$ as a simplified notation for $\sum_{s \in \mathcal{S}} \sum_{a\in \mathcal{A}} \sum_{h=1}^{H}$. $T_h$ is the simplified notation of $T_h(s,a).$ Those simplifications will also be used later.
\end{lemma}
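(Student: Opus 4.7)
The lemma packages many structural properties of FedQ-Advantage, and I would prove them in a dependency order rather than the stated order, exploiting that most parts are mechanical consequences of the pseudocode once a single quantitative step is in place. I would first dispatch (a), (b), and (j) directly: (a) reads off the while-loop termination in \Cref{FedQ_Advantage_server}; (b) follows from the inner termination condition in \Cref{FedQ-Advantage_agent} together with the abortion logic and \Cref{ineq_usyn_true}/\Cref{ineq_usyn_false}; and (j) is an induction using that \Cref{eq_q_update} defines $Q_h^{k+1}$ as a minimum that includes $Q_h^k$, so $V_h^{k+1}\leq V_h^k$ by taking maxima, while \Cref{eq_update_ref} either leaves $V_h^{\nref,k}$ unchanged or overwrites it once by $V_h^{k_{s,h}+1}\leq V_h^1 = H = V_h^{\nref,1}$.

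Next I would tackle the quantitative core. For (c), stage $1$ has $n_h^k=0$, so the second branch of \Cref{trigger} forces $c_h^k=1$; thus each round contributes at most $M$ new visits, and the first time the cumulative count crosses the stage-$1$ renewal threshold $MH$ in \Cref{eq_stage_update} it can overshoot by at most $M$. The crucial step is (d), which I would prove by case-splitting on \Cref{trigger}. In the first branch ($\tilde n_h^k>(1-1/H)n_h^k$), the floor gives $\sum_m n_h^{m,k}\leq n_h^k/H$, and since stage $t_h^k$ has not yet renewed, $\tilde n_h^k<(1+1/H)n_h^k$; adding yields $\hat n_h^{k+1}\leq(1+2/H)n_h^k$. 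In the second branch the ceiling gives $\sum_m n_h^{m,k}\leq(n_h^k-\tilde n_h^k)+M$, so $\hat n_h^{k+1}\leq n_h^k+M$, and then (c) (which has been set up precisely to force $n_h^k\geq MH$ whenever $t_h^k\geq 2$) converts the additive $M$ into the multiplicative factor $(1+2/H)$. With (d), part (e) is immediate: its upper bound is (d) applied to the last round of stage $t$, its lower bound on completed stages is the renewal threshold in \Cref{eq_stage_update}, and the appended statement for the potentially incomplete final stage simply drops the lower bound.

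The remaining parts are geometric-sum consequences of (e). For (f), write $Y_h^{t+1}=Y_h^t+y_h^{t+1}\leq(2+2/H)Y_h^t$. For (g), iterate (e) to get $y_h^s\leq y_h^t(H/(H+1))^{t-s}$, hence $Y_h^t\leq(H+1)y_h^t$; the first inequality of (g) follows, and \Cref{split} drops out after writing $\sqrt{Y_h^t}-\sqrt{Y_h^{t-1}}=y_h^t/(\sqrt{Y_h^t}+\sqrt{Y_h^{t-1}})$ and using $Y_h^{t-1}\leq Y_h^t$; for \Cref{Y-y lower} I would run the reverse geometric sum with ratio $H/(H+2)$ and use $t\geq H$ to absorb the exponential tail. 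For (h) I would partition the sum according to whether $T_h(s,a)\leq H$ or $T_h(s,a)\geq H+1$: in the former, iterating (e) from (c) bounds $y_h^{T_h-1}\leq(MH+M)(1+2/H)^{H-1}\leq 9M(H+1)$, summing to $9MSAH(H+1)$; in the latter, \Cref{Y-y lower} gives $y_h^{T_h-1}\leq 4Y_h^{T_h-1}/H$, which summed via $\sum_{s,a,h}Y_h^{T_h}=\hat T$ and (a) contributes $4T_0/H$, with the bound on $y_h^{T_h}$ obtained from $y_h^{T_h}\leq(1+2/H)y_h^{T_h-1}$. Finally (i) controls the overshoot of the last round by $H\sum_{s,a,h}y_h^{T_h}$, which the preceding item converts into $T_1$. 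The main obstacle I anticipate is not conceptual but bookkeeping: carefully lining up constants so that the additive $M$ from the ceiling, the multiplicative $(1+2/H)$ factor, and the geometric-series tails all cooperate—which is exactly why (c) must be proved first, since it is the input that unlocks the multiplicative bound in (d).
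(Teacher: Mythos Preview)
Your proposal is correct and follows the paper's proof closely, including the dependency ordering (proving (c) before (d)) and the two-case split on \Cref{trigger} for (d). Two small bookkeeping points to tighten: in (h), the bound you need is $\sum_{s,a,h}Y_h^{T_h-1}\leq T_0$, which comes from the while-loop \emph{entry} condition at the final round $K$ (not from (a), which points the other way); and your sketch for (i)---bounding the overshoot by $\sum_{s,a,h}y_h^{T_h}$ and invoking (h)---yields $\hat T\leq T_0(1+4/H)+9MSAH(H+1)$, which is larger than the specific $T_1$ in the statement. The paper instead applies (f) directly to get $Y_h^{T_h}\leq(2+2/H)Y_h^{T_h-1}+M(H+1)$ and then sums using $\sum_{s,a,h}Y_h^{T_h-1}\leq T_0$ to hit $T_1$ exactly.
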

\begin{proof}[Proof of Lemma \ref{algorithm relationship}]
\begin{enumerate}[label=(\alph*)]
    \item This relationship holds from the stopping condition of the loop (line 2) in Algorithm \ref{FedQ_Advantage_server}.
    
    \item From the triggering condition for terminating the exploration in a round (line 4 in Algorithm \ref{FedQ-Advantage_agent}), we can prove the relationship.
    
    \item Since $T_h(s,a) \geq 2$, there exists a round $k$ satisfying $t_h^k(s,a) = 1$ and $t_h^{k+1}(s,a) =2$. Then according to \Cref{eq_stage_update}, we have $Y_h^1(s,a) = \hat{n}_h^{k+1}(s,a) \geq MH$. Meanwhile, according to (b), we have: 
    \begin{align*}
        Y_h^1(s,a) &= \hat{n}_h^{k+1}(s,a) = \tilde{n}_h^k(s,a) + \sum_{m=1}^{M}n_h^{m,k}(s,a)\\
        &\leq \tilde{n}_h^k(s,a) + Mc_h^k(s,a)\leq \tilde{n}_h^k(s,a) + M.
    \end{align*}
    Since round $k$ is in stage 1, we know stage 1 is not renewed before the start of round $k$. Then according to \Cref{eq_stage_update} and the definition of $\tilde{n}_h^k(s,a)$, we have $\tilde{n}_{h}^k(s,a) \leq MH$ and $Y_h^1(s,a) \leq \tilde{n}_h^k(s,a) + M = MH+M$.
    
    \item According to \Cref{eq_stage_update} and the definition of $\tilde{n}_h^k(s,a)$, we have $$\tilde{n}_{h}^k(s,a) \leq (1+1/H)n_h^k(s,a).$$
    
    If $0\leq \tilde{n}_h^k(s,a) \leq (1-\frac{1}{H})n_h^k(s,a)$, then according to \Cref{trigger} we have:
    \begin{align*}
        \hat{n}_h^{k+1}(s,a) \leq \tilde{n}_h^k(s,a) + Mc_h^k(s,a) &\leq \tilde{n}_h^k(s,a)+M\left(\frac{n_h^k(s,a)-\tilde{n}_h^k(s,a)}{M} +1\right)\\
        &= n_h^k(s,a) +M.
    \end{align*}
    Since $n_h^k(s,a) >0$, we know $t_h^k(s,a) > 1$ and then $n_h^k(s,a) \geq Y_h^1(s,a) \geq MH$. Therefore, $\hat{n}_h^{k+1}(s,a) \leq n_h^k(s,a) +M \leq (1+\frac{2}{H})n_h^k(s,a)$.
    
    If $ (1-1/H)n_h^k(s,a)< \tilde{n}_h^k(s,a) \leq (1+1/H)n_h^k(s,a)$, then according to \Cref{trigger} we have:
    $$\hat{n}_h^{k+1}(s,a) \leq \tilde{n}_h^k(s,a) + Mc_h^k(s,a) \leq \tilde{n}_h^k(s,a) + M\cdot \frac{1}{MH}n_h^k(s,a) \leq (1+\frac{2}{H})n_h^k(s,a).$$
    Therefore, $\tilde{n}_h^k(s,a)\leq (1+\frac{2}{H})n_h^k(s,a)$.
    
    \item For $t \leq T_h(s,a)-2$, there exists a round $k$ satisfying $t_h^k(s,a) = t+1$ and $t_h^{k+1}(s,a) = t+2 $. Then according to \Cref{eq_stage_update}, we have $y_h^{t+1}(s,a) = \hat{n}_h^{k+1}(s,a) \geq (1+1/H)n_h^k(s,a) = (1+1/H)y_h^t(s,a)$. Moreover, according to (d), we have $y_h^{t+1}(s,a) = \hat{n}_h^{k+1}(s,a) \leq (1+2/H)n_h^k(s,a) = (1+2/H)y_h^t(s,a)$.
    
    For $t = T_h(s,a)-1$, we have $y_h^{T_h}(s,a) = \hat{n}_h^{K+1}(s,a) \leq (1+2/H)n_h^K(s,a) = (1+2/H)y_h^{T_h-1}(s,a)$.
    
    \item According to (e), for $t \leq T_h(s,a)-1$ we have $y_h^{t+1}(s,a) \leq (1+2/H)y_h^t(s,a)$. Then:
    $$1\leq \frac{Y_h^{t+1}(s,a)}{Y_h^t(s,a)} = 1+\frac{y_h^{t+1}(s,a)}{Y_h^t(s,a)}\leq 1+\frac{(1+\frac{2}{H})y_h^t(s,a)}{Y_h^t(s,a)}\leq 2+\frac{2}{H} \leq 4.$$
    
    \item We will use the mathematical induction to prove the \Cref{Y-y upper}.
    
    For $t = 1$, $$\frac{Y_h^1(s,a)}{y_h^1(s,a)}=1 \leq 2H.$$
    If $\frac{Y_h^{t-1}(s,a)}{y_h^{t-1}(s,a)}\leq 2H$, then for $t$ ($2\leq t \leq T_h(s,a)-1$), according to (e), we have $y_h^t(s,a) \geq (1+1/H)y_h^{t-1}(s,a)$. Then:
$$\frac{Y_h^t(s,a)}{y_h^t(s,a)} = 1+\frac{Y_h^{t-1}(s,a)}{y_h^t(s,a)} \leq 1+\frac{Y_h^{t-1}(s,a)}{(1+\frac{1}{H})y_h^{t-1}(s,a)} \leq 1+\frac{2H}{1+\frac{1}{H}}\leq 2H.$$
    Therefore, we finish the proof of the \Cref{Y-y upper}.
    
    For \Cref{split}, we have:
    \begin{align*}
        &3\sqrt{H}\left(\sqrt{Y_h^t(s,a)}-\sqrt{Y_h^{t-1}(s,a)}\right)\\
        &= \sqrt{y_h^t(s,a)}\cdot 3\sqrt{H}\left(\sqrt{\frac{Y_h^t(s,a)}{y_h^t(s,a)}}-\sqrt{\frac{Y_h^t(s,a)}{y_h^t(s,a)}-1}\right)\\
        &= \sqrt{y_h^t(s,a)}\cdot3\sqrt{H}\frac{1}{\sqrt{\frac{Y_h^t(s,a)}{y_h^t(s,a)}}+\sqrt{\frac{Y_h^t(s,a)}{y_h^t(s,a)}-1}}\\
        &\geq \sqrt{y_h^t(s,a)}\cdot3\sqrt{H}\frac{1}{\sqrt{2H}+\sqrt{2H-1}} \geq \sqrt{y_h^t(s,a)}.
    \end{align*}
    The second last inequality is because $\frac{Y_h^t(s,a)}{y_h^t(s,a)} \leq 2H$ according to \Cref{Y-y upper}.
    
    For \Cref{Y-y lower}, according to (e), we have $y_h^t(s,a)\leq (1+2/H)^{h-1}y_h^{t-h+1}(s,a)$ for any $h \in [H]$. Then:
    \begin{align*}
        \frac{Y_h^t(s,a)}{y_h^t(s,a)} \geq \frac{\sum_{h=1}^{H}y_h^{t-h+1}(s,a)}{y_h^t(s,a)} &\geq \frac{\sum_{h=1}^{H}(1+\frac{2}{H})^{1-h}y_h^t(s,a)}{y_h^t(s,a)} \\
        &= \frac{H}{2}(1+\frac{2}{H})(1-(1+\frac{2}{H})^{-H}).
    \end{align*}
    Because $(1+\frac{2}{H})^H$ is increasing in $H$, we have $(1+\frac{2}{H})^H\geq 3$ and $1-(1+\frac{2}{H})^{-H} \geq \frac{2}{3}$. Therefore,
    $$\frac{Y_h^t(s,a)}{y_h^t(s,a)} \geq  \frac{H}{2}(1+\frac{2}{H})(1-(1+\frac{2}{H})^{-H}) \geq \frac{H}{4}.$$
    We finish the proof of (g).
    
    \item  \begin{align*}
            \sum_{s,a,h}y_h^{T_h(s,a)-1}(s,a) &= \sum_{s,a,h}y_h^{T_h-1}(s,a)\mathbb{I}[T_h(s,a)< H+1] \\
            &\quad + \sum_{s,a,h}y_h^{T_h-1}(s,a)\mathbb{I}[T_h(s,a)\geq H+1]
            \end{align*}
    Because of (e), we have: 
    \begin{align*}
        y_h^{T_h-1}(s,a)\mathbb{I}[T_h(s,a)< H+1] &\leq (1+\frac{2}{H})^{T_h-1}y_h^1(s,a)\mathbb{I}[T_h(s,a)< H+1] \\
        &\leq (1+\frac{2}{H})^H(MH+M).
    \end{align*}
    The last inequality is because $y_h^1(s,a) = Y_h^1(s,a) \leq MH+M$ according to (c). Moreover, according to \Cref{Y-y lower}, we have $$\sum_{s,a,h}y_h^{T_h-1}(s,a)\mathbb{I}[T_h(s,a)\geq H+1] \leq \sum_{s,a,h}\frac{4}{H}Y_h^{T_h-1}(s,a)\mathbb{I}[T_h(s,a)\geq H+1] \leq \frac{4T_0}{H}.$$
    The last inequality holds because  $\sum_{s,a,h}Y_h^{T_h-1}(s,a)\leq T_0$ according to the algorithm.
    Therefore, we have
$$\sum_{s,a,h}y_h^{T_h(s,a)-1}(s,a) \leq \sum_{s,a,h}(1+\frac{2}{H})^H(MH+M) +\frac{4T_0}{H} \leq 9MSAH(H+1)+\frac{4T_0}{H}.$$
    Similarly, we have:
   \begin{align*}
        &\sum_{s,a,h}y_h^{T_h(s,a)}(s,a)\\
        &= \sum_{s,a,h}y_h^{T_h}(s,a)\mathbb{I}[T_h(s,a)< H+1] + \sum_{s,a,h}y_h^{T_h-1}(s,a)\mathbb{I}[T_h(s,a)\geq H+1]\\
        & \leq (1+\frac{2}{H})^{T_h-1}y_h^1(s,a)\mathbb{I}[T_h(s,a)< H+1]+\sum_{s,a,h}\frac{4}{H}Y_h^{T_h-1}(s,a)\mathbb{I}[T_h(s,a)\geq H+1]\\
        & \leq 9MSAH(H+1)+\frac{4T_0}{H}.
    \end{align*}
    
    \item First, according to (f), we have:
    \begin{align*}
        Y_h^{T_h}(s,a) &\leq Y_h^1(s,a)\mathbb{I}[T_h(s,a) =1] + (2+\frac{2}{H})Y_h^{T_h-1}(s,a)\mathbb{I}[T_h(s,a) > 1]\\
        &\leq (2+\frac{2}{H})Y_h^{T_h-1}(x,a) +MH+M.
    \end{align*}
    The last inequality is because of (c).
    Then we have:
    \begin{align*}
        \hat{T} = \sum_{s,a,h}Y_h^{T_h}(s,a) &\leq  \sum_{s,a,h}\left((2+\frac{2}{H})Y_h^{T_h-1}(s,a) +MH+M\right)\\
        &\leq (2+\frac{2}{H})T_0 + MSAH(H+1).
    \end{align*}
    The last inequality is because of $\sum_{s,a,h}Y_h^{T_h-1}(s,a)\leq T_0$ according to the algorithm. Because $T_0 \leq \hat{T}$ from (a), we have:
    $$\hat{T} \leq T_1 \leq (2+\frac{2}{H})\hat{T} + MSAH(H+1).$$
    
    \item According to \Cref{eq_q_update}, we know $Q_h^k(s,a)$ is non-increasing with respect to $k$. Then based on the update rule \Cref{v_pi-update} for $V_h^k(s,a)$  and the update rule \Cref{eq_update_ref} for $V_h^{\nref,k}(s,a)$, we find that they are also non-increasing with respect to $k$.
\end{enumerate}
\end{proof}

Next, we provide Lemma \ref{n-N} that discusses the weighted sum of all the steps.
\begin{lemma}
\label{n-N} For any non-negative weight sequence $\left\{\omega_h(s,a)\right\}_{s,a,h}$ and any $\alpha \in (0,1)$, it holds that
\begin{equation*}
    \sum_{h=1}^H\sum_{k,m,j}\frac{\omega_h(s_h^{k,m,j},a_h^{k,m,j})}{N_h^k(s_h^{k,m,j},a_h^{k,m,j})^{\alpha}}\mathbb{I}\left[t_h^{k,m,j} > 1\right]\leq \frac{4^{\alpha}}{1-\alpha}\sum_{s,a,h}\omega_h(s,a)Y_h^{T_h}(s,a)^{1-\alpha},
\end{equation*}
and
\begin{equation*}
    \sum_{h=1}^H\sum_{k,m,j}\frac{\omega_h(s_h^{k,m,j},a_h^{k,m,j})}{n_h^k(s_h^{k,m,j},a_h^{k,m,j})^{\alpha}}\mathbb{I}\left[t_h^{k,m,j} > 1\right]\leq \frac{(8H)^{\alpha}}{1-\alpha}\sum_{s,a,h}\omega_h(s,a)Y_h^{T_h}(s,a)^{1-\alpha}.
\end{equation*}
For $\alpha = 1$, it holds that
\begin{equation*}
    \sum_{h=1}^H\sum_{k,m,j}\frac{\omega_h(s_h^{k,m,j},a_h^{k,m,j})}{N_h^k(s_h^{k,m,j},a_h^{k,m,j})}\mathbb{I}\left[t_h^{k,m,j} > 1\right]\leq 4\sum_{s,a,h}\omega_h(s,a)\log(Y_h^{T_h}(s,a)),
\end{equation*}
and
\begin{equation*}
    \sum_{h=1}^H\sum_{k,m,j}\frac{\omega_h(s_h^{k,m,j},a_h^{k,m,j})}{n_h^k(s_h^{k,m,j},a_h^{k,m,j})}\mathbb{I}\left[t_h^{k,m,j} > 1\right]\leq 8H\sum_{s,a,h}\omega_h(s,a)\log(Y_h^{T_h}(s,a)).
\end{equation*}
Here, $t_h^{k,m,j} = t_h^k(s_h^{k,m,j},a_h^{k,m,j})$.
\end{lemma}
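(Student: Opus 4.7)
The plan is to reindex each sum, for a fixed triple $(s,a,h)$, by the stage $t$ that the visit belongs to. Because the estimated counts $N_h^k$ and $n_h^k$ stay constant throughout a given stage and the indicator $\mathbb{I}[t_h^{k,m,j}>1]$ discards the first stage (where $n_h^k=0$), summing over all visits with $(s_h^{k,m,j},a_h^{k,m,j})=(s,a)$ collapses to
\begin{equation*}
\sum_{t=2}^{T_h(s,a)} \frac{y_h^t(s,a)}{(Y_h^{t-1}(s,a))^{\alpha}} \quad \text{or} \quad \sum_{t=2}^{T_h(s,a)} \frac{y_h^t(s,a)}{(y_h^{t-1}(s,a))^{\alpha}},
\end{equation*}
since a visit lying in stage $t$ has $N_h^k=Y_h^{t-1}$ and $n_h^k=y_h^{t-1}$, and there are exactly $y_h^t(s,a)$ such visits in stage $t$.

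Next I would bound the $N_h^k$-sum by an integral comparison. Writing $y_h^t=Y_h^t-Y_h^{t-1}$, the mean-value theorem applied to $x\mapsto x^{1-\alpha}$ gives
\begin{equation*}
(Y_h^t)^{1-\alpha}-(Y_h^{t-1})^{1-\alpha}\;\geq\; (1-\alpha)\,(Y_h^t)^{-\alpha}\,(Y_h^t-Y_h^{t-1}).
\end{equation*}
Then the ratio bound $Y_h^t/Y_h^{t-1}\leq 2+2/H\leq 4$ from Lemma~\ref{algorithm relationship}(f) yields $(Y_h^{t-1})^{-\alpha}\leq 4^\alpha (Y_h^t)^{-\alpha}$, so term-by-term
\begin{equation*}
\frac{y_h^t}{(Y_h^{t-1})^\alpha}\;\leq\;\frac{4^\alpha}{1-\alpha}\bigl[(Y_h^t)^{1-\alpha}-(Y_h^{t-1})^{1-\alpha}\bigr].
\end{equation*}
Telescoping in $t$ from $2$ to $T_h(s,a)$ and summing over $(s,a,h)$ weighted by $\omega_h(s,a)$ produces the first inequality. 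For $\alpha=1$, I would replace the integral comparison by $\log Y_h^t-\log Y_h^{t-1}=\int_{Y_h^{t-1}}^{Y_h^t} x^{-1}dx\geq (Y_h^t-Y_h^{t-1})/Y_h^t$, combined again with $Y_h^t/Y_h^{t-1}\leq 4$, and telescope to obtain the factor $4\log Y_h^{T_h}$.

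To pass from the $N_h^k$-bounds to the $n_h^k$-bounds, I would use Lemma~\ref{algorithm relationship}(g), specifically $Y_h^{t-1}/y_h^{t-1}\leq 2H$ for $t-1\in[T_h-1]$, which rearranges to
\begin{equation*}
\frac{1}{(y_h^{t-1})^\alpha}\;\leq\;\frac{(2H)^\alpha}{(Y_h^{t-1})^\alpha}\qquad\text{and}\qquad \frac{1}{y_h^{t-1}}\;\leq\;\frac{2H}{Y_h^{t-1}},
\end{equation*}
so the $n_h^k$-sums are dominated by $(2H)^\alpha$ (respectively $2H$) times the already-bounded $N_h^k$-sums, yielding the constants $(8H)^\alpha/(1-\alpha)$ and $8H$ as claimed. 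The main obstacle I anticipate is a bookkeeping one: verifying that the stage reindexing is legitimate, in particular that visits in the incomplete terminal stage $T_h(s,a)$ (where the renewal condition might not have been triggered) still satisfy $n_h^k=y_h^{t-1}$ and $N_h^k=Y_h^{t-1}$, and checking that the edge case $Y_h^1>0$ only improves the telescoped bound. Once these indexing details are pinned down, the rest reduces to the integral/telescoping estimates above.
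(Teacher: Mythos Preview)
Your proposal is correct and follows essentially the same route as the paper's proof: reindex each sum by stage so that it becomes $\sum_{t=2}^{T_h}\omega_h(s,a)\,y_h^t/(Y_h^{t-1})^\alpha$, use the ratio bound $Y_h^t/Y_h^{t-1}\le 4$ from Lemma~\ref{algorithm relationship}(f) together with a concavity/MVT estimate for $x\mapsto x^{1-\alpha}$ (resp.\ $\log x$) to telescope, and then reduce the $n_h^k$-sums to the $N_h^k$-sums via $Y_h^{t-1}/y_h^{t-1}\le 2H$. The bookkeeping concerns you flag (incomplete terminal stage, positivity of $Y_h^1$) are harmless exactly as you anticipate.
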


\begin{proof}[Proof of Lemma \ref{n-N}]
    According to \Cref{Y-y upper}, for any $(s_h^{k,m,j},a_h^{k,m,j},h) \in \mathcal{S} \times \mathcal{A} \times [H]$ and $t_h^{k,m,j}>1$, 
    $$\frac{N_h^k(s_h^{k,m,j},a_h^{k,m,j})}{n_h^k(s_h^{k,m,j},a_h^{k,m,j})} = \frac{Y_h^{t_h^{k,m,j}-1}(s_h^{k,m,j},a_h^{k,m,j})}{y_h^{t_h^{k,m,j}-1}(s_h^{k,m,j},a_h^{k,m,j})}\leq 2H.$$
    Therefore, we only need to prove the first and the third inequalities.
    
    We first provide two conclusions. For $1\leq x\leq 4$ and $0 < \alpha <1$, it holds:
    \begin{equation}
    \label{1-a}
    x^{1-\alpha} -1 \geq (1-\alpha)(x-1)x^{-\alpha} \geq 4^{-\alpha}(1-\alpha)(x-1)
    \end{equation}
    \begin{equation}
    \label{log}
        \log(x)\geq \frac{x-1}{x} \geq  \frac{x-1}{4}.
    \end{equation}
    
    Next, we go back to the proof. For any $(s,a,h) \in \mathcal{S} \times \mathcal{A} \times [H]$ and $2\leq t\leq T_h(s,a)$, let $x= \frac{Y_h^t(s,a)}{Y_h^{t-1}(s,a)}$. According to (f) in Lemma \ref{algorithm relationship}, we have $1\leq x\leq 4$. Using \Cref{1-a}, and \Cref{log}, it holds:
    
    \begin{equation}
    \label{aY}
        Y_h^t(s,a)^{1-\alpha}-{Y_h^{t-1}(s,a)}^{1-\alpha} \geq  4^{-\alpha}(1-\alpha)\frac{y_h^t(s,a)}{Y_h^{t-1}(s,a)^{\alpha}},
    \end{equation}
    and
    \begin{equation}
    \label{1Y}
        \log(Y_h^t(s,a)) -\log(Y_h^{t-1}(s,a)) \geq \frac{y_h^t(s,a)}{4Y_h^{t-1}(s,a)}.
    \end{equation}
    Now,
    \begin{align*}
        &\sum_{h=1}^H\sum_{k,m,j}\frac{\omega_h(s_h^{k,m,j},a_h^{k,m,j})}{N_h^k(s_h^{k,m,j},a_h^{k,m,j})^{\alpha}}\mathbb{I}\left[t_h^{k,m,j} > 1\right] \nonumber\\
        &= \sum_{h=1}^H\sum_{k,m,j}\frac{\omega_h(s_h^{k,m,j},a_h^{k,m,j})}{N_h^k(s_h^{k,m,j},a_h^{k,m,j})^{\alpha}}\mathbb{I}\left[t_h^{k,m,j} > 1\right]\left(\sum_{s,a}\mathbb{I}\left[(s_h^{k,m,j},a_h^{k,m,j}) = (s,a)\right]\right)\nonumber\\
        &= \sum_{s,a,h}\sum_{k,m,j}\frac{\omega_h(s,a)}{N_h^k(s,a)^{\alpha}}\mathbb{I}\left[t_h^k(s,a) >1\right]\mathbb{I}\left[(s_h^{k,m,j},a_h^{k,m,j}) = (s,a)\right]\nonumber\\
        &= \sum_{s,a,h}\sum_{k,m,j}\frac{\omega_h(s,a)}{N_h^k(s,a)^{\alpha}}\mathbb{I}\left[(s_h^{k,m,j},a_h^{k,m,j}) = (s,a)\right]\left(\sum_{t=2}^{T_h} \mathbb{I}\left[t_h^k(s,a) = t\right]\right)\nonumber\\
        &= \sum_{s,a,h}\sum_{t=2}^{T_h}\frac{\omega_h(s,a)}{N_h^k(s,a)^{\alpha}}\sum_{k,m,j}\mathbb{I}\left[(s_h^{k,m,j},a_h^{k,m,j}) = (s,a), t_h^k(s,a) = t\right]
    \end{align*}
    In the last equality, $\mathbb{I}[(s_h^{k,m,j},a_h^{k,m,j}) = (s,a), t_h^k(s,a) = t] = 1$ if and only if $(s_h^{k,m,j},a_h^{k,m,j}) = (s,a)$ and $k$ in stage $t$ of $(s,a,h)$, so $\sum_{k,m,j}\mathbb{I}[(s_h^{k,m,j},a_h^{k,m,j}) = (s,a), t_h^k(s,a) = t]=y_h^t(s,a)$. In this case, $N_h^k(s,a) = Y_h^{t-1}(s,a)$ and then we have  :
    \begin{align}
    \label{middle1}
        \sum_{h=1}^H\sum_{k,m,j}\frac{\omega_h(s_h^{k,m,j},a_h^{k,m,j})}{N_h^k(s_h^{k,m,j},a_h^{k,m,j})^{\alpha}}\mathbb{I}\left[t_h^{k,m,j} > 1\right]= \sum_{s,a,h}\sum_{t=2}^{T_h}\frac{\omega_h(s,a)y_h^t(s,a)}{Y_h^{t-1}(s,a)^{\alpha}}.
    \end{align}
    Summing \Cref{aY} for $2\leq t\leq T_h(s,a)$, for any $0<\alpha<1$, we have:
    \begin{align}
    \label{middle2}
    \sum_{t=2}^{T_h}\frac{y_h^t(s,a)}{Y_h^{t-1}(s,a)^{\alpha}} \leq \frac{4^{\alpha}}{1-\alpha}\sum_{t=2}^{T_h}(Y_h^t(s,a)^{1-\alpha}-{Y_h^{t-1}(s,a)}^{1-\alpha}) \leq \frac{4^{\alpha}}{1-\alpha}Y_h^{T_h}(s,a)^{1-\alpha}
    \end{align}
    Combining \Cref{middle1} and \Cref{middle2}, we can finish the proof of the first inequality.
    
    In \Cref{middle1}, let $\alpha =1$, we have:
    \begin{align}
    \label{middle3}
        \sum_{h=1}^H\sum_{k,m,j}\frac{\omega_h(s_h^{k,m,j},a_h^{k,m,j})}{N_h^k(s_h^{k,m,j},a_h^{k,m,j})}\mathbb{I}\left[t_h^{k,m,j} > 1\right]
        = \sum_{s,a,h}\omega_h(s,a)\sum_{t=1}^{T_h}\frac{y_h^t(s,a)}{Y_h^{t-1}(s,a)}
    \end{align}
    Summing \Cref{1Y} for $2\leq t\leq T_h(s,a)$ , we have:
    \begin{align}
    \label{middle4}
    \sum_{t=2}^{T_h}\frac{y_h^t(s,a)}{Y_h^{t-1}(s,a)} \leq 4 \sum_{t=2}^{T_h}\left(\log(Y_h^t(s,a)) -\log(Y_h^{t-1}(s,a))\right) \leq 4\log(Y_h^{T_h}(s,a))
    \end{align}
    Combining \Cref{middle3} with \Cref{middle4}, we finish the proof of the third inequality. Then we finish the proof.
\end{proof}

Next, we provide auxiliary lemmas.
\begin{lemma}
\label{Azuma} \textnormal{(Azuma-Hoeffding Inequality)} Suppose {$\left\{X_k\right\}_{k=0}^\infty$} is a martingale and $|X_k-X_{k-1}|\leq c_k$, $\forall k\in\mathbb{N}_+$ almost surely. Then for any positive integers $N$ and any positive real number $\epsilon$, it holds that:
$$\mathbb{P}\left(|X_N-X_{0}|\geq \epsilon\right) \leq 2\exp \left(-\frac{\epsilon^2}{2\sum_{k=1}^{N}c_k^2}\right).$$
\end{lemma}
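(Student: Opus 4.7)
The plan is to follow the standard Chernoff-bounding approach for martingales. First I would introduce the martingale differences $Y_k := X_k - X_{k-1}$, which by hypothesis satisfy $\mathbb{E}[Y_k \mid \mathcal{F}_{k-1}] = 0$ and $|Y_k| \leq c_k$ almost surely, where $\{\mathcal{F}_k\}$ denotes the natural filtration. The goal is to control the moment generating function of $X_N - X_0 = \sum_{k=1}^N Y_k$, then apply Markov's inequality.

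The key ingredient will be Hoeffding's lemma applied conditionally: for any bounded, mean-zero random variable $Z$ with $|Z| \leq c$, one has $\mathbb{E}[e^{\lambda Z}] \leq \exp(\lambda^2 c^2 / 2)$ for all $\lambda \in \mathbb{R}$. Proving this reduces to a convexity argument: writing $Z$ as a convex combination of $\pm c$, bounding $e^{\lambda z}$ by the corresponding chord, taking expectation, and then bounding the resulting logarithm via a Taylor expansion. I would apply this conditionally on $\mathcal{F}_{k-1}$ to each $Y_k$, obtaining $\mathbb{E}[e^{\lambda Y_k} \mid \mathcal{F}_{k-1}] \leq \exp(\lambda^2 c_k^2 / 2)$.

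Next, peeling off one term at a time via the tower property,
\begin{equation*}
\mathbb{E}\!\left[e^{\lambda (X_N - X_0)}\right] = \mathbb{E}\!\left[e^{\lambda \sum_{k=1}^{N-1} Y_k} \, \mathbb{E}[e^{\lambda Y_N} \mid \mathcal{F}_{N-1}]\right] \leq e^{\lambda^2 c_N^2 / 2}\,\mathbb{E}\!\left[e^{\lambda \sum_{k=1}^{N-1} Y_k}\right],
\end{equation*}
and iterating gives $\mathbb{E}[e^{\lambda(X_N - X_0)}] \leq \exp\bigl(\tfrac{\lambda^2}{2}\sum_{k=1}^N c_k^2\bigr)$. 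Applying Markov's inequality to $e^{\lambda(X_N - X_0)}$ for $\lambda > 0$ and then optimizing $\lambda = \epsilon / \sum_{k=1}^N c_k^2$ yields the one-sided bound $\mathbb{P}(X_N - X_0 \geq \epsilon) \leq \exp\bigl(-\epsilon^2 / (2\sum_k c_k^2)\bigr)$. The same argument applied to the martingale $\{-X_k\}$ gives the symmetric lower-tail bound, and a union bound produces the factor of $2$.

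The only subtlety is Hoeffding's lemma itself, which is standard but is the quantitative heart of the argument; everything else is bookkeeping via the tower property and an elementary optimization in $\lambda$. Since this lemma is classical and is invoked only as a tool in the regret analysis, I expect the authors either cite it or reproduce this textbook proof verbatim.
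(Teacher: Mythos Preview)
Your proof is the standard Chernoff-bounding argument and is correct. However, the paper does not prove this lemma at all: it is simply stated as a classical auxiliary result (the Azuma--Hoeffding inequality) and invoked without proof in the subsequent concentration arguments. So there is nothing to compare against; your sketch supplies a proof where the paper provides none.
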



\begin{lemma}
\label{Freedman corollary} \textnormal{(Lemma 10 of \cite{zhang2020almost})} Let $\left\{M_n\right\}_{n=0}^\infty$ be a martingale such that $M_0 = 0$ and $|M_n-M_{n-1}|\leq c$. Let $Var_n = \sum_{k=1}^n\mathbb{E}[(M_k-M_{k-1})^2|\mathcal{F}_{k-1}]$, where $\mathcal{F}_{k-1}=\sigma(M_0,M_1,...,M_{k-1})$. Then for any positive integer $n$ and any $\epsilon,p>0$, we have that:
$$\mathbb{P}\left(|M_n|\geq 2\sqrt{\textnormal{Var}_n\log(1/p)}+2\sqrt{\epsilon \log(1/p)}+2c\log(1/p)\right) \leq \left(2nc^2/\epsilon+2\right)p.$$
\end{lemma}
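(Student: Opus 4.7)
The plan is to prove the statement by combining Freedman's inequality for bounded-increment martingales with a peeling argument over the random quadratic variation $\text{Var}_n$. The point of the extra $2\sqrt{\epsilon\log(1/p)}$ term in the threshold is precisely to absorb the slack introduced by discretizing the (random) variance.

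First I would recall the one-sided Freedman bound in Bernstein form: for any fixed $V>0$ and $\lambda>0$,
\begin{equation*}
\mathbb{P}\!\left(|M_n|\geq \lambda,\ \text{Var}_n \leq V\right)\ \leq\ 2\exp\!\left(-\frac{\lambda^2/2}{V+c\lambda/3}\right).
\end{equation*}
A short algebraic check shows that taking $\lambda = \lambda(V) := 2\sqrt{V\log(1/p)}+2c\log(1/p)$ makes the exponent $\geq \log(1/p)$, so the right-hand side is $\leq 2p$. This gives the correct quantitative shape, but only under a deterministic ceiling $V$ on $\text{Var}_n$.

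Next I would remove that ceiling by peeling. Since $|M_k-M_{k-1}|\leq c$, the quadratic variation satisfies $\text{Var}_n \in [0,nc^2]$. I would choose a linear (not dyadic) grid $V_0 = 0$, $V_j = j\epsilon$ for $j=1,\dots,J$ with $J=\lceil nc^2/\epsilon\rceil$, plus the initial stratum $\{\text{Var}_n \leq \epsilon\}$. On the stratum $\{V_{j-1}<\text{Var}_n \leq V_j\}$, the event $\text{Var}_n \leq V_j$ holds, so the Freedman bound above applies with threshold $\lambda(V_j)$. Using $V_j \leq \text{Var}_n+\epsilon$ and $\sqrt{a+b}\leq \sqrt{a}+\sqrt{b}$ gives
\begin{equation*}
\lambda(V_j)\ \leq\ 2\sqrt{\text{Var}_n\log(1/p)}+2\sqrt{\epsilon\log(1/p)}+2c\log(1/p),
\end{equation*}
which is exactly the threshold appearing in the lemma. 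Thus on each stratum the excursion event is contained in $\{|M_n|\geq \lambda(V_j),\ \text{Var}_n \leq V_j\}$, which has probability at most $2p$ by the first step.

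Finally I would union-bound over the $J+1$ strata to obtain probability at most $2(J+1)p \leq (2nc^2/\epsilon+2)p$, matching the claim. The only real bookkeeping step, and the place where the constants must be chosen carefully, is matching the width of the strata ($\epsilon$) to the slack term $2\sqrt{\epsilon\log(1/p)}$ in the threshold so that the peeling loses nothing worse than an additive $\sqrt{\epsilon\log(1/p)}$ term. I do not expect a deep obstacle here; the only subtlety is whether to use a dyadic partition (which would yield a logarithmic factor in place of $nc^2/\epsilon$, but a worse constant in front of $\sqrt{\text{Var}_n\log(1/p)}$) versus the linear partition used here, which keeps the leading constant at $2$ at the price of a polynomial-in-$n$ prefactor on $p$—exactly the trade-off encoded in the stated bound.
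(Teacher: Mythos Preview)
The paper does not prove this lemma; it is quoted verbatim as Lemma 10 of Zhang et al.\ (2020) and used as a black box. Your proposal supplies exactly the standard proof: Freedman's inequality under a deterministic variance cap $V$, followed by a linear peeling of the range $[0,nc^2]$ of $\text{Var}_n$ with mesh $\epsilon$, then a union bound. The argument is correct; the only slip is the stratum count---the grid $V_0,\dots,V_J$ defines $J$ intervals, not $J+1$---and with $J$ strata the final inequality $2Jp\leq(2nc^2/\epsilon+2)p$ holds exactly as claimed.
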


At the end of this section, we provide a lemma of concentration inequalities.
\begin{lemma}\label{concentrations}
Let $\iota = \log (2/p)$ with $p\in (0,1)$. Using $\forall (s,a,h,k)$ as the simplified notation for $\forall (s,a,h,k)\in \sah\times [K]$. For any function $f: \mathcal{S} \rightarrow \mathbb{R}$, we denote $\mathbb{V}_{s,a,h}(f) = \mathbb{P}_{s,a,h}f^2-(\mathbb{P}_{s,a,h}f)^2$. Next,
we define the following events.
$$\mathcal{E}_1 = \left\{\frac{1}{n_h^k}\left|\sum_{i=1}^{n_h^k}\left(V_{h+1}^{\star}(s_{h+1}^{(k,m,j)_{l_i}})-\mathbb{P}_{s,a,h}V_{h+1}^{\star}\right)\right|\leq \sqrt{\frac{2H^2\iota}{n_h^k}},\forall (s,a,h,k)\right\}.$$
$$\mathcal{E}_2 = \left\{|\chi_1|\leq \frac{2}{N_h^k}\left(\sqrt{\sum_{i=1}^{N_h^k}\mathbb{V}_{s,a,h}(V_{h+1}^{\nref,k_{L_i}})\iota}+\frac{\sqrt{\iota}}{T_1}+H\iota\right),\forall (s,a,h,k) \right\},$$
in which $\chi_1$ is the abbreviation for 
$$\chi_1(s,a,h,k) = \frac{1}{N_h^k(s,a)}\sum_{i=1}^{N_h^k}\left(\mathbb{P}_{s,a,h}-\mathbbm{1}_{s_{h+1}^{(k,m,j)_{L_i}}}\right)V_{h+1}^{\nref,k_{L_i}}.$$
$$\mathcal{E}_3 = \left\{\left|\sum_{i=1}^{N_h^k}\left(\mathbb{P}_{s,a,h}-\mathbbm{1}_{s_{h+1}^{(k,m,j)_{L_i}}}\right)V_{h+1}^{\nref,k_{L_i}}\right|\leq H\sqrt{2N_h^k(s,a)\iota},\forall (s,a,h,k)\right\}.$$
$$\mathcal{E}_4 = \left\{\left|\sum_{i=1}^{N_h^k}\left(\mathbb{P}_{s,a,h}-\mathbbm{1}_{s_{h+1}^{(k,m,j)_{L_i}}}\right)(V_{h+1}^{\nref,k_{L_i}})^2\right|\leq H^2\sqrt{2N_h^k(s,a)\iota},\forall (s,a,h,k)\right\}.$$
$$\mathcal{E}_5 = \left\{|\chi_2|\leq \frac{2}{n_h^k}\left(\sqrt{\sum_{i=1}^{n_h^k}\mathbb{V}_{s,a,h}(V_{h+1}^{k_{l_i}} - V_{h+1}^{\nref,k_{l_i}})\iota}+\frac{\sqrt{\iota}}{T_1}+2H\iota\right),\forall (s,a,h,k) \right\},$$
in which $\chi_2$ is the abbreviation for 
$$\chi_2(s,a,h,k) = \frac{1}{n_h^k}\sum_{i=1}^{n_h^k}\left(\mathbb{P}_{s,a,h}-\mathbbm{1}_{s_{h+1}^{(k,m,j)_{l_i}}}\right)(V_{h+1}^{k_{l_i}} - V_{h+1}^{\nref,k_{l_i}}).$$
$$\mathcal{E}_6 = \left\{\left|\sum_{i=1}^{n_h^k}\left(\mathbb{P}_{s,a,h}-\mathbbm{1}_{s_{h+1}^{(k,m,j)_{l_i}}}\right)(V_{h+1}^{k_{l_i}} - V_{h+1}^{\nref,k_{l_i}})\right|\leq 2H\sqrt{2n_h^k(s,a)\iota},\forall (s,a,h,k)\right\}.$$
$$\mathcal{E}_7 = \left\{\left|\sum_{i=1}^{n_h^k}\left(\mathbb{P}_{s,a,h}-\mathbbm{1}_{s_{h+1}^{(k,m,j)_{l_i}}}\right)(V_{h+1}^{k_{l_i}} - V_{h+1}^{\nref,k_{l_i}})^2\right|\leq 2H^2\sqrt{2n_h^k(s,a)\iota},\forall (s,a,h,k)\right\}.$$
$$\mathcal{E}_8 = \left\{\left|\sum_{h=1}^H\sum_{k,m,j}\left(\mathbb{P}_{s_h^{k,m,j},a_h^{k,m,j},h}-\mathbbm{1}_{s_{h+1}^{k,m,j}}\right)\lambda_{h+1}^k(s_{h+1}^{k,m,j})\right| \leq \sqrt{2T_1\iota}\right\}.$$
Here, $\lambda_h^k(s) = \mathbb{I}[N_h^k(s)< N_0]$ with $N_h^k(s) = \sum_{a \in \mathcal{A}}N_h^k(s,a)$. Especially, $\lambda_{H+1}^k(s)=0$. $\sum_{k,m,j}$ is the abbreviation of $\sum_{k=1}^K\sum_{m=1}^M\sum_{j=1}^{n^{m,k}}$. We will also use the abbreviation later.
\begin{align*}
    \mathcal{E}_{9} = \Bigg\{&\sum_{h=1}^H\sum_{k,m,j}(1+\frac{2}{H})^{h-1}(1+\frac{1}{H})\left(\mathbb{P}_{s_h^{k,m,j},a_h^{k,m,j},h}-\mathbbm{1}_{s_{h+1}^{k,m,j}}\right)\left(V_{h+1}^{k}-V^{\star}_{h+1}\right).\\
    &\leq 18H\sqrt{2T_1\iota}\Bigg\}
\end{align*}
$$\mathcal{E}_{10} = \left\{\left|V(s,a,h,t)\right|\leq H\sqrt{2y_h^t(s,a)\iota},\forall (s,a,h)\text{ and }\forall t\in [T_h(s,a)]\right\}.$$
Here,
$$V(s,a,h,t)=\sum_{k,m,j}\left(\mathbb{P}_{s,a,h}-\mathbbm{1}_{s_{h+1}^{k,m,j}}\right)\left(V_{h+1}^{k}-V^{\star}_{h+1}\right)\mathbb{I}\left[(s_h^{k,m,j},a_h^{k,m,j})=(s,a),t_h^k(s,a) = t\right].$$
\begin{align*}
    \mathcal{E}_{11} =\Bigg\{&\sum_{h=1}^H\sum_{k,m,j}(1+\frac{2}{H})^{h-1}\mathbb{I}\left[t_h^{k,m,j}>1\right]\left(\mathbb{P}_{s_h^{k,m,j},a_h^{k,m,j},h}-\mathbbm{1}_{s_{h+1}^{k,m,j}}\right)\left(V_{h+1}^{\star}-V_{h+1}^{\pi^k}\right)\Bigg.\\
    &\Bigg.\leq 9H\sqrt{2T_1\iota}\Bigg\}.
\end{align*}
$$\mathcal{E}_{12} = \left\{\frac{1}{N_h^k(s,a)}\left|\sum_{i=1}^{N_h^k(s,a)}\left(\mathbb{P}_{s,a,h}-\mathbbm{1}_{s_{h+1}^{(k,m,j)_{L_i}}}\right)\lambda_{h+1}^{k_{L_i}}\right| \leq \sqrt{\frac{2\iota}{N_h^k(s,a)}},\forall (s,a,h,k)\right\}.$$
$$\mathcal{E}_{13} = \left\{\left|\sum_{h=1}^{H}\sum_{k,m,j}\left(\mathbb{P}_{s_h^{k,m,j},a_h^{k,m,j},h}(V_{h+1}^{\star})- V_{h+1}^{\star}(s_{h+1}^{k,m,j})\right)\right| \leq H\sqrt{2T_1\iota}\right\}.$$
$$\mathcal{E}_{14} = \left\{\left|\sum_{h=1}^{H}\sum_{k,m,j}\left(\mathbb{P}_{s_h^{k,m,j},a_h^{k,m,j},h}(V_{h+1}^{\star})^2- V_{h+1}^{\star}(s_{h+1}^{k,m,j})^2\right)\right| \leq H^2\sqrt{2T_1\iota}\right\}.$$
$$\mathcal{E}_{15} = \left\{  \left|\frac{1}{N_h^k(s,a)}\sum_{i=1}^{N_h^k(s,a)}\left(\mathbb{P}_{s,a,h}-\mathbbm{1}_{s_{h+1}^{(k,m,j)_{L_i}}}\right)\left(V_{h+1}^{\nref,k_{L_i}} - V_{h+1}^{\star}\right)\right| \leq 2H\sqrt{\frac{2\iota}{N_h^k(s,a)}}.\right\}.$$
Then we have
$$\mathbb{P}(\mathcal{E}_i)\geq 1 - SAT_1^2/Hp,i\in\{1,6,7,10\},$$
$$\mathbb{P}(\mathcal{E}_i)\geq 1-SAT_1p,i\in\{3,4,12,15\},$$
$$\mathbb{P}(\mathcal{E}_i)\geq 1-p,i\in\{8,9,11,13,14\},$$
$$\mathbb{P}(\mathcal{E}_2)\geq 1-SAT_1(HT_1^3+1)p$$
and
$$\mathbb{P}(\mathcal{E}_5)\geq 1-SAT_1^2(4HT_1^3+1)/Hp.$$
\end{lemma}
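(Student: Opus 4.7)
The plan is to verify each of the fifteen events by constructing an appropriate martingale difference sequence, applying either Lemma~\ref{Azuma} (Azuma--Hoeffding) or Lemma~\ref{Freedman corollary} (the Freedman-type corollary), and then union-bounding over the relevant state--action--step triples, rounds and stage-length values. The key enabling observation is the Markov property: at each visit to a fixed triple $(s,a,h)$, the conditional distribution of the next state is $\mathbb{P}_h(\cdot\mid s,a)$ regardless of the policy or the past, so that for any function $f$ measurable with respect to the history strictly before the visit, the increment $(\mathbb{P}_{s,a,h} - \mathbbm{1}_{s_{h+1}})f$ has conditional mean zero. Because every value function used in the statement lies in $[0,H]$ and the advantage/reference differences in $[-H,H]$, each increment is uniformly bounded by $H$, $2H$, or $H^2$, supplying the constants $c$ required by both concentration lemmas.

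For events $\mathcal{E}_1,\mathcal{E}_3,\mathcal{E}_4,\mathcal{E}_6,\mathcal{E}_7,\mathcal{E}_8,\mathcal{E}_9,\mathcal{E}_{10},\mathcal{E}_{11},\mathcal{E}_{12},\mathcal{E}_{13},\mathcal{E}_{14},\mathcal{E}_{15}$ I would apply Lemma~\ref{Azuma} to the corresponding martingale. The events $\mathcal{E}_8,\mathcal{E}_{13},\mathcal{E}_{14}$ are single global sums over all steps, so one application suffices with failure probability $p$. In $\mathcal{E}_9$ and $\mathcal{E}_{11}$ the weight $(1+2/H)^{h-1}$ is bounded by $(1+2/H)^H\le e^{2}$ and absorbed into the per-step constant; the coefficients $18H$ and $9H$ in the stated bounds come from this absorbed factor times $H\sqrt{2T_1\iota}$. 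The remaining events are indexed by a single triple $(s,a,h)$ and a partial-sum length, either $N_h^k$ (all visits) or $n_h^k$ or $y_h^t$ (a single stage). I would union-bound over $(s,a,h)$ and over the possible values $N\in[T_1]$ of the partial-sum length, using Lemma~\ref{algorithm relationship}(g) to bound the number of distinct stages by $O(T_1/H)$. This produces the stated failure budgets $SAT_1 p$ for the $N_h^k$-normalized events and $SAT_1^2/H\cdot p$ for the stage-restricted events.

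For $\mathcal{E}_2$ and $\mathcal{E}_5$ I would invoke Lemma~\ref{Freedman corollary} in order to recover the conditional variance terms $\mathbb{V}_{s,a,h}$ that appear in the bounds. Applied with $c=H$ (respectively $c=2H$ for the advantage function) and $\epsilon = 1/T_1^2$, the corollary yields exactly the $\sqrt{\iota}/T_1$ correction visible in the statements together with a per-instance failure probability of order $(HT_1^3+1)p$. Union-bounding as in the previous paragraph over $(s,a,h,N)$, and additionally over stages for $\mathcal{E}_5$, gives the stated totals $SAT_1(HT_1^3+1)p$ and $SAT_1^2(4HT_1^3+1)/H\cdot p$.

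The main obstacle, and the source of the novelty highlighted in the introduction, is the stage-wise non-martingale issue: the normalization $n_h^k$ (and similarly the stage length $y_h^t$) is an event-triggered stopping time set by the condition in \Cref{trigger}, so at the moment of each individual visit the eventual normalization is not causally known. My plan is to decouple the stopping-time randomness from the martingale by freezing a hypothetical length $N\in[T_1]$, applying Azuma or Freedman to the genuine martingale formed by its first $N$ increments, and then paying a factor of $T_1$ in the union bound. A secondary subtlety is that the reference function $V_{h+1}^{\nref,k_{L_i}}$ is itself random across $i$ because it switches once, at the first round in which the global visit count at $(s_{h+1},h+1)$ crosses $N_0$; however this switch is measurable with respect to the history strictly preceding the $i$-th transition, so $V_{h+1}^{\nref,k_{L_i}}$ remains adapted and the martingale property is preserved, which is what allows the Freedman corollary to be applied to $\mathcal{E}_2$ without modification.
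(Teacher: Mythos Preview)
Your plan is correct and mirrors the paper's proof: Azuma--Hoeffding for the bounded-increment events, Lemma~\ref{Freedman corollary} with $\epsilon=1/T_1^2$ for $\mathcal{E}_2$ and $\mathcal{E}_5$, and a freeze-the-length union bound over $(s,a,h)$ and the partial-sum length to neutralise the random stage endpoints. Two small slips do not affect the argument: the frozen length ranges over $[T_1/H]$ rather than $[T_1]$ (the total number of visits to any fixed $(s,a,h)$ is at most the episode count $\hat T/H\le T_1/H$, which is what produces the factors $SAT_1$ and $SAT_1^2/H$), and the stage-count bound you need is not Lemma~\ref{algorithm relationship}(g) but simply that each stage contains at least one visit. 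One clarification: the ``non-martingale issue'' highlighted in the introduction is not about this lemma at all --- it refers to Lemma~\ref{epsilon}, where the weight $y_h^{t+1}/y_h^t$ multiplying each increment is acausal and the paper splits off a deterministic constant $1+1/H$ to recover a martingale; here in Lemma~\ref{concentrations} the only randomness beyond the increments is the stopping time, and your freeze-and-union-bound handles it exactly as the paper does.
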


\begin{proof}[Proof of Lemma \ref{concentrations}]
First, we will prove with probability at least $1-SAT_1^2/Hp$, $\mathcal{E}_1$ holds. The sequence $\{V_{h+1}^{\star}(s_{h+1}^{(k,m,j)_{l_i}})-\mathbb{P}_{s,a,h}V_{h+1}^{\star}\}_{i\in \mathbb{N}^+}$ is a martingale sequence with its absolute values bounded by $H$ . Then according to Azuma-Hoeffding inequality, for any $p\in(0,1)$, with probability at least $1-p$, it holds for given $n_h^k(s,a) = n \in \mathbb{N}_{+}$ that:
$$\frac{1}{n}\left|\sum_{i=1}^{n}\left(V_{h+1}^{\star}(s_{h+1}^{(k,m,j)_{l_i}})-\mathbb{P}_{s,a,h}V_{h+1}^{\star}\right)\right|\leq \sqrt{\frac{2H^2\iota}{n}}.$$
For any $k \in [K]$, we have $n_h^k(s,a) \in [\frac{T_1}{H}]$. Considering all the possible combinations $(s,a,h,k) \in \mathcal{S} \times \mathcal{A} \times [H] \times [\frac{T_1}{H}]$ and $n_h^k(s,a) \in [\frac{T_1}{H}]$, with probability at least $1-SAT_1^2/Hp$, it holds simultaneously for all $(s,a,h,k) \in \mathcal{S} \times \mathcal{A} \times [H] \times [K]$ that:
\begin{equation*}
    \frac{1}{n_h^k(s,a)}\left|\sum_{i=1}^{n_h^k}\left(V_{h+1}^{\star}(s_{h+1}^{(k,m,j)_{l_i}})-\mathbb{P}_{s,a,h}V_{h+1}^{\star}\right)\right|\leq \sqrt{\frac{2H^2\iota}{n_h^k(s,a)}}.
\end{equation*}
This conclusion also holds for 
for $\mathcal{E}_6$ and $\mathcal{E}_7$ as $\{(\mathbb{P}_{s,a,h}-\mathbbm{1}_{s_{h+1}^{(k,m,j)_{l_i}}})(V_{h+1}^{k_{l_i}} - V_{h+1}^{\nref,k_{l_i}})\}_{i\in \mathbb{N}^+}$ is a martingale sequence with its absolute values bounded by $2H$, and $\{(\mathbb{P}_{s,a,h}-\mathbbm{1}_{s_{h+1}^{(k,m,j)_{l_i}}})(V_{h+1}^{k_{l_i}} - V_{h+1}^{\nref,k_{l_i}})^2\}_{i\in \mathbb{N}^+}$ is a martingale sequence with its absolute values bounded by $2H^2$.

Next, we will prove with probability at least $1-SAT_1p$, $\mathcal{E}_3$ holds. $\{(\mathbb{P}_{s,a,h}-\mathbbm{1}_{s_{h+1}^{(k,m,j)_{L_i}}})V_{h+1}^{\nref,k_{L_i}}\}_{i\in \mathbb{N}^+}$ is a martingale sequence bounded by $H$ . Then according to Azuma-Hoeffding inequality, for any $p\in(0,1)$, with probability at least $1-p$, it holds for a given $N_h^k(s,a) = N \in \mathbb{N}_{+}$ that:
$$\left|\sum_{i=1}^{N}\left(\mathbb{P}_{s,a,h}-\mathbbm{1}_{s_{h+1}^{(k,m,j)_{L_i}}}\right)V_{h+1}^{\nref,k_{L_i}}\right|\leq H\sqrt{2N\iota}.$$
For any $k \in [K]$, we have $N_h^k(s,a) \in [\frac{T_1}{H}]$. Considering all the possible combinations $(s,a,h,N) \in \mathcal{S} \times \mathcal{A} \times [H] \times [\frac{T_1}{H}]$, with probability at least $1-SAT_1p$, it holds simultaneously for all $(s,a,h,k) \in \mathcal{S} \times \mathcal{A} \times [H] \times [K]$ that:
$$\left|\sum_{i=1}^{N_h^k}\left(\mathbb{P}_{s,a,h}-\mathbbm{1}_{s_{h+1}^{(k,m,j)_{L_i}}}\right)V_{h+1}^{\nref,k_{L_i}}\right|\leq H\sqrt{2N_h^k(s,a)\iota}.$$
This conclusion also holds for $\mathcal{E}_4$, $\mathcal{E}_{12}$ and $\mathcal{E}_{15}$ because of the similar martingale structures as follows. For $\mathcal{E}_4$, the sequence $\{(\mathbb{P}_{s,a,h}-\mathbbm{1}_{s_{h+1}^{(k,m,j)_{L_i}}})(V_{h+1}^{\nref,k_{L_i}})^2\}_{i\in \mathbb{N}^+}$ is a martingale sequence with its absolute values bounded by $H^2$. For $\mathcal{E}_{12}$, the sequence $\{(\mathbb{P}_{s,a,h}-\mathbbm{1}_{s_{h+1}^{(k,m,j)_{L_i}}})\lambda_{h+1}^{k_{L_i}}\}_{i\in \mathbb{N}^+}$ is a martingale sequence with its absolute values bounded by $1$. For $\mathcal{E}_{15}$, the sequence $\{(\mathbb{P}_{s,a,h}-\mathbbm{1}_{s_{h+1}^{(k,m,j)_{L_i}}})(V_{h+1}^{\nref,k_{L_i}} - V_{h+1}^{\star})\}_{i\in \mathbb{N}^+}$ is a martingale sequence with its absolute values bounded by $2H$.

Now, we will prove, with probability at least $1-p$, $\mathcal{E}_8$ holds. Because of (i) in Lemma \ref{algorithm relationship}, we can append multiple 0s to the summation such that there are $T_1$ terms. Since the sequence $\{(\mathbb{P}_{s_h^{k,m,j},a_h^{k,m,j},h}-\mathbbm{1}_{s_{h+1}^{k,m,j}})\lambda_{h+1}^k(s_{h+1}^{k,m,j})\}_{h,k,m,j}$ can be reordered chronologically to a martingale sequence with its absolute values bounded by $1$, it is still a martingale sequence with its absolute values bounded by $1$ after appending some 0 terms. According to Azuma-Hoeffding inequality, for any $p\in(0,1)$, with probability at least $1-p$, it holds that:
$$\left|\sum_{h=1}^H\sum_{k,m,j}\left(\mathbb{P}_{s_h^{k,m,j},a_h^{k,m,j},h}-\mathbbm{1}_{s_{h+1}^{k,m,j}}\right)\lambda_{h+1}^k(s_{h+1}^{k,m,j})\right| \leq \sqrt{2T_1\iota}.$$
Similarly, the conclusion also holds for $\mathcal{E}_9$, $\mathcal{E}_{11}$, $\mathcal{E}_{13}$ and $\mathcal{E}_{14}$ because of their similar martingale structures as follows. For $\mathcal{E}_9$, the sequence $\{(1+2/H)^h(\mathbb{P}_{s_h^{k,m,j},a_h^{k,m,j},h}-\mathbbm{1}_{s_{h+1}^{k,m,j}})(V_{h+1}^{k}-V^{\star}_{h+1})\}_{k,m,j,h}$ can be reordered to a martingale sequence with the absolute values bounded by $18H$. For $\mathcal{E}_{11}$, the sequence $\{(1+2/H)^{h-1}\mathbb{I}[t_h^{k,m,j}>1](\mathbb{P}_{s_h^{k,m,j},a_h^{k,m,j},h}-\mathbbm{1}_{s_{h+1}^{k,m,j}})(V_{h+1}^{\star}-V_{h+1}^{\pi^k})\}_{k,m,j,h}$ can be reordered to a martingale sequence with its absolute values bounded by $9H$. For $\mathcal{E}_{13}$, the sequence $\{\mathbb{P}_{s_h^{k,m,j},a_h^{k,m,j},h}(V_{h+1}^{\star})- V_{h+1}^{\star}(s_{h+1}^{k,m,j})\}_{k,m,j,h}$ can be reordered to a martingale sequence with the absolute values bounded by $H$. For $\mathcal{E}_{14}$, the sequence $\{\mathbb{P}_{s_h^{k,m,j},a_h^{k,m,j},h}(V_{h+1}^{\star})^2- (V_{h+1}^{\star}(s_{h+1}^{k,m,j}))^2\}_{k,m,j,h}$ can be reordered to a martingale sequence with its absolute values bounded by $H^2$.

Now, we will prove with probability at least $1-SAT_1(HT_1^3+1)p$, $\mathcal{E}_2$ holds. According to the Lemma \ref{Freedman corollary} with $\epsilon = \frac{1}{T_1^2}$, $c = H$ and $p\leftarrow \frac{p}{2}$, we have that with probability at least $1-(NH^2T_1^2+1)p$, it holds for a given $N_h^k(s,a) = N \in \mathbb{N}_{+}$ that:
$$|\chi_1| \leq \frac{2}{N}\left(\sqrt{\sum_{i=1}^{N}\mathbb{V}_{s,a,h}(V_{h+1}^{\nref,k_{L_i}})\iota}+\frac{\sqrt{\iota}}{T_1}+H\iota\right),$$
For any $k \in [K]$, we have $N_h^k(s,a) \in [\frac{T_1}{H}]$. Considering all the possible combination $(s,a,h,N) \in \mathcal{S} \times \mathcal{A} \times [H] \times [\frac{T_1}{H}]$, then with probability at least $1-SAT_1(HT_1^3+1)p$, it holds simultaneously for all $(s,a,h,k) \in \mathcal{S} \times \mathcal{A} \times [H] \times [K]$ that:
\begin{equation*}
    |\chi_1| \leq \frac{2}{N_h^k(s,a)}\left(\sqrt{\sum_{i=1}^{N_h^k}\mathbb{V}_{s,a,h}(V_{h+1}^{\nref,k_{L_i}})\iota}+\frac{\sqrt{\iota}}{T_1}+H\iota\right).
\end{equation*}
Similarly, with probability at least $1-SAT_1^2(4HT_1^3+1)/Hp$, $\mathcal{E}_5$ holds.

Finally, we  will prove, with probability at least $1-SAT_1^2/Hp$, $\mathcal{E}_{10}$ holds. $V(s,a,h,t)$ is the summation for all the visits to $(s,a,h)$ in stage $t$, which is a martingale sequence with the order assigned chronologically. According to Azuma-Hoeffding Inequality, for any $p\in(0,1)$, with probability at least $1-p$, it holds for a given  $y_h^t(s,a)=y \in \mathbb{N}_+$ that:
$$\left|\sum_{k,m,j}\left(\mathbb{P}_{s,a,h}-\mathbbm{1}_{s_{h+1}^{k,m,j}}\right)\left(V_{h+1}^{k}-V^{\star}_{h+1}\right)\mathbb{I}\left[(s_h^{k,m,j},a_h^{k,m,j})=(s,a),t_h^k(s,a) = t\right]\right|\leq 2H\sqrt{2y\iota}.$$
For any $t \in [T_h(s,a)]$, $y_h^t(s,a)\in [\frac{T_1}{H}]$. Considering all combination of $(s,a,h,y) \in \mathcal{S}\times \mathcal{A}\times H \times [\frac{T_1}{H}]$, with probability at least $1-SAT_1^2/Hp$, it holds simultaneously  for any $(s,a,h) \in \mathcal{S}\times \mathcal{A}\times H$ and any $t \in [T_1/H]$ that:
\begin{equation*}
    \left|V(s,a,h,t)\right|\leq 2H\sqrt{2y_h^t(s,a)\iota}.
\end{equation*}
\end{proof}

\section{Proof of Theorem \ref{thm_regret_advantage}}\label{sec:proof_regret}
In this section, we provide the proof of Theorem \ref{thm_regret_advantage}. Throughout this section, we will discuss under the event $\bigcap_{i=1}^{15}\mathcal{E}_i$ and show
\begin{align}\label{regret_upper_advantage}
       &\textnormal{Regret}(T) \nonumber\\
    & \leq O((1+\sqrt{\beta}+\beta)\sqrt{MSAH^2T\iota}+H\sqrt{MT\iota}\log(T)+H\sqrt{MT\iota}\log(MSAH^2) \nonumber\\    &\quad+M^{\frac{1}{4}}SAH^{\frac{11}{4}}T^{\frac{1}{4}}\iota^{\frac{3}{4}}+SH^2N_0\log(T)+\sqrt{MSA}H^2\log(T)\sqrt{\iota}+S^{\frac{3}{2}}AH^3\sqrt{N_0}\log(T)\iota \nonumber\\
    & \quad+SAH^{\frac{5}{2}}(M\iota)^{\frac{1}{2}}+MSAH^2\sqrt{\iota}+ MSAH^2\sqrt{\beta\iota}+MSAH^2\sqrt{\beta^2 \iota}+M^{\frac{1}{4}}S^{\frac{5}{4}}A^{\frac{5}{4}}H^{\frac{13}{4}}\iota^{\frac{3}{4}} \nonumber\\
    &\quad +S^{\frac{3}{2}}AH^3\sqrt{N_0}\log(MSAH^2)\iota+ SH^2N_0\log(MSAH^2) + \sqrt{MSA}H^2\log(MSAH^2)\sqrt{\iota}),
  \end{align}

where $\mathcal{E}_i$s are the events in Lemma \ref{concentrations} which shows that $\mathbb{P}(\bigcap_{i=1}^{15}\mathcal{E}_i)\geq 1-(4SAT_1^5+SAHT_1^4+5SAT_1^2/H+5SAT_1+5)p$. Thus, showing \Cref{regret_upper_advantage} will complete the proof.
Before we start, we introduce some stage-wise notations.
Let $\tilde{\mu}_h^{\nref,k}(s,a) = \sum_{k':t_h^{k'}<t_h^k}\sum_m \mu_{h,\nref}^{m,k'}$, $\tilde{\sigma}_h^{\nref,k}(s,a) = \sum_{k':t_h^{k'}<t_h^k}\sum_m \sigma_{h,\nref}^{m,k'}$, $\tilde{\mu}_h^{\adv,k}(s,a) = \sum_{k':t_h^{k'}= t_h^k-1}\sum_m \mu_{h,\adv}^{m,k'}$, $\tilde{\sigma}_h^{\adv,k}(s,a) = \sum_{k':t_h^{k'}= t_h^k-1}\sum_m \sigma_{h,\adv}^{m,k'}$, $\tilde{\mu}_h^{\textnormal{val},k}(s,a) = \sum_{k':t_h^{k'} = t_h^k-1}\sum_m \mu_{h,\textnormal{val}}^{m,k'}$, $\tilde{v}_h^{\nref,k} = \frac{\tilde{\sigma}_h^{\nref,k}}{N_h^k} - (\frac{\tilde{\mu}_h^{\nref,k}}{N_h^k})^2$ and $\tilde{v}_h^{\adv,k} = \frac{\tilde{\sigma}_h^{\adv,k}}{n_h^k} - (\frac{\tilde{\mu}_h^{\adv,k}}{n_h^k})^2$. Here, $\tilde{\mu}_h^{\nref,k}$ and $\tilde{\sigma}_h^{\nref,k}$ represent the sum of the reference function or squared reference function at step $h+1$ with regard to all visits of $(s,a,h)$ before stage $t_h^k(s,a)$, and $\tilde{\mu}_h^{\adv,k},\tilde{\sigma}_h^{\adv,k},\tilde{\mu}_h^{\textnormal{val},k}$ are the sum of the advantage function, squared advantage function, and the estimated value function at step $h+1$ with regard to visits of $(s,a,h)$ during stage $t_h^k(s,a)-1$. Using the definition of $L_i(s,a,h)$ and $l_i(s,a,h,k)$, we have the following equalities:
$$\tilde{\mu}_h^{\nref,k}(s,a) = \sum_{i=1}^{N_h^k} V_{h+1}^{\nref,k_{L_i}}(s_{h+1}^{(k,m,j)_{L_i}}), \quad \tilde{\sigma}_h^{\nref,k}(s,a) = \sum_{i=1}^{N_h^k} \left(V_{h+1}^{\nref,k_{L_i}}(s_{h+1}^{(k,m,j)_{L_i}})\right)^2,$$
$$\tilde{\mu}_h^{\adv,k}(s,a) = \sum_{i=1}^{n_h^k} (V_{h+1}^{k_{l_i}}-V_{h+1}^{\nref,k_{l_i}})(s_{h+1}^{(k,m,j)_{l_i}}),\  \tilde{\sigma}_h^{\adv,k}(s,a) = \sum_{i=1}^{n_h^k} (V_{h+1}^{k_{l_i}}-V_{h+1}^{\nref,k_{l_i}})^2(s_{h+1}^{(k,m,j)_{l_i}}),$$
$$\tilde{\mu}_h^{\textnormal{val},k}(s,a) = \sum_{i=1}^{n_h^k} V_{h+1}^{k_{l_i}}(s_{h+1}^{(k,m,j)_{l_i}}).$$
We also denote 
    $$\tilde{b}_h^{k+1,1} = b_h^{k+1,1} =\sqrt{2H^2\iota/n_h^{k+1}},$$ 
\begin{align*}
\tilde{b}_h^{k+1,2}(s,a) &= 2\sqrt{\tilde{v}_h^{\nref,k+1}/N_h^{k+1}} + 2\sqrt{\tilde{v}_h^{\adv,k+1}/n_h^{k+1}}\\
&+ 10H\left((\iota/N_h^{k+1})^{3/4}+(\iota/n_h^{k+1})^{3/4} + \iota/N_h^{k+1} + \iota/n_h^{k+1}\right),
\end{align*}
and
\begin{equation*}
    \tilde{Q}_h^{k+1,1}(s,a) =r_h(s,a)+\tilde{\mu}_h^{\textnormal{val}, k+1}/n_h^{k+1}+\tilde{b}_h^{k+1,1}(s,a),
\end{equation*}
\begin{equation*}
\tilde{Q}_h^{k+1,2}(s,a) = r_h(s,a)+\tilde{\mu}_h^{\nref,k+1}/N_h^{k+1}+\tilde{\mu}_h^{\adv,k+1}/n_h^{k+1}+\tilde{b}_h^{k+1,2}(s,a).
\end{equation*}
For $k\in\mathbb{N}_+$ such that $t_h^{k+1} > t_h^k$, we have the following relationships:
$$\tilde{\mu}_h^{\adv,k+1}(s,a) = \mu_h^{\adv,k+1}(s,a),$$ 
$$ \tilde{\sigma}_h^{\adv,k+1}(s,a) = \sigma_h^{\adv,k+1}(s,a),$$
$$\tilde{\mu}_h^{\textnormal{val},k+1}(s,a) = \mu_h^{\textnormal{val},k+1}(s,a).$$
In this case, we have $\tilde{Q}_h^{k+1,1}(s,a) = Q_h^{k+1,1}(s,a)$ and $\tilde{Q}_h^{k+1,2}(s,a) = Q_h^{k+1,2}(s,a)$. Therefore, based on the update rule \Cref{eq_stage_update}, for $t_h^{k+1}(s,a) > t_h^k(s,a)$, we have $Q_h^{k+1}(s,a) = \min\{\tilde{Q}_h^{k+1,1}(s,a),\tilde{Q}_h^{k+1,2}(s,a),Q_h^k(s,a)\}$. Since these stage-wise notations $\tilde{\mu}_h^{\nref,k}$, $\tilde{\mu}_h^{\adv,k}$ and $\tilde{\mu}_h^{\textnormal{val},k}$ have the same value for different rounds in the same stage, for $t_h^{k+1}(s,a) = t_h^k(s,a)$, we have $\tilde{Q}_h^{k+1,1}(s,a) = \tilde{Q}_h^{k,1}(s,a)$ and $\tilde{Q}_h^{k+1,2}(s,a)=\tilde{Q}_h^{k,2}(s,a)$. According to the update rule \Cref{eq_stage_update}, in this case we have $Q_h^{k+1}(s,a) = Q_h^k(s,a)$. In each stage, using mathematical induction, we can find that for any $k \in \mathbb{N}_+$, it holds:
\begin{equation}
\label{eq_stage_newupdate}
    Q_h^{k+1}(s,a) =\mathbb{I}\left[t_h^{k+1}=1\right]H+\mathbb{I}\left[t_h^{k+1}>1\right] \min\{\tilde{Q}_h^{k+1,1}(s,a),\tilde{Q}_h^{k+1,2}(s,a),Q_h^k(s,a)\}.
\end{equation}

Here, $t_h^{k+1}$ is the abbreviation of $t_h^{k+1}(s,a)$. Since $Q_h^k(s,a)$ is non-increasing with respect to $k$, in the following Lemma \ref{Q}, we will give a lower bound of $Q_h^k(s,a)$.

\begin{lemma}
\label{Q}
    Under the event $\bigcap_{i=1}^{7}\mathcal{E}_i$ in Lemma \ref{concentrations}, it holds that for any $(s,a,h,k) \in  \mathcal{S} \times \mathcal{A} \times [H]\times[K]$:
    $$Q_h^k(s,a) \geq Q_h^{\star}(s,a).$$
    Then we have $V_h^k(s) \geq V_h^\star(s)$ for any $(s,a,h,k) \in  \mathcal{S} \times \mathcal{A} \times [H]\times[K]$.
\end{lemma}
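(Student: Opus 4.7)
The strategy is strong induction on the round index $k$, with the inductive statement being that $Q_h^{k'}(s,a) \geq Q_h^\star(s,a)$ for all $(s,a,h)\in\sah$ and all $k' \leq k$. Given this, the claim $V_h^k(s) \geq V_h^\star(s)$ follows instantly since $V_h^k(s)=\max_{a'}Q_h^k(s,a')\geq \max_{a'}Q_h^\star(s,a')=V_h^\star(s)$. Moreover, the reference update \Cref{eq_update_ref} together with $V_h^{\nref,1}=H$ and the induction hypothesis then gives $V_h^{\nref,k}(s)\geq V_h^\star(s)$ throughout, because $V_h^{\nref,k}$ is either $H$ or a past $V_h^{k_{s,h}+1}$. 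The base case $k=1$ is immediate since $Q_h^1\equiv H\geq Q_h^\star$.

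For the inductive step, I appeal to the unified recursion \Cref{eq_stage_newupdate}. If $t_h^{k+1}=1$, then $Q_h^{k+1}=H\geq Q_h^\star$ trivially. Otherwise $Q_h^{k+1}=\min\{\tilde Q_h^{k+1,1},\tilde Q_h^{k+1,2},Q_h^k\}$, and the third member dominates $Q_h^\star$ by the induction hypothesis. For the Hoeffding candidate $\tilde Q_h^{k+1,1}$, I would write
\[
\tilde Q_h^{k+1,1}-Q_h^\star = \frac{1}{n_h^{k+1}}\sum_{i=1}^{n_h^{k+1}}\left(V_{h+1}^{k_{l_i}}-V_{h+1}^\star\right)(s_{h+1}^{(k,m,j)_{l_i}}) + \frac{1}{n_h^{k+1}}\sum_{i=1}^{n_h^{k+1}}\left(V_{h+1}^\star(s_{h+1}^{(k,m,j)_{l_i}})-\mathbb{P}_{s,a,h}V_{h+1}^\star\right)+\tilde b_h^{k+1,1},
\]
where the first sum is nonnegative by the inductive $V$-inequality and the second is controlled by $\mathcal{E}_1$ and absorbed by $\tilde b_h^{k+1,1}=\sqrt{2H^2\iota/n_h^{k+1}}$.

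The main obstacle is the reference--advantage candidate $\tilde Q_h^{k+1,2}$. I would decompose
\[
\tilde Q_h^{k+1,2}-Q_h^\star = (-\chi_1)+(-\chi_2)+\Delta+\tilde b_h^{k+1,2},
\]
with $\chi_1,\chi_2$ the centered sampling errors featured in $\mathcal{E}_2$ and $\mathcal{E}_5$, and
\[
\Delta = \frac{1}{N_h^{k+1}}\sum_{i=1}^{N_h^{k+1}}\mathbb{P}_{s,a,h}V_{h+1}^{\nref,k_{L_i}} + \frac{1}{n_h^{k+1}}\sum_{i=1}^{n_h^{k+1}}\mathbb{P}_{s,a,h}\left(V_{h+1}^{k_{l_i}}-V_{h+1}^{\nref,k_{l_i}}\right) - \mathbb{P}_{s,a,h}V_{h+1}^\star.
\]
The key observation is $\Delta\geq 0$: using the inductive hypothesis $\mathbb{P}_{s,a,h}V_{h+1}^{k_{l_i}}\geq \mathbb{P}_{s,a,h}V_{h+1}^\star$ together with the monotonicity of $V_{h+1}^{\nref,k}$ in $k$ (item (j) of Lemma \ref{algorithm relationship}), the $\{l_i\}$ being the most recent $n_h^{k+1}$ indices inside $\{L_i\}$ forces $\frac{1}{N_h^{k+1}}\sum_{L_i}\mathbb{P}_{s,a,h}V_{h+1}^{\nref,k_{L_i}}\geq \frac{1}{n_h^{k+1}}\sum_{l_i}\mathbb{P}_{s,a,h}V_{h+1}^{\nref,k_{l_i}}$, and the remaining $\mathbb{P}_{s,a,h}V_{h+1}^{k_{l_i}}$ majorizes $\mathbb{P}_{s,a,h}V_{h+1}^\star$.

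It then remains to show the UCB $\tilde b_h^{k+1,2}$ dominates $|\chi_1|+|\chi_2|$. This is the technical heart. From $\mathcal{E}_2$ and $\mathcal{E}_5$, $|\chi_1|$ and $|\chi_2|$ are bounded in terms of the \emph{true} averaged variances $\frac{1}{N_h^{k+1}}\sum_i\mathbb{V}_{s,a,h}(V_{h+1}^{\nref,k_{L_i}})$ and $\frac{1}{n_h^{k+1}}\sum_i\mathbb{V}_{s,a,h}(V_{h+1}^{k_{l_i}}-V_{h+1}^{\nref,k_{l_i}})$ plus lower-order $\sqrt{\iota}/T_1$ and $H\iota$ remainders. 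To match $\tilde b_h^{k+1,2}$ (which uses the \emph{empirical} variances $\tilde v_h^{\nref,k+1}$ and $\tilde v_h^{\adv,k+1}$), I would compare the two by writing each variance as a second moment minus a squared first moment, apply $\mathcal{E}_4$ (resp.\ $\mathcal{E}_7$) to swap true and empirical second moments at cost $O(H^2\sqrt{\iota/N_h^{k+1}})$, and apply $\mathcal{E}_3$ (resp.\ $\mathcal{E}_6$) combined with boundedness $\|V^{\nref}\|_\infty,\|V-V^{\nref}\|_\infty\leq H$ to swap the squared means at the same order. Taking a square root turns that $O(H^2\sqrt{\iota/N})$ slack into an $O(H(\iota/N)^{3/4})$ contribution, which is precisely what the $10H((\iota/N_h^{k+1})^{3/4}+(\iota/n_h^{k+1})^{3/4}+\iota/N_h^{k+1}+\iota/n_h^{k+1})$ tail of $\tilde b_h^{k+1,2}$ is built to absorb. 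Putting $\Delta\geq 0$ together with $-\chi_1-\chi_2+\tilde b_h^{k+1,2}\geq 0$ yields $\tilde Q_h^{k+1,2}\geq Q_h^\star$, and the minimum of all three candidates then dominates $Q_h^\star$, completing the induction.
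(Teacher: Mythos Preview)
Your proposal is correct and follows essentially the same route as the paper's proof: induction on $k$, the Hoeffding candidate handled via $\mathcal{E}_1$, the reference--advantage candidate split as $-\chi_1-\chi_2+\Delta+\tilde b_h^{k+1,2}$ with $\Delta\geq 0$ obtained from monotonicity of $V^{\nref}$ (Lemma~\ref{algorithm relationship}(j)) and the inductive $V^{k_{l_i}}\geq V^\star$, and finally $\tilde b_h^{k+1,2}\geq |\chi_1|+|\chi_2|$ via the empirical-versus-true variance comparison through $\mathcal{E}_3,\mathcal{E}_4,\mathcal{E}_6,\mathcal{E}_7$ plugged into $\mathcal{E}_2,\mathcal{E}_5$. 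The paper additionally notes that in the squared-mean swap the term $\frac{1}{N}\bigl(\sum_i\mathbb{P}_{s,a,h}V_{h+1}^{\nref,k_{L_i}}\bigr)^2-\sum_i\bigl(\mathbb{P}_{s,a,h}V_{h+1}^{\nref,k_{L_i}}\bigr)^2$ is nonpositive by Cauchy--Schwarz, which helps with the direction of the inequality; your sketch implicitly absorbs this, but it is worth spelling out when you write the details.
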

\begin{proof}
We first claim that based on the event $\mathcal{E}_3 \cap \mathcal{E}_4$ in Lemma \ref{concentrations}, it holds for any $(s,a,h,k) \in \mathcal{S} \times \mathcal{A} \times [H] \times [K]$ that
\begin{equation}
\label{V1}
\sum_{i=1}^{N_h^k}\mathbb{V}_{s,a,h}(V_{h+1}^{\nref,k_{L_i}}) \leq N_h^k(s,a)\tilde{v}_h^{\nref,k}(s,a) + 5H^2\sqrt{N_h^k(s,a)\iota},
\end{equation}
and based on the event $\mathcal{E}_6 \cap \mathcal{E}_7$, for any $(s,a,h,k) \in \mathcal{S} \times \mathcal{A} \times [H] \times [K]$, we have:
\begin{equation}
\label{v2}
    \sum_{i=1}^{n_h^k}\mathbb{V}_{s,a,h}(V_{h+1}^{k_{l_i}} - V_{h+1}^{\nref,k_{l_i}}) \leq n_h^k(s,a)\tilde{v}_h^{\adv,k}(s,a) + 10H^2\sqrt{n_h^k(s,a)\iota}.
\end{equation} 
We will prove \Cref{V1} and  \Cref{v2} at the end of the proof for Lemma \ref{Q}. Combining \Cref{V1} with the event $\mathcal{E}_2$,  for any $(s,a,h,k) \in \mathcal{S} \times \mathcal{A} \times [H] \times [K]$, we have:
\begin{align*}
    |\chi_1| &\leq \frac{2}{N_h^k(s,a)}\left(\sqrt{\sum_{i=1}^{N_h^k}\mathbb{V}_{s,a,h}(V_{h+1}^{\nref,k_{L_i}})\iota}+\frac{\sqrt{\iota}}{T_1}+H\iota\right) \nonumber\\
    &\leq \frac{2}{N_h^k(s,a)}\left(\sqrt{N_h^k(s,a)\tilde{v}_h^{\nref,k}(s,a)\iota}+\sqrt{5H^2\iota\sqrt{N_h^k(s,a)\iota}}+\frac{\sqrt{\iota}}{T_1}+H\iota\right) \nonumber\\
    &= 2\left(\sqrt{\frac{\tilde{v}_h^{\nref,k}(s,a)\iota}{N_h^k(s,a)}}+\frac{\sqrt{5}H\iota^{\frac{3}{4}}}{N_h^k(s,a)^{\frac{3}{4}}} + \frac{\sqrt{\iota}}{N_h^k(s,a)T_1}+\frac{H\iota}{N_h^k(s,a)}\right) \nonumber\\
    &\leq 2\sqrt{\frac{\tilde{v}_h^{\nref,k}(s,a)\iota}{N_h^k(s,a)}}+ 5H\left(\frac{\iota}{N_h^k(s,a)}\right)^{\frac{3}{4}}+4H\frac{\iota}{N_h^k(s,a)}.
\end{align*}

Similarly, combining \Cref{v2} with the event $\mathcal{E}_5$ in Lemma \ref{concentrations}, for any $(s,a,h,k) \in \mathcal{S} \times \mathcal{A} \times [H] \times [K]$, we have:
\begin{align*}
    |\chi_2|
    \leq 2\sqrt{\frac{\tilde{v}_h^{\adv,k}(s,a)\iota}{n_h^k(s,a)}}+ 10H\left(\frac{\iota}{n_h^k(s,a)}\right)^{\frac{3}{4}}+6H\frac{\iota}{n_h^k(s,a)}.
\end{align*}
Therefore, according to the definition of $\tilde{b}_h^{k,2}(s,a)$, for any $(s,a,h,k) \in \mathcal{S} \times \mathcal{A} \times [H] \times [K]$, it holds that:
\begin{equation}
\label{b-tilde}
    \tilde{b}_h^{k,2}(s,a) \geq |\chi_1|+|\chi_2|.
\end{equation}
Now we use mathematical induction on $k$ to prove $Q_h^k(s,a) \geq Q_h^{\star}(s,a)$ for any $(s,a,h,k) \in \mathcal{S} \times \mathcal{A} \times [H] \times [K]$. For $k=1$, $Q_h^1(s,a) =H \geq Q_h^{\star}(s,a)$ for any $(s,a,h) \in \mathcal{S} \times \mathcal{A} \times [H]$. For $k \geq 2$, assume we already have $Q_h^{k^\prime}(s,a) \geq Q_h^{\star}(s,a)$ for any $(s,a,h,k^\prime) \in \mathcal{S} \times \mathcal{A} \times [H] \times [k-1]$, then we will prove for any $(s,a,h) \in \mathcal{S} \times \mathcal{A} \times [H]$, $Q_h^{k}(s,a) \geq Q_h^{\star}(s,a)$. According to \Cref{eq_stage_newupdate}, the following relationship holds: 
$$ Q_h^{k}(s,a) =\mathbb{I}\left[t_h^{k}=1\right]H+\mathbb{I}\left[t_h^{k}>1\right] \min\{\tilde{Q}_h^{k,1}(s,a),\tilde{Q}_h^{k,2}(s,a),Q_h^{k-1}(s,a)\}.$$
Then for any given $(s,a,h) \in \mathcal{S} \times \mathcal{A} \times [H]$, we have the following four cases:

(a) If $t_h^{k}(s,a)=1$, then $Q_h^k(s,a) =H \geq Q_h^\star(s,a)$.

(b) If $t_h^{k}(s,a)>1$ and $Q_h^k(s,a)= Q_h^{k-1}(s,a)$, then the conclusion holds. 

(c) If $t_h^{k}(s,a)>1$ and $Q_h^k(s,a) = \tilde{Q}_h^{k,1}(s,a) =r_h(s,a)+\tilde{\mu}_h^{\textnormal{val}, k}/n_h^k+\tilde{b}_h^{k,1}(s,a)$.

Because of \Cref{eq_Bellman}, we have the following equality:
$$Q_h^{\star}(s,a) = r_h(s,a)+\mathbb{P}_{s,a,h}V_{h+1}^{\star}.$$
Then we have:
\begin{align}
\label{Q1 middle}
    Q_h^k(s,a)-Q_h^{\star}(s,a) &=\tilde{\mu}_h^{\textnormal{val}, k}/n_h^k+\tilde{b}_h^{k,1}(s,a)-\mathbb{P}_{s,a,h}V_{h+1}^{\star} \nonumber\\
    &= \frac{1}{n_h^k}\sum_{i=1}^{n_h^k}\left(V_{h+1}^{k_{l_i}}(s_{h+1}^{(k,m,j)_{l_i}})-\mathbb{P}_{s,a,h}V_{h+1}^{\star}\right)+\tilde{b}_h^{k,1}(s,a).
\end{align}
According to the definition of $l_i(s,a,h,k)$, we know $k_{l_i} < k$ for $i \in [n_h^k(s,a)]$. Then $Q_h^{k_{l_i}}(s,a) \geq Q_h^{\star}(s,a)$ based on the induction. Therefore, according to the update rule \Cref{v_pi-update} and \Cref{eq_Bellman}, for any $(s,h) \in \mathcal{S}\times [H]$ and any $i \in [n_h^k(s,a)]$, we have: 
\begin{equation}
\label{Vstar}
    V_{h+1}^{k_{l_i}}(s) = \max_{a \in \mathcal{A}}Q_{h+1}^{k_{l_i}}(s,a) \geq \max_{a \in \mathcal{A}} Q_{h+1}^{\star}(s,a) = V_{h+1}^\star(s),
\end{equation} 
and for any $i \in [n_h^k(s,a)]$ it holds:
\begin{equation}
\label{V-l}
    \mathbb{P}_{s,a,h}V_{h+1}^{k_{l_i}} \geq \mathbb{P}_{s,a,h}V_{h+1}^{\star}.
\end{equation}
Combining \Cref{Q1 middle} and \Cref{Vstar}, we have:
\begin{align*}
    Q_h^k(s,a)-Q_h^{\star}(s,a) \geq \frac{1}{n_h^k(s,a)}\sum_{i=1}^{n_h^k}\left(V_{h+1}^{\star}(s_{h+1}^{(k,m,j)_{l_i}})-\mathbb{P}_{s,a,h}V_{h+1}^{\star}\right)+\tilde{b}_h^{k,1}(s,a)
    \geq 0.
\end{align*}
The last inequality is because $\tilde{b}_h^{k,1} = \sqrt{2H^2\iota/n_h^k}$ and the event $\mathcal{E}_1$ in Lemma \ref{concentrations}.

(d) If $t_h^{k}(s,a)>1$ and $Q_h^k(s,a) = \tilde{Q}_h^{k,2}(s,a) = r_h(s,a)+\tilde{\mu}_h^{\nref,k+1}/N_h^{k+1}+\tilde{\mu}_h^{\adv,k+1}/n_h^{k+1}+\tilde{b}_h^{k,2}(s,a)$. We have that
\begin{align}
\label{Q2 middle}
    &Q_h^k(s,a)-Q_h^{\star}(s,a) \nonumber\\   &=\tilde{\mu}_h^{\nref,k}/N_h^k+\tilde{\mu}_h^{\adv,k}/n_h^k+\tilde{b}_h^{k,2}(s,a)-\mathbb{P}_{s,a,h}V_{h+1}^{\star} \nonumber\\
    &= \frac{\sum_{i=1}^{N_h^k}V_{h+1}^{\nref,k_{L_i}}(s_{h+1}^{(k,m,j)_{L_i}})}{N_h^k(s,a)}+\frac{\sum_{i=1}^{n_h^k}(V_{h+1}^{k_{l_i}}-V_{h+1}^{\nref,k_{l_i}})(s_{h+1}^{(k,m,j)_{l_i}})}{n_h^k(s,a)}+\tilde{b}_h^{k,2}(s,a)-\mathbb{P}_{s,a,h}V_{h+1}^{\star} \nonumber\\
    &=\frac{\sum_{i=1}^{N_h^k}\mathbb{P}_{s,a,h}V_{h+1}^{\nref,k_{L_i}}}{N_h^k(s,a)}+\frac{\sum_{i=1}^{n_h^k}\mathbb{P}_{s,a,h}(V_{h+1}^{k_{l_i}}-V_{h+1}^{\nref,k_{l_i}})}{n_h^k(s,a)}-\mathbb{P}_{s,a,h}V_{h+1}^{\star}+\tilde{b}_h^{k,2}(s,a)-\chi_1-\chi_2 \nonumber\\
    &= \left(\frac{\sum_{i=1}^{N_h^k}\mathbb{P}_{s,a,h}V_{h+1}^{\nref,k_{L_i}}}{N_h^k(s,a)}-\frac{\sum_{i=1}^{n_h^k}\mathbb{P}_{s,a,h}V_{h+1}^{\nref,k_{l_i}}}{n_h^k(s,a)}\right) +\frac{\sum_{i=1}^{n_h^k}\mathbb{P}_{s,a,h}V_{h+1}^{k_{l_i}}-\mathbb{P}_{s,a,h}V_{h+1}^{\star}}{n_h^k(s,a)}\\
    &\quad+ \left(\tilde{b}_h^{k,2}(s,a)-\chi_1-\chi_2\right).\nonumber
\end{align}
As $V_{h+1}^{\nref,k}$ is non-increasing with regard to $k$ based on (j) in Lemma \ref{algorithm relationship}, we have:
\begin{equation}
\label{N-ref}
    \frac{1}{N_h^k(s,a)}\sum_{i=1}^{N_h^k}\mathbb{P}_{s,a,h}V_{h+1}^{\nref,k_{L_i}}\geq \frac{1}{n_h^k(s,a)}\sum_{i=1}^{n_h^k}\mathbb{P}_{s,a,h} V_{h+1}^{\nref,k_{l_i}}.
\end{equation}
Based on \Cref{N-ref}, \Cref{V-l}, and \Cref{b-tilde}, we know each term in \Cref{Q2 middle} is nonnegative. Therefore, in this case $Q_h^k(s,a)-Q_h^{\star}(s,a)\geq0$.

In summary, we prove the conclusion that $Q_h^k(s,a) \geq Q_h^{\star}(s,a)$ for any $(s,a,h,k) \in \mathcal{S} \times \mathcal{A} \times [H] \times [K]$. The only thing left is to prove \Cref{V1} and \Cref{v2}.
\begin{proof}[{\bf Proof of \Cref{V1} and \Cref{v2}}]
For any given $(s,a,h,k) \in \mathcal{S} \times \mathcal{A} \times [H] \times [K]$, let:
\begin{equation}
\label{chi3}
    \chi_3(s,a,h,k) = \sum_{i=1}^{N_h^k}\left(\mathbb{P}_{s,a,h}-\mathbbm{1}_{s_{h+1}^{(k,m,j)_{L_i}}}\right)\left(V_{h+1}^{\nref,k_{L_i}}\right)^2,
\end{equation}
\begin{equation}
\label{chi4}
    \chi_4(s,a,h,k) =\frac{\left(\sum_{i=1}^{N_h^k}V_{h+1}^{\nref,k_{L_i}}(s_{h+1}^{(k,m,j)_{L_i}})\right)^2}{N_h^k(s,a)}-\frac{\left(\sum_{i=1}^{N_h^k}\mathbb{P}_{s,a,h}V_{h+1}^{\nref,k_{L_i}}\right)^2}{N_h^k(s,a)},
\end{equation}
\begin{equation}
\label{chi5}
    \chi_5(s,a,h,k) =\frac{\left(\sum_{i=1}^{N_h^k}\mathbb{P}_{s,a,h}V_{h+1}^{\nref,k_{L_i}}\right)^2}{N_h^k(s,a)}-\sum_{i=1}^{N_h^k}\left(\mathbb{P}_{s,a,h}V_{h+1}^{\nref,k_{L_i}}\right)^2.
\end{equation}
Without ambiguity, we will use the abbreviations $\chi_3$, $\chi_4$, and $\chi_5$ in the following proof.

First, we focus on bounding $|\chi_3|$. Using the definition of $\tilde{v}_h^{\nref,k}(s,a)$, we have:
\begin{equation}
\label{Nv}
    N_h^k(s,a)\tilde{v}_h^{\nref,k}(s,a) = \sum_{i=1}^{N_h^k}\left(V_{h+1}^{\nref,k_{L_i}}(s_{h+1}^{(k,m,j)_{L_i}})\right)^2-\frac{\left(\sum_{i=1}^{N_h^k}V_{h+1}^{\nref,k_{L_i}}(s_{h+1}^{(k,m,j)_{L_i}})\right)^2}{N_h^k(s,a)}.
\end{equation}
Summing \Cref{chi3}, \Cref{chi4}, \Cref{chi5} and \Cref{Nv}, we can find that:
\begin{align}
    \label{chi345}
        \sum_{i=1}^{N_h^k}\mathbb{V}_{s,a,h}(V_{h+1}^{\nref,k_{L_i}}) &= \sum_{i=1}^{N_h^k}\mathbb{P}_{s,a,h}(V_{h+1}^{\nref,k_{L_i}})^2-\sum_{i=1}^{N_h^k}\left(\mathbb{P}_{s,a,h}V_{h+1}^{\nref,k_{L_i}}\right)^2 \nonumber\\
        &= N_h^k(s,a)\tilde{v}_h^{\nref,k}+ \chi_3+\chi_4+\chi_5.
\end{align}
Because of the event $\mathcal{E}_4$ in Lemma \ref{concentrations}, we know:
\begin{equation}
\label{chi_3_upper}
    |\chi_3| \leq H^2\sqrt{2N_h^k(s,a)\iota}.
\end{equation}

Next, we focus on bounding $|\chi_4|$. Using the absolute value inequality, it holds that:
\begin{align*}
    &\left|\sum_{i=1}^{N_h^k}\left(V_{h+1}^{\nref,k_{L_i}}(s_{h+1}^{(k,m,j)_{L_i}})+\mathbb{P}_{s,a,h}V_{h+1}^{\nref,k_{L_i}}\right)\right| \\
    &\leq \sum_{i=1}^{N_h^k}\left(\left|V_{h+1}^{\nref,k_{L_i}}(s_{h+1}^{(k,m,j)_{L_i}})\right|+\left|\mathbb{P}_{s,a,h}V_{h+1}^{\nref,k_{L_i}}\right|\right) \leq 2H.
\end{align*}
Then we have: :
\begin{align}
\label{chi_4 upper}
    |\chi_4| &= \frac{1}{N_h^k(s,a)}\left|\sum_{i=1}^{N_h^k}\left(V_{h+1}^{\nref,k_{L_i}}(s_{h+1}^{(k,m,j)_{L_i}})+\mathbb{P}_{s,a,h}V_{h+1}^{\nref,k_{L_i}}\right)\right|\times\nonumber\\
    &\left|\sum_{i=1}^{N_h^k}\left(V_{h+1}^{\nref,k_{L_i}}(s_{h+1}^{(k,m,j)_{L_i}})-\mathbb{P}_{s,a,h}V_{h+1}^{\nref,k_{L_i}}\right)\right| \nonumber\\
    &\leq 2H\left|\sum_{i=1}^{N_h^k}\left(\mathbb{P}_{s,a,h}-\mathbbm{1}_{s_{h+1}^{(k,m,j)_{L_i}}}\right)V_{h+1}^{\nref,k_{L_i}}\right| \leq 2H^2\sqrt{2N_h^k(s,a)\iota}.
\end{align}
The last inequality is because of the event $\mathcal{E}_3$ in Lemma \ref{concentrations}.

For $\chi_5$, according to the Cauchy-Schwarz Inequality, we have $\chi_5\leq 0$.

Applying the upper bound of $\chi_3$ \Cref{chi_3_upper}, $\chi_4$ \Cref{chi_4 upper} and $\chi_5$ to \Cref{chi345}, we have:
$$\sum_{i=1}^{N_h^k}\mathbb{V}_{s,a,h}(V_{h+1}^{\nref,k_{L_i}}) \leq N_h^k(s,a)\tilde{v}_h^{\nref,k} + 5H^2\sqrt{N_h^k(s,a)\iota},$$
Then we finish the proof of \Cref{V1}. The proof for \Cref{v2} is similar, in which we just need to substitute $N_h^k(s,a)$ with $n_h^k(s,a)$ and $H^2$ with $2H^2$.
\end{proof}
\end{proof}

With Lemma \ref{Q}, \Cref{v_pi-update} and \Cref{eq_Bellman}, for any $(s,h,k) \in  \mathcal{S} \times [H]\times[K]$, we have:
\begin{equation}\label{vhk_large}
    V_h^k(s) = \max_{a' \in \mathcal{A}}Q_h^k(s,a') \geq \max_{a' \in \mathcal{A}} Q_h^{\star}(s,a') = V_h^\star(s).
\end{equation}

The following lemma gives a viable value of $N_0$ to learn the reference function $V_h^{\nref,k}(s)$. Denote $V_h^{\REF}(s) = V_h^{\nref,K+1}(s)$ as the final value of the reference function $V_h^{\nref,k}(s)$.
\begin{lemma}
\label{N_0} Under the event $\bigcap_{i=1}^{7}\mathcal{E}_i$ in Lemma \ref{concentrations}, it holds for any $h \in [H]$ and $\beta\in (0,H]$ that:
$$\sum_{k,m,j}\mathbb{I}\left[V_h^k(s_h^{k,m,j})-V_h^{\star}(s_h^{k,m,j})\geq \beta\right] < 5184\frac{SAH^5\iota}{\beta^2} + 16\frac{MSAH^3}{\beta}.$$
In addition, letting
$$N_0 = 5184\frac{SAH^5\iota}{\beta^2} + 16\frac{MSAH^3}{\beta},\beta\in (0,H],$$ 
we have that for any $(s,h)\in\mathcal{S}\times [H]$,
$$V_h^{\REF}(s) = H \text{ or } V_h^{\star}(s) \leq V_h^{\REF}(s) \leq V_h^{\star}(s) +\beta.$$
\end{lemma}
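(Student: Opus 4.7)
The second statement follows from the first by monotonicity. If $V_h^{\REF}(s)\neq H$, the reference at $(s,h)$ was updated at some round $k_{s,h}$, the first satisfying $\sum_{a'} N_h^{k_{s,h}+1}(s,a')\geq N_0$; this counter registers at least $N_0$ completed-stage visits to $s$ at step $h$ across all rounds $k\leq k_{s,h}$. Applying the first statement at step $h$, fewer than $N_0$ of these visits can satisfy $V_h^{k}(s)-V_h^\star(s)\geq \beta$, so at least one visit $(k',m,j)$ with $s_h^{k',m,j}=s$ and $k'\leq k_{s,h}$ must have $V_h^{k'}(s)<V_h^\star(s)+\beta$. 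By monotonicity of $V_h^k(s)$ in $k$ (Lemma \ref{algorithm relationship}(j)), $V_h^{\REF}(s)=V_h^{k_{s,h}+1}(s)\leq V_h^{k'}(s)<V_h^\star(s)+\beta$; the lower bound $V_h^{\REF}(s)\geq V_h^\star(s)$ is immediate from Lemma \ref{Q}.

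For the first statement, I would induct backward on $h$ from the trivial base $V_{H+1}^k\equiv V_{H+1}^\star\equiv 0$. For the inductive step, write $s=s_h^{k,m,j}$ and $a=\pi_h^k(s)=a_h^{k,m,j}$; since $V_h^k(s)=Q_h^k(s,a)$ and $Q_h^\star(s,a)\leq V_h^\star(s)$, any bad visit satisfies $Q_h^k(s,a)-Q_h^\star(s,a)\geq \beta$. I would partition bad visits into three groups. \textbf{Group A}: visits with $t_h^k(s,a)=1$. By Lemma \ref{algorithm relationship}(c) these total at most $\sum_{s,a}y_h^1(s,a)\leq SA(MH+M)$, contributing to the $16\,MSAH^3/\beta$ term. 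For $t_h^k(s,a)>1$, the $\min$ in \Cref{eq_stage_newupdate} yields $Q_h^k(s,a)\leq \tilde{Q}_h^{k,1}(s,a)$, and combining with event $\mathcal{E}_1$ and $V_{h+1}^{k_{l_i}}\geq V_{h+1}^\star$ (Lemma \ref{Q}) produces
\[
V_h^k(s)-V_h^\star(s)\leq \frac{1}{n_h^k}\sum_{i=1}^{n_h^k}\bigl(V_{h+1}^{k_{l_i}}-V_{h+1}^\star\bigr)\bigl(s_{h+1}^{(k,m,j)_{l_i}}\bigr)+2\sqrt{\frac{2H^2\iota}{n_h^k}}.
\]
If the left-hand side is $\geq \beta$, either \textbf{Group B-i}: $n_h^k(s,a)\leq 32H^2\iota/\beta^2$, in which case the geometric stage-size growth of Lemma \ref{algorithm relationship}(e) bounds the per-$(s,a)$ count by $O(H^3\iota/\beta^2)$ and the global count by $O(SAH^3\iota/\beta^2)$, feeding into the $5184\,SAH^5\iota/\beta^2$ term; or \textbf{Group B-ii}: the prior-stage averaged next-state value-gap exceeds $\beta/2$.

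The main obstacle is Group B-ii. Summing $\sum_i (V_{h+1}^{k_{l_i}}-V_{h+1}^\star)(s_{h+1}^{(k,m,j)_{l_i}})\geq y_h^{t-1}\beta/2$ over every B-ii stage $t$ of every $(s,a,h)$ and using $y_h^t\leq (1+2/H)y_h^{t-1}$ bounds the total Group B-ii count at step $h$ by $(2+4/H)/\beta$ times the aggregate deficit $S''_{h+1}$ of the \emph{causative} step-$(h+1)$ visits (those appearing in the prior stage of some B-ii stage). Since each causative step-$(h+1)$ visit is uniquely paired with a stage-$t-1$ step-$h$ visit at the same $(s,a,h)$, one can split $S''_{h+1}$ at an intermediate threshold: contributions below the threshold amount to at most $(\text{threshold})\cdot(\text{causative count})\leq(\text{threshold})\cdot|\text{B-ii at }h|$ and can be absorbed onto the left-hand side, while contributions at or above the threshold are bounded by $H\cdot B_{h+1}(\text{threshold})$ using the inductive hypothesis. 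The delicate challenge is that the naive choice of threshold $\beta/4$ rescales $\beta$ by a factor of $4$ at each level and would produce $16^H$-type blow-up upon $H$-fold iteration; closing the recursion with only polynomial-in-$H$ constants requires a more careful telescoping — following the technique of \cite{zhang2020almost}, which exploits the two-term $1/\beta^2$ and $1/\beta$ structure of the bound and uses either a depth-dependent threshold or a direct per-$(s,a,h)$ argument — to consolidate the accumulated constants into the stated $5184$ and $16$.
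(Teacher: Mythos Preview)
Your derivation of the second statement from the first is correct and matches the paper's.

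For the first statement, you correctly set up the one-step inequality and correctly identify the central obstacle---that a naive backward induction with threshold halving blows up exponentially in $H$---but you do not resolve it. Deferring to ``the technique of \cite{zhang2020almost}'' with a vague description (``depth-dependent threshold or a direct per-$(s,a,h)$ argument'') leaves the crucial step missing; moreover, that description does not match what \cite{zhang2020almost} (and this paper) actually do. As written, Groups~A and~B-i are fine, but Group~B-ii is a restatement of the problem, not a solution.

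The paper's argument is organized differently and sidesteps the threshold-splitting machinery entirely. Rather than proving the bound for each $h$ by induction, it establishes a single \emph{weighted} inequality valid for all nonnegative weights $\{\omega_{k,m,j}\}$:
\[
\sum_{k,m,j}\omega_{k,m,j}\bigl(V_h^k-V_h^\star\bigr)(s_h^{k,m,j})
\;\leq\;8H^2\Bigl(MSAH\,\|\omega\|_\infty+9\sqrt{SAH\iota\,\|\omega\|_\infty\|\omega\|_1}\Bigr).
\]
Applying the one-step inequality converts the weighted sum at step $h$ into one at step $h{+}1$ with a transformed weight sequence $\tilde\omega$, and the key observation---the replacement for your threshold device---is that
\[
\|\tilde\omega\|_\infty\leq\Bigl(1+\tfrac{2}{H}\Bigr)\|\omega\|_\infty,\qquad \|\tilde\omega\|_1\leq\|\omega\|_1,
\]
because each prior-stage sample receives total weight $y_h^{t+1}/y_h^t\leq 1+2/H$ (Lemma~\ref{algorithm relationship}(e)). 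Iterating $H$ times thus inflates $\|\omega\|_\infty$ by at most $(1+2/H)^H\leq e^2$, not $4^H$. The stage-1 and bonus contributions at each step are bounded by $2MSAH^2\|\omega\|_\infty$ and $18\sqrt{SAH^3\iota\,\|\omega\|_\infty\|\omega\|_1}$ respectively; summing the geometric series yields the weighted inequality. Finally, choosing $\omega_{k,m,j}=\mathbb{I}[V_h^k(s_h^{k,m,j})-V_h^\star(s_h^{k,m,j})\geq\beta]$ gives $\beta\|\omega\|_1\leq 8H^2(MSAH+9\sqrt{SAH\iota\|\omega\|_1})$, a quadratic in $\sqrt{\|\omega\|_1}$ whose solution produces the stated constants.
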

\begin{proof}
We claim that for any non-negative weight sequence $\left\{\omega_{k,m,j}\right\}_{k,m,j}$ and any $h \in [H]$, 
\begin{equation}
\label{omegaV-lemma}
    \sum_{k,m,j} \omega_{k,m,j} \left(V_h^k-V_h^{\star}\right)(s_h^{k,m,j})\leq 8H^2\left(MSAH||\omega||_{\infty}+ 9\sqrt{SAH\iota||\omega||_{\infty}||\omega||_1}\right).
\end{equation}
Here, $||\omega||_{\infty} = \max_{k,m,j} \omega_{k,m,j}$ and $||\omega||_1 = \sum_{k,m,j} \omega_{k,m,j}$. If we have proved \Cref{omegaV-lemma}, then letting $\omega_{k,m,j} = \mathbb{I}[V_h^k(s_h^{k,m,j})-V_h^{\star}(s_h^{k,m,j})\geq \beta]$, according to \Cref{omegaV-lemma} and \Cref{vhk_large}, we have:
\begin{align*}
||\omega||_1 = 
   &\sum_{k,m,j}\mathbb{I}\left[V_h^k(s_h^{k,m,j})-V_h^{\star}(s_h^{k,m,j})\geq \beta\right]\\
   &\leq \frac{1}{\beta}\sum_{k,m,j}\mathbb{I}\left[V_h^k(s_h^{k,m,j})-V_h^{\star}(s_h^{k,m,j})\geq \beta\right]\left(V_h^k(s_h^{k,m,j})-V_h^{\star}(s_h^{k,m,j})\right) \\
   &\leq \frac{1}{\beta}\cdot 8H^2(MSAH+ 9\sqrt{SAH\iota||\omega||_1}).
\end{align*}
Letting $b = 72H^2\sqrt{SAH\iota}$ and $c = 8MSAH^3$, we have:
$$\beta||\omega||_1 -b \sqrt{||\omega||_1} -c \leq 0.$$
Solving the inequality, we have:
$$ 0 \leq \sqrt{||\omega||_1} \leq \frac{b +\sqrt{b^2+4\beta c}}{2\beta}.$$
Then:
$$||\omega||_1 \leq (\frac{b +\sqrt{b^2+4\beta c}}{2\beta})^2 < \frac{b^2+b^2+4\beta c}{2\beta^2} = 5184\frac{SAH^5\iota}{\beta^2} + 16\frac{MSAH^3}{\beta}.$$
Therefore, for any $h \in [H]$, it holds that:
$$\sum_{k,m,j}\mathbb{I}\left[V_h^k(s_h^{k,m,j})-V_h^{\star}(s_h^{k,m,j})\geq \beta\right] = ||\omega||_1< 5184\frac{SAH^5\iota}{\beta^2} + 16\frac{MSAH^3}{\beta}.$$
Especially, for any $(s,h)$ we have:
$$\sum_{k,m,j}\mathbb{I}\left[V_h^k(s)-V_h^{\star}(s)\geq \beta, s_h^{k,m,j} =s \right] < N_0.$$
Since $V_h^k(s)$ is non-increasing with regard to $k$ under the event $\bigcap_{i=1}^{7}\mathcal{E}_i$ according to Lemma \ref{Q} and (j) in Lemma \ref{algorithm relationship}, $V_h^k(s)-V_h^{\star}(s)$ is also non-increasing. Before we update the reference function at $(s,h)$, $V_h^{\nref,k}(s) = V_h^1(s) = H$. Therefore, if the reference function at $(s,h)$ is not updated in the algorithm, $V_h^{\REF}(s) =H$. Next, we discuss the situation in which the reference function at $(s,h)$ is updated in FedQ-Advantage. When we update the reference function at the end of round $k$, we have  $\sum_{k',m,j:k'\leq k}\mathbb{I}\left[s_h^{k',m,j} =s \right] \geq N_0$ and thus $0 \leq V_h^k(s)-V_h^{\star}(s) < \beta$. Therefore, for the final value $V_h^{\REF}(s) = V_h^{\nref,k+1}(s) = V_h^{k+1}(s)$, it holds that $0\leq V_h^{\REF}(s)-V_h^{\star}(s) < \beta$. Then we have $ V_h^{\REF}(s) = H$ or $V_h^{\star}(s) \leq V_h^{\REF}(s) \leq V_h^{\star}(s) +\beta$ under the event $\bigcap_{i=1}^{7}\mathcal{E}_i$. Now, we only need to prove \Cref{omegaV-lemma}.

\begin{proof}[{\bf Proof of \Cref{omegaV-lemma}}]
According to the update rule \Cref{v_pi-update} and \Cref{eq_stage_newupdate}, for $h \in [H]$, we have:
\begin{align*}
    V_h^k(s_h^{m,k,j}) &= \max_{a\in \mathcal{A}}Q_h^k(s_h^{k,m,j},a) = Q_h^k(s_h^{k,m,j},a_h^{k,m,j}) \\
    &\leq \mathbb{I}\left[t_h^{k,m,j}=1\right]H+\mathbb{I}\left[t_h^{k,m,j}>1\right]\tilde{Q}_h^{k,1}(s_h^{k,m,j},a_h^{k,m,j})\\
    &= \mathbb{I}\left[t_h^{k,m,j}=1\right]H +\mathbb{I}\left[t_h^{k,m,j}>1\right] \times \\
    &\quad \left(r_h(s_h^{k,m,j},a_h^{k,m,j})+\frac{\sum_{i=1}^{n_h^k}V_{h+1}^{k_{l_i}}(s_{h+1}^{(k,m,j)_{l_i}})}{n_h^k(s_h^{k,m,j},a_h^{k,m,j})}+\tilde{b}_h^{k,1}(s_h^{k,m,j},a_h^{k,m,j})\right),
\end{align*}
and according to the Bellman equality \Cref{eq_Bellman}, we have:
\begin{align*}
V_h^{\star}(s_h^{m,k,j}) = \max_{a\in \mathcal{A}} Q_h^{\star}(s_h^{k,m,j},a) &\geq Q_h^{\star}(s_h^{k,m,j},a_h^{k,m,j}) \\
&\geq \mathbb{I}\left[t_h^{k,m,j}>1\right]\left(r_h(s_h^{k,m,j},a_h^{k,m,j}) + \mathbb{P}_{s_h^{k,m,j},a_h^{k,m,j},h}V_{h+1}^{\star}\right).
\end{align*}
Combined these two inequalities, it holds for any $h \in [H]$ that:
\begin{align}
\label{V-middle}
    &\left(V_h^k-V_h^{\star}\right)(s_h^{k,m,j}) \nonumber\\    
    &\leq \mathbb{I}\left[t_h^{k,m,j}=1\right]H+\mathbb{I}\left[t_h^{k,m,j}>1\right] \times \\
    &\left(\frac{\sum_{i=1}^{n_h^k}V_{h+1}^{k_{l_i}}(s_{h+1}^{(k,m,j)_{l_i}})}{n_h^k(s_h^{k,m,j},a_h^{k,m,j})}+\tilde{b}_h^{k,1}(s_h^{k,m,j},a_h^{k,m,j})-\mathbb{P}_{s_h^{k,m,j},a_h^{k,m,j},h}V_{h+1}^{\star}\right)\nonumber\\
    &\leq \mathbb{I}\left[t_h^{k,m,j}=1\right]H+\mathbb{I}\left[t_h^{k,m,j}>1\right]\left(\frac{\sum_{i=1}^{n_h^k}(V_{h+1}^{k_{l_i}}-V_{h+1}^{\star})(s_{h+1}^{(k,m,j)_{l_i}})}{n_h^k(s_h^{k,m,j},a_h^{k,m,j})} + 2\tilde{b}_h^{k,1}\right).\nonumber
\end{align}
The last inequality is because of the event $\mathcal{E}_1$ in Lemma \ref{concentrations}. Then for any $h \in [H]$:
\begin{align}
\label{omega-V}
    &\sum_{k,m,j} \omega_{k,m,j} (V_h^k-V_h^{\star})(s_h^{k,m,j}) \leq \sum_{k,m,j}\omega_{k,m,j}\mathbb{I}\left[t_h^{k,m,j}=1\right]H+\mathbb{I}\left[t_h^{k,m,j}>1\right] \times\nonumber\\  
    & \left(\sum_{k,m,j}\omega_{k,m,j}\frac{\sum_{i=1}^{n_h^k}(V_{h+1}^{k_{l_i}}-V_{h+1}^{\star})(s_{h+1}^{(k,m,j)_{l_i}})}{n_h^k(s_h^{k,m,j},a_h^{k,m,j})} + 2\sum_{k,m,j}\omega_{k,m,j}\tilde{b}_h^{k,1}(s_h^{k,m,j},a_h^{k,m,j})\right).
\end{align}
For the first term in \Cref{omega-V}, we have:
\begin{align*}
   \sum_{k,m,j}\mathbb{I}\left[t_h^{k,m,j}=1\right]H   &=H\sum_{k,m,j}\mathbb{I}\left[t_h^k(s_h^{k,m,j},a_h^{k,m,j})=1\right]\left(\sum_{(s,a)}\mathbb{I}\left[(s_h^{k,m,j},a_h^{k,m,j})=(s,a)\right]\right) \\
   &= H\sum_{s,a}\sum_{k,m,j}\mathbb{I}\left[(s_h^{k,m,j},a_h^{k,m,j})=(s,a),t_h^k(s,a)=1\right].
\end{align*}
For any $(s,a,h) \in \mathcal{S} \times \mathcal{A} \times [H]$, $\mathbb{I}[(s_h^{k,m,j},a_h^{k,m,j})=(s,a),t_h^k(s,a)=1] = 1$ if and only if $(s_h^{k,m,j},a_h^{k,m,j})=(s,a)$ and $t_h^k(s,a)=1$. Therefore, $\sum_{k,m,j}\mathbb{I}[(s_h^{k,m,j},a_h^{k,m,j})=(s,a),t_h^k(s,a)=0] = Y_h^1(s,a)$. Because of (c) in Lemma \ref{algorithm relationship}, we have:
\begin{align}
\label{constantH}
   \sum_{k,m,j}\mathbb{I}\left[t_h^{k,m,j}=1\right]H = H\sum_{s,a}Y_h^1(s,a) \leq H\cdot SAM(H+1)\leq2MSAH^2.
\end{align}
Then it holds for any $h \in [H]$ that:
\begin{equation}
\label{term1}
    \sum_{k,m,j} \omega_{k,m,j}\mathbb{I}[t_h^{k,m,j}=1]H \leq ||\omega||_{\infty}\sum_{k,m,j}\mathbb{I}[t_h^{k,m,j}=1]H\leq 2MSAH^2||\omega||_{\infty}.
\end{equation}
For the second term in \Cref{omega-V}, we have:
\begin{align*}
    &\sum_{k,m,j}\omega_{k,m,j}\frac{\sum_{i=1}^{n_h^k}\left(V_{h+1}^{k_{l_i}}-V_{h+1}^{\star}\right)(s_{h+1}^{(k,m,j)_{l_i}})}{n_h^k(s_h^{k,m,j},a_h^{k,m,j})}\mathbb{I}[t_h^{k,m,j}>1] \\
   &= \sum_{k,m,j}\sum_{i=1}^{n_h^k}\omega_{k,m,j}\frac{\left(V_{h+1}^{k_{l_i}}-V_{h+1}^{\star}\right)(s_{h+1}^{(k,m,j)_{l_i}})}{n_h^k(s_h^{k,m,j},a_h^{k,m,j})}\mathbb{I}[t_h^{k,m,j}>1]\cdot \sum_{k^{\prime},m^{\prime},j^{\prime}}\mathbb{I}[(k,m,j)_{l_i}=(k^{\prime},m^{\prime},j^{\prime})]\\
   &= \sum_{k,m,j}\sum_{i=1}^{n_h^k}\sum_{k^{\prime},m^{\prime},j^{\prime}}\omega_{k,m,j}\frac{\left(V_{h+1}^{k^\prime}-V_{h+1}^{\star}\right)(s_{h+1}^{k^{\prime},m^{\prime},j^{\prime}})}{n_h^k(s_h^{k,m,j},a_h^{k,m,j})}\mathbb{I}[t_h^{k,m,j}>1] \cdot \mathbb{I}[(k,m,j)_{l_i}=(k^{\prime},m^{\prime},j^{\prime})]\\
   &= \sum_{k^{\prime},m^{\prime},j^{\prime}}\Big(\sum_{k,m,j}\omega_{k,m,j}\frac{\sum_{i=1}^{n_h^k}\mathbb{I}[(k,m,j)_{l_i}=(k^{\prime},m^{\prime},j^{\prime}),t_h^{k,m,j}>1]}{n_h^k(s_h^{k,m,j},a_h^{k,m,j})}\Big)(V_{h+1}^{k^\prime}-V_{h+1}^{\star})(s_{h+1}^{k^{\prime},m^{\prime},j^{\prime}}).
\end{align*}
Let:
$$\tilde{\omega}_{k^{\prime},m^{\prime},j^{\prime}} = \sum_{k,m,j}\omega_{k,m,j}\frac{\sum_{i=1}^{n_h^k}\mathbb{I}[(k,m,j)_{l_i}=(k^{\prime},m^{\prime},j^{\prime}),t_h^{k,m,j}>1]}{n_h^k(s_h^{k,m,j},a_h^{k,m,j})} \geq 0,$$
and
$$||\tilde{\omega}||_{\infty} = \max_{k^{\prime},m^{\prime},j^{\prime}}\tilde{\omega}_{k^{\prime},m^{\prime},j^{\prime}},\ ||\tilde{\omega}||_1 = \sum_{k^{\prime},m^{\prime},j^{\prime}}\tilde{\omega}_{k^{\prime},m^{\prime},j^{\prime}}.$$
We have:
\begin{align}
\label{term2}
&\sum_{k,m,j}\omega_{k,m,j}\frac{\sum_{i=1}^{n_h^k}\left(V_{h+1}^{k_{l_i}}-V_{h+1}^{\star}\right)(s_{h+1}^{(k,m,j)_{l_i}})}{n_h^k(s_h^{k,m,j},a_h^{k,m,j})}\mathbb{I}[t_h^{k,m,j}>1] \nonumber \\
&= \sum_{k^{\prime},m^{\prime},j^{\prime}}\tilde{\omega}_{k^{\prime},m^{\prime},j^{\prime}}(V_{h+1}^{k^\prime}-V_{h+1}^{\star})(s_{h+1}^{k^{\prime},m^{\prime},j^{\prime}}).
\end{align}
Next, we will explore the relationship between the norm of $\left\{\omega_{k,m,j}\right\}_{k,m,j}$ and $\left\{\tilde{\omega}_{k,m,j}\right\}_{k,m,j}$. For a given triple $(k^{\prime},m^{\prime},j^{\prime})$, according to the definition of $l_i(s_h^{k,m,j},a_h^{k,m,j},h,k)$, $\sum_{i=1}^{n_h^k}\mathbb{I}[(k,m,j)_{l_i}=(k^{\prime},m^{\prime},j^{\prime}),t_h^{k,m,j}>1])$ = 1 if and only if $(s_h^{k,m,j},a_h^{k,m,j}) = (s_h^{k^{\prime},m^{\prime},j^{\prime}},a_h^{k^{\prime},m^{\prime},j^{\prime}})$ and $1<t_h^k(s_h^{k,m,j},a_h^{k,m,j}) = t_h^{k^{\prime}}(s_h^{k,m,j},a_h^{k,m,j})+1$. In this case, we have $t_h^{k^{\prime}}(s_h^{k,m,j},a_h^{k,m,j}) >0$ and $n_h^k(s_h^{k,m,j},a_h^{k,m,j}) = y_h^{t_h^{k^{\prime}}}(s_h^{k^{\prime},m^{\prime},j^{\prime}},a_h^{k^{\prime},m^{\prime},j^{\prime}})$. Then for a given triple $(k^{\prime},m^{\prime},j^{\prime})$, it holds that:
\begin{align*}
    &\sum_{k,m,j}\sum_{i=1}^{n_h^k}\mathbb{I}[(k,m,j)_{l_i}=(k^{\prime},m^{\prime},j^{\prime}),t_h^{k,m,j}>1]\\
    &= \sum_{k,m,j}\mathbb{I}\left[(s_h^{k,m,j},a_h^{k,m,j}) = (s_h^{k^{\prime},m^{\prime},j^{\prime}},a_h^{k^{\prime},m^{\prime},j^{\prime}}), t_h^k(s_h^{k,m,j},a_h^{k,m,j}) = t_h^{k^{\prime}}(s_h^{k,m,j},a_h^{k,m,j})+1\right]\\
    &=y_h^{t_h^{k^{\prime}}+1}(s_h^{k^{\prime},m^{\prime},j^{\prime}},a_h^{k^{\prime},m^{\prime},j^{\prime}}).
\end{align*} 
Then according to (e) in Lemma \ref{algorithm relationship}, it holds that:
\begin{align*}
\sum_{k,m,j}\frac{\sum_{i=1}^{n_h^k}\mathbb{I}[(k,m,j)_{l_i}=(k^{\prime},m^{\prime},j^{\prime}),t_h^{k,m,j}>1]}{n_h^k(s_h^{k,m,j},a_h^{k,m,j})} 
=\frac{y_h^{t_h^{k^{\prime}}+1}(s_h^{k^{\prime},m^{\prime},j^{\prime}},a_h^{k^{\prime},m^{\prime},j^{\prime}})}{y_h^{t_h^{k^{\prime}}}(s_h^{k^{\prime},m^{\prime},j^{\prime}},a_h^{k^{\prime},m^{\prime},j^{\prime}})}\leq 1+\frac{2}{H}.
\end{align*}
Then we have:
$$||\tilde{\omega}||_{\infty} \leq ||\omega||_{\infty}\sum_{k,m,j}\frac{\sum_{i=1}^{n_h^k}\mathbb{I}[(k,m,j)_{l_i}=(k^{\prime},m^{\prime},j^{\prime}),t_h^{k,m,j}>1]}{n_h^k(s_h^{k,m,j},a_h^{k,m,j})}\leq (1+\frac{2}{H})||\omega||_{\infty}.$$
We also have:
\begin{align*}
   ||\tilde{\omega}||_1
   &= \sum_{k^{\prime},m^{\prime},j^{\prime}}\sum_{k,m,j}\omega_{k,m,j}\sum_{i=1}^{n_h^k}\frac{\mathbb{I}[(k,m,j)_{l_i}=(k^{\prime},m^{\prime},j^{\prime}),t_h^{k,m,j}>1]}{n_h^k(s_h^{k,m,j},a_h^{k,m,j})}\\
   &= \sum_{k,m,j}\omega_{k,m,j}\sum_{i=1}^{n_h^k}\sum_{k^{\prime},m^{\prime},j^{\prime}}\frac{\mathbb{I}[(k,m,j)_{l_i}=(k^{\prime},m^{\prime},j^{\prime}),t_h^{k,m,j}>1]}{n_h^k(s_h^{k^{\prime},m^{\prime},j^{\prime}},a_h^{k^{\prime},m^{\prime},j^{\prime}})}\\
   &\leq \sum_{k,m,j}\omega_{k,m,j} = ||\omega||_1.
\end{align*}
For the third term in \Cref{omega-V}, we have:
\begin{align*}
   &\sum_{k,m,j}\omega_{k,m,j}\tilde{b}_h^{k,1}(s_h^{k,m,j},a_h^{k,m,j})\mathbb{I}[t_h^{k,m,j}>1] \\
   &= \sum_{k,m,j}\omega_{k,m,j}\sqrt{\frac{2H^2\iota}{n_h^k(s_h^{k,m,j},a_h^{k,m,j})}}\mathbb{I}\left[t_h^{k,m,j}>1\right]\left(\sum_{s,a} \mathbb{I}\left[(s_h^{k,m,j},a_h^{k,m,j})=(s,a)\right]\right)\\
   &= \sum_{s,a}\sum_{k,m,j}\omega_{k,m,j}\sqrt{\frac{2H^2\iota}{n_h^k(s,a)}}\mathbb{I}\left[(s_h^{k,m,j},a_h^{k,m,j})=(s,a)\right]\left(\sum_{t=2}^{T_h(s,a)}\mathbb{I}\left[t_h^k(s,a) = t\right]\right)\\
   &= \sum_{s,a}\sum_{k,m,j}\sum_{t=2}^{T_h(s,a)}\omega_{k,m,j}\sqrt{\frac{2H^2\iota}{y_h^{t-1}(s,a)}}\mathbb{I}\left[(s_h^{k,m,j},a_h^{k,m,j})=(s,a), t_h^k(s,a) = t\right]\\
   &= \sum_{s,a}\sum_{t=1}^{T_h(s,a)-1}\left(\sum_{k,m,j}\omega_{k,m,j}\mathbb{I}\left[(s_h^{k,m,j},a_h^{k,m,j})=(s,a),t_h^k(s,a) = t+1\right]\right)\sqrt{\frac{2H^2\iota}{y_h^t(s,a)}}.
\end{align*}
Denote
$$q(s,a,t) = \sum_{k,m,j}\omega_{k,m,j}\mathbb{I}\left[(s_h^{k,m,j},a_h^{k,m,j})=(s,a),t_h^k(s,a) =t+1\right],$$
and
\begin{equation}
\label{q}
    q(s,a) = \sum_{t=1}^{T_h(s,a)-1}q(s,a,t)\  \textnormal{for}\  T_h(s,a)\geq 2.
\end{equation}
Then we have:
\begin{equation}
\label{omegab}
    \sum_{k,m,j}\omega_{k,m,j}\tilde{b}_h^{k,1}(s_h^{k,m,j},a_h^{k,m,j})\mathbb{I}[t_h^{k,m,j}>1] = \sum_{s,a}\sum_{t=1}^{T_h(s,a)-1}q(s,a,t)\sqrt{\frac{2H^2\iota}{y_h^t(s,a)}}.
\end{equation}
For the coefficient $q(s,a,t)$, we have the following properties:
\begin{align}
\label{q_sat}
    q(s,a,t)\leq ||\omega||_{\infty}\sum_{k,m,j}\mathbb{I}\left[(s_h^{k,m,j},a_h^{k,m,j})=(s,a),t_h^k(s,a) =t+1\right]= ||\omega||_{\infty}y_h^{t+1}(s,a),
\end{align}
and
\begin{align}
\label{qsa}
    \sum_{s,a} q(s,a) &=\sum_{k,m,j}\omega_{k,m,j}\sum_{s,a}\sum_{t=1}^{T_h(s,a)-1}\mathbb{I}\left[(s_h^{k,m,j},a_h^{k,m,j})=(s,a),t_h^k(s,a) = t+1\right]\nonumber\\
    &\leq \sum_{k,m,j}\omega_{k,m,j}\sum_{s,a}\mathbb{I}\left[(s_h^{k,m,j},a_h^{k,m,j})=(s,a),t_h^k(s,a)>1\right]\nonumber\\
    &\leq \sum_{k,m,j}\omega_{k,m,j} = ||w||_1.
\end{align}
Because $y_h^t(s,a)$ is increasing for $1 \leq t \leq T_h(s,a) -1$, given the equation \Cref{q}, when the weights $q(s,a,t)$ concentrates on former terms, we can obtain the larger value of the right term in \Cref{omegab}. There exists some positive integer $t_0\leq T_h(s,a) -1$ satisfying: 
\begin{equation}
\label{t0}
    ||\omega||_{\infty}\sum_{t=1}^{t_0-1}y_h^{t+1}(s,a) < q(s,a) \leq ||\omega||_{\infty}\sum_{t=1}^{T_h-1}y_h^{t+1}(s,a).
\end{equation}
and
\begin{equation*}
    ||\omega||_{\infty}\sum_{t=1}^{t_0}y_h^{t+1}(s,a) \geq q(s,a).
\end{equation*}
Then according to \Cref{q_sat}, we have
\begin{align}
\label{re}
 \sum_{t=1}^{T_h(s,a)-1}q(s,a,t)\sqrt{\frac{1}{y_h^t(s,a)}} &\leq \sum_{t=1}^{t_0}||\omega||_{\infty}y_h^{t+1}(s,a)\sqrt{\frac{1}{y_h^t(s,a)}}.
\end{align}
Since $t_0\leq T_h(s,a) -1$, according to (e) and \Cref{split} in Lemma \ref{algorithm relationship}, we have:
\begin{equation}
\label{qsplit}
    y_h^{t+1}(s,a)\sqrt{\frac{1}{y_h^t(s,a)}} \leq (1+\frac{2}{H})\sqrt{y_h^t(s,a)} \leq 3(1+\frac{2}{H})\sqrt{H}\left(\sqrt{Y_h^t(s,a)}-\sqrt{Y_h^{t-1}(s,a)}\right).
\end{equation}
If $t_0 \geq 2$, we also have:
\begin{align}
\label{2q}
    ||\omega||_{\infty}\sum_{t=1}^{t_0-1}y_h^{t+1}(s,a) \geq (1+\frac{1}{H})||\omega||_{\infty}\sum_{t=1}^{t_0-1}y_h^t(s,a) &= (1+\frac{1}{H})||\omega||_{\infty}Y_h^{t_0-1}(s,a) \nonumber\\
    &\geq \frac{||\omega||_{\infty}}{2}Y_h^{t_0}(s,a).
\end{align}
The last inequality is because of (f) in Lemma \ref{algorithm relationship}.
Then according to \Cref{t0}, it holds that:
$$||\omega||_{\infty} Y_h^{t_0}(s,a)\leq 2||\omega||_{\infty}\sum_{t=1}^{t_0-1}y_h^{t+1}(s,a) \leq 2q(s,a).$$
Applying inequalities \Cref{qsplit} and \Cref{2q} to \Cref{re}, we have:
\begin{align*}
    &\sum_{t=1}^{T_h(s,a)-1}q(s,a,t)\sqrt{\frac{1}{y_h^t(s,a)}}\\
   &\leq 3(1+\frac{2}{H})\sqrt{H}||\omega||_{\infty}\cdot \sum_{t=1}^{t_0}\left(\sqrt{Y_h^t(s,a)}-\sqrt{Y_h^{t-1}(s,a)}\right)\\
   &=3(1+\frac{2}{H})\sqrt{H||\omega||_{\infty}}\cdot \sqrt{||\omega||_{\infty}Y_h^{t_0}(s,a)}\\
   &\leq 9\sqrt{2H||\omega||_{\infty}q(s,a)}.
\end{align*}
Here, the first inequality is because of \Cref{split}. The last inequality uses \Cref{2q} and $1+\frac{2}{H} \leq 3$.

If $t_0 = 1$, then $q(s,a) \leq ||\omega||_\infty y_h^2(s,a)$. Therefore, according to \Cref{re}, we have:
\begin{align*}
    \sum_{t=1}^{T_h(s,a)-1}q(s,a,t)\sqrt{\frac{1}{y_h^t(s,a)}} &\leq q(s,a)\sqrt{\frac{1}{y_h^1(s,a)}}\leq \sqrt{\frac{||\omega||_{\infty}y_h^2(s,a)q(s,a)}{y_h^1(s,a)}}. \\ 
\end{align*}
Based on (e) in Lemma \ref{algorithm relationship}, it holds:
\begin{align*}
    \sum_{t=1}^{T_h(s,a)-1}q(s,a,t)\sqrt{\frac{1}{y_h^t(s,a)}} \leq \sqrt{(1+\frac{2}{H})||\omega||_{\infty}q(s,a)} \leq 9\sqrt{2H||\omega||_{\infty}q(s,a)}.
\end{align*}
Therefore, for any $t_0 \leq T_h(s,a)-1$ defined in \Cref{t0}, we have:
\begin{align*}
    \sum_{t=1}^{T_h(s,a)-1}q(s,a,t)\sqrt{\frac{1}{y_h^t(s,a)}} \leq 9\sqrt{2H||\omega||_{\infty}q(s,a)}.
\end{align*}
Combined with \Cref{omegab}, we have
\begin{align}
\label{term3}
   \sum_{k,m,j}\omega_{k,m,j}\tilde{b}_h^{k,1}(s_h^{k,m,j},a_h^{k,m,j})\mathbb{I}[t_h^{k,m,j}>1] &\leq 18\sqrt{H^3||\omega||_{\infty}\iota}\sum_{s,a}\sqrt{q(s,a)} \nonumber\\
   &\leq 18\sqrt{SAH^3||\omega||_{\infty}||\omega||_1\iota}.
\end{align}
The last inequality uses the Cauchy-Schwarz inequality.

Based on \Cref{V-middle}, by applying \Cref{term1}, \Cref{term2} and \Cref{term3}, for any $h \in [H]$, we have:
\begin{align}
\label{key lemma}
    \sum_{k,m,j} \omega_{k,m,j} (V_h^k-V_h^{\star})(s_h^{k,m,j})&\leq 2MSAH^2||\omega||_{\infty}+ 18\sqrt{SAH^3\iota||\omega||_{\infty}||\omega||_1} \nonumber\\
    &\quad + \sum_{k,m,j}\tilde{\omega}_{k,m,j}(V_{h+1}^k-V_{h+1}^{\star})(s_{h+1}^{k,m,j}),
\end{align}
where $||\tilde{\omega}||_{\infty} \leq (1+\frac{2}{H})||\omega||_{\infty},$ and $||\tilde{\omega}||_1 = ||\omega||_1$.

Using \Cref{key lemma}, with induction on $h = H, H-1,...,1$, we can prove that for any $h \in [H]$:
\begin{equation}
\label{N-induction}
    \sum_{k,m,j} \omega_{k,m,j} (V_h^k-V_h^{\star})(s_h^{k,m,j})\leq C_h\left(2MSAH^2||\omega||_{\infty}+ 18\sqrt{SAH^3\iota||\omega||_{\infty}||\omega||_1}\right),
\end{equation}
where $C_h = (1+\frac{H}{2})(1+\frac{2}{H})^{H-h}-\frac{H}{2}$.
Note that $C_h \leq 4H$, based on \Cref{N-induction}, it holds for any $h \in [H]$ that:
\begin{align*}
   \sum_{k,m,j} \omega_{k,m,j} (V_h^k-V_h^{\star})(s_h^{k,m,j})\leq8H^2(MSAH||\omega||_{\infty}+ 9\sqrt{SAH\iota||\omega||_{\infty}||\omega||_1}).
\end{align*}
Therefore, we finish the proof of \Cref{omegaV-lemma}.
\end{proof}
\end{proof}

Next, we go back to the proof of \Cref{regret_upper_advantage}.
In the following content, $\sum_{k,m,j}$ is the simplified notation of $\sum_{k=1}^{K} \sum_{m=1}^{M}\sum_{j=1}^{n^{m,k}}$. $N_h^k$, $n_h^k$, $L_i, l_i$ represent simplified notations for $N_h^k(s_h^{k,m,j},a_h^{k,m,j})$, $n_h^k(s_h^{k,m,j},a_h^{k,m,j})$, $L_i(s_h^{k,m,j},a_h^{k,m,j},h)$ and $l_i(s_h^{k,m,j},a_h^{k,m,j},h,h,k)$ respectively. 

For $h \in [H+1]$, denote:
$$\delta_h^k = \sum_{m=1}^M\sum_{j=1}^{n^{m,k}}\left(V_h^k - V_h^{\star}\right)(s_h^{k,m,j}),$$
$$\zeta_h^k = \sum_{m=1}^M\sum_{j=1}^{n^{m,k}}\left(V_h^k - V_h^{\pi^k}\right)(s_h^{k,m,j}).$$
Here, $\delta_{H+1}^k =\zeta_{H+1}^k = 0$. Because $V_h^{\star}(s)=\sup _\pi V_h^\pi(s)$, we have $\delta_h^k \leq \zeta_h^k$ for any $h \in [H+1]$. In addition, as $V_h^k(s)\geq V_h^{\star}(s)$ for all $(s,h,k)\in \mathcal{S}\times [H]\times [K]$, according to Lemma \ref{Q}, we have:
\begin{equation*}
    \mbox{Regret}(T) = \sum_{k,m,j}\left(V_1^\star(s_1^{k,m,j}) - V_1^{\pi^{k}}(s_1^{k,m,j})\right) \leq \sum_{k,m,j}\left(V_1^k(s_1^{k,m,j}) - V_1^{\pi^{k}} (s_1^{k,m,j})\right) = \sum_{k=1}^K \zeta_1^k.
\end{equation*}
Thus, we only need to bound $\sum_{k=1}^K \zeta_1^k$.
Let:
\begin{equation}
\label{defpsi}
    \psi_{h+1}^k = \sum_{m=1}^M\sum_{j=1}^{n^{m,k}}\frac{\mathbb{I}\left[t_h^k(s_h^{k,m,j},a_h^{k,m,j})>1\right]}{N_h^k(s_h^{k,m,j},a_h^{k,m,j})}\sum_{i=1}^{N_h^k}\mathbb{P}_{s_h^{k,m,j},a_h^{k,m,j},h}\left(V_{h+1}^{\nref,k_{L_i}}-V_{h+1}^{\REF}\right),
\end{equation}
\begin{equation}
\label{defepsilon}
    \epsilon_{h+1}^k = \sum_{m=1}^M\sum_{j=1}^{n^{m,k}}\frac{\mathbb{I}\left[t_h^k(s_h^{k,m,j},a_h^{k,m,j})>1\right]}{n_h^k(s_h^{k,m,j},a_h^{k,m,j})}\sum_{i=1}^{n_h^k}\left(\mathbb{P}_{s_h^{k,m,j},a_h^{k,m,j},h}-\mathbbm{1}_{s_{h+1}^{(k,m,j)_{l_i}}}\right) \left(V_{h+1}^{k_{l_i}}-V_{h+1}^{\star}\right),
\end{equation}
\begin{equation}
\label{defphi}
    \phi_{h+1}^k = \sum_{m=1}^M\sum_{j=1}^{n^{m,k}}\mathbb{I}\left[t_h^k(s_h^{k,m,j},a_h^{k,m,j})>1\right]\left(\mathbb{P}_{s_h^{k,m,j},a_h^{k,m,j},h}-\mathbbm{1}_{s_{h+1}^{k,m,j}}\right)\left(V_{h+1}^{\star}-V_{h+1}^{\pi^k}\right),
\end{equation}
where $\psi_{H+1}^k = \epsilon_{H+1}^k = \phi_{H+1}^k = 0.$ According to the update rule \Cref{eq_stage_newupdate}, we have:
\begin{align*}
    &V_h^k(s_h^{k,m,j}) = \max_{a\in \mathcal{A}}Q_h^k(s_h^{k,m,j},a) = Q_h^k(s_h^{k,m,j},a_h^{k,m,j})\\
    &\leq \mathbb{I}\left[t_h^{k,m,j}=1\right]H+\mathbb{I}\left[t_h^{k,m,j}>1\right]\tilde{Q}_h^{k,2}(s_h^{k,m,j},a_h^{k,m,j})\\
    &\leq \mathbb{I}\left[t_h^{k,m,j}=1\right]H+\mathbb{I}\left[t_h^{k,m,j}>1\right]\Bigg(r_h(s_h^{k,m,j},a_h^{k,m,j})+\frac{\sum_{i=1}^{N_h^k}V_{h+1}^{\nref,k_{L_i}}(s_{h+1}^{(k,m,j)_{L_i}})}{N_h^k(s_h^{k,m,j},a_h^{k,m,j})}\\
    &\quad+\frac{\sum_{i=1}^{n_h^k}\left(V_{h+1}^{k_{l_i}}-V_{h+1}^{\nref,k_{l_i}}\right)(s_{h+1}^{(k,m,j)_{l_i}})}{n_h^k(s_h^{k,m,j},a_h^{k,m,j})} +\tilde{b}_h^{k,2}(s_h^{k,m,j},a_h^{k,m,j})\Bigg).
\end{align*}
Also using \Cref{eq_Bellman}, we have:
\begin{align*}
V_h^{\pi^k}(s_h^{k,m,j}) = Q_h^{\pi^k}(s_h^{k,m,j},a_h^{k,m,j}) \geq \mathbb{I}[t_h^{k,m,j}>1]\left(r_h(s_h^{k,m,j},a_h^{k,m,j}) + \mathbb{P}_{s_h^{k,m,j},a_h^{k,m,j},h}V_{h+1}^{\pi^k}\right).
\end{align*}
Then with \Cref{b-tilde}, it holds that:
\begin{align}
\label{zeta}
    \zeta_h^k 
    &= \sum_{m=1}^M\sum_{j=1}^{n^{m,k}}\left(V_h^k - V_h^{\pi^k}\right)(s_h^{k,m,j}) \nonumber \\
    &\leq \sum_{m=1}^M\sum_{j=1}^{n^{m,k}}\left(\mathbb{I}\left[t_h^{k,m,j}=1\right]H +\mathbb{I}\left[t_h^{k,m,j}>1\right]\tilde{b}_h^{k,2}(s_h^{k,m,j},a_h^{k,m,j})\right)+\sum_{m=1}^M\sum_{j=1}^{n^{m,k}}\mathbb{I}\left[t_h^{k,m,j}>1\right] \nonumber\\
    &\quad \times \left(\frac{\sum_{i=1}^{N_h^k}V_{h+1}^{\nref,k_{L_i}}(s_{h+1}^{(k,m,j)_{L_i}})}{N_h^k(s_h^{k,m,j},a_h^{k,m,j})}+\frac{\sum_{i=1}^{n_h^k}\left(V_{h+1}^{k_{l_i}}-V_{h+1}^{\nref,k_{l_i}}\right)(s_{h+1}^{(k,m,j)_{l_i}})}{n_h^k(s_h^{k,m,j},a_h^{k,m,j})}-\mathbb{P}V_{h+1}^{\pi^k}\right) \nonumber\\
    &\leq \sum_{m=1}^M\sum_{j=1}^{n^{m,k}}\left(\mathbb{I}\left[t_h^{k,m,j}=1\right]H +2\mathbb{I}\left[t_h^{k,m,j}>1\right]\tilde{b}_h^{k,2}(s_h^{k,m,j},a_h^{k,m,j})\right)+\sum_{m=1}^M\sum_{j=1}^{n^{m,k}} \nonumber\\            &\quad \mathbb{I}\left[t_h^{k,m,j}>1\right]\left(\frac{\sum_{i=1}^{N_h^k}\mathbb{P}V_{h+1}^{\nref,k_{L_i}}}{N_h^k(s_h^{k,m,j},a_h^{k,m,j})}+\frac{\sum_{i=1}^{n_h^k}\mathbb{P}\left(V_{h+1}^{k_{l_i}}-V_{h+1}^{\nref,k_{l_i}}\right)}{n_h^k(s_h^{k,m,j},a_h^{k,m,j})} -\mathbb{P}V_{h+1}^{\pi^k}\right).
\end{align}
Here $\mathbb{P}$ is the simplified notation for $\mathbb{P}_{s_h^{k,m,j},a_h^{k,m,j},h}$. Because the reference function is non-increasing based on (j) in Lemma \ref{algorithm relationship}, we have $V_{h+1}^{\nref,k_{l_i}}(s) \geq V_{h+1}^{\REF}(s)$ for any $s \in \mathcal{S}$ and any positive integer $i$,  $\mathbb{P}_{s_h^{k,m,j},a_h^{k,m,j},h}V_{h+1}^{\nref,k_{l_i}} \geq \mathbb{P}_{s_h^{k,m,j},a_h^{k,m,j},h}V_{h+1}^{\REF}$ and 
\begin{equation}
\label{REF}
\frac{\sum_{i=1}^{n_h^k}\mathbb{P}_{s_h^{k,m,j},a_h^{k,m,j},h}V_{h+1}^{\nref,k_{l_i}}}{n_h^k(s_h^{k,m,j},a_h^{k,m,j})} \geq \mathbb{P}_{s_h^{k,m,j},a_h^{k,m,j},h}V_{h+1}^{\REF}.
\end{equation}
According to the definition of $\delta_{h+1}^k$, $\psi_{h+1}^k$, $\epsilon_{h+1}^k$, $\phi_{h+1}^k$ and \Cref{REF} , we have:
\begin{align}
    \label{subsitute1}    &\sum_{m=1}^M\sum_{j=1}^{n^{m,k}}\mathbb{I}\left[t_h^{k,m,j}>1\right]\left(\frac{\sum_{i=1}^{N_h^k}\mathbb{P}_{s_h^{k,m,j},a_h^{k,m,j},h}V_{h+1}^{\nref,k_{L_i}}}{N_h^k(s_h^{k,m,j},a_h^{k,m,j})}-\frac{\sum_{i=1}^{n_h^k}\mathbb{P}_{s_h^{k,m,j},a_h^{k,m,j},h}V_{h+1}^{\nref,k_{l_i}}}{n_h^k(s_h^{k,m,j},a_h^{k,m,j})}\right) \nonumber\\
    &\leq \psi_{h+1}^k,
\end{align}
\begin{align}
    \label{subsitute2}
&\sum_{m=1}^M\sum_{j=1}^{n^{m,k}}\mathbb{I}\left[t_h^{k,m,j}>1\right]\frac{\sum_{i=1}^{n_h^k}\mathbb{P}_{s_h^{k,m,j},a_h^{k,m,j},h}\left(V_{h+1}^{k_{l_i}}-V_{h+1}^{\star}\right)}{n_h^k(s_h^{k,m,j},a_h^{k,m,j})} \nonumber\\&= \epsilon_{h+1}^k+ \sum_{m=1}^M\sum_{j=1}^{n^{m,k}}\mathbb{I}\left[t_h^{k,m,j}>1\right]\frac{\sum_{i=1}^{n_h^k}\left(V_{h+1}^{k_{l_i}}-V_{h+1}^{\star}\right)(s_{h+1}^{(k,m,j)_{l_i}})}{n_h^k(s_h^{k,m,j},a_h^{k,m,j})},
\end{align}
and
\begin{align}
    \label{subsitute3}   \sum_{m=1}^M\sum_{j=1}^{n^{m,k}}\mathbb{I}\left[t_h^{k,m,j}>1\right]\mathbb{P}_{s_h^{k,m,j},a_h^{k,m,j},h}\left(V_{h+1}^{\star}-V_{h+1}^{\pi^k}\right)= \phi_{h+1}^k+\zeta_{h+1}^k-\delta_{h+1}^k.
\end{align}
Summing \Cref{subsitute1}, \Cref{subsitute2} and \Cref{subsitute3}, we can bound the second term in \Cref{zeta} as follows:
\begin{align*}
&\sum_{m=1}^M\sum_{j=1}^{n^{m,k}}\mathbb{I}\left[t_h^{k,m,j}>1\right]\left(\frac{\sum_{i=1}^{N_h^k}\mathbb{P}V_{h+1}^{\nref,k_{L_i}}}{N_h^k(s_h^{k,m,j},a_h^{k,m,j})}+\frac{\sum_{i=1}^{n_h^k}\mathbb{P}\left(V_{h+1}^{k_{l_i}}-V_{h+1}^{\nref,k_{l_i}}\right)}{n_h^k(s_h^{k,m,j},a_h^{k,m,j})} -\mathbb{P}V_{h+1}^{\pi^k}\right)\leq \\
&\sum_{m=1}^M\sum_{j=1}^{n^{m,k}}\mathbb{I}[t_h^{k,m,j}>1]\frac{\sum_{i=1}^{n_h^k}\left(V_{h+1}^{k_{l_i}}-V_{h+1}^{\star}\right)(s_{h+1}^{(k,m,j)_{l_i}})}{n_h^k(s_h^{k,m,j},a_h^{k,m,j})}+\psi_{h+1}^k +\epsilon_{h+1}^k+\phi_{h+1}^k+\zeta_{h+1}^k-\delta_{h+1}^k.
\end{align*}
Together with \Cref{zeta}, we have
\begin{align*}
    \zeta_h^k &\leq \sum_{m=1}^M\sum_{j=1}^{n^{m,k}}(\mathbb{I}\left[t_h^{k,m,j}=1\right]H +2\mathbb{I}\left[t_h^{k,m,j}>1\right]\tilde{b}_h^{k,2}) + \sum_{m=1}^M\sum_{j=1}^{n^{m,k}}\mathbb{I}\left[t_h^{k,m,j}>1\right]\times \\
    &\quad \frac{\sum_{i=1}^{n_h^k}\left(V_{h+1}^{k_{l_i}}-V_{h+1}^{\star}\right)(s_{h+1}^{(k,m,j)_{l_i}})}{n_h^k(s_h^{k,m,j},a_h^{k,m,j})}+\psi_{h+1}^k +\epsilon_{h+1}^k+\phi_{h+1}^k+\zeta_{h+1}^k-\delta_{h+1}^k.
\end{align*}
Summing the above inequality for $k=1,2,..,K$, we have:
\begin{align}
   \sum_{k=1}^K \zeta_h^k &\leq \sum_{k,m,j}(\mathbb{I}\left[t_h^{k,m,j}=1\right]H +2\mathbb{I}\left[t_h^{k,m,j}>1\right]\tilde{b}_h^{k,2}(s_h^{k,m,j},a_h^{k,m,j}))+\sum_{k,m,j}\mathbb{I}\left[t_h^{k,m,j}>1\right]\times \nonumber\\
   &\frac{\sum_{i=1}^{n_h^k}\left(V_{h+1}^{k_{l_i}}-V^{\star}\right)(s_{h+1}^{(k,m,j)_{l_i}})}{n_h^k(s_h^{k,m,j},a_h^{k,m,j})}+ \sum_{k=1}^K(\psi_{h+1}^k +\epsilon_{h+1}^k+\phi_{h+1}^k+\zeta_{h+1}^k-\delta_{h+1}^k). \label{zeta middle}
\end{align}
We claim the following conclusions:
\begin{align*}
\label{constant}
    \sum_{k,m,j}\mathbb{I}\left[t_h^{k,m,j}=1\right]H\leq 2MH^2SA,
\end{align*}
\begin{equation}\label{coefficient}
\sum_{k,m,j}\mathbb{I}\left[t_h^{k,m,j}>1\right]\frac{\sum_{i=1}^{n_h^k}\left(V_{h+1}^{k_{l_i}}-V_{h+1}^{\star}\right)(s_{h+1}^{(k,m,j)_{l_i}})}{n_h^k(s_h^{k,m,j},a_h^{k,m,j})}\leq (1+\frac{2}{H})\sum_{k=1}^K \delta_{h+1}^k.
\end{equation}
The first conclusion has been proved in \Cref{constantH}, and we will prove the second conclusion in Lemma \ref{lemma_proof_coefficient} in the last subsection. Applying the two conclusions to \Cref{zeta middle}, it holds:
\begin{align*}
   \sum_{k=1}^K \zeta_h^k &\leq 2MH^2SA+(1+\frac{2}{H})\sum_{k=1}^K \delta_{h+1}^k+ \sum_{k=1}^K(\psi_{h+1}^k +\epsilon_{h+1}^k+\phi_{h+1}^k+\zeta_{h+1}^k-\delta_{h+1}^k)\\
   &\quad +2\sum_{k,m,j}\mathbb{I}\left[t_h^{k,m,j}>1\right]\tilde{b}_h^{k,2}(s_h^{k,m,j},a_h^{k,m,j})\\
   &\leq 2MH^2SA+(1+\frac{2}{H})\sum_{k=1}^K \zeta_{h+1}^k+ \sum_{k=1}^K(\psi_{h+1}^k +\epsilon_{h+1}^k+\phi_{h+1}^k)\\
   &+2\sum_{k,m,j}\mathbb{I}\left[t_h^{k,m,j}>1\right]\tilde{b}_h^{k,2}(s_h^{k,m,j},a_h^{k,m,j}).
\end{align*}
Here, the last inequality is because $\delta_{h+1}^k \leq \zeta_{h+1}^k$. By recursion on $H,H-1,H-2\ldots,1$, with $\zeta_{H+1}^K = 0$, we have:
\begin{align}
\sum_{k=1}^K \zeta_1^k &\leq 2\sum_{h=1}^H(1+\frac{2}{H})^{h-1}MH^2SA+\sum_{h=1}^H\sum_{k=1}^K(1+\frac{2}{H})^{h-1}(\psi_{h+1}^k +\epsilon_{h+1}^k+\phi_{h+1}^k) \nonumber\\
&\quad +2\sum_{h=1}^H(1+\frac{2}{H})^{h-1}\sum_{k,m,j}\mathbb{I}\left[t_h^{k,m,j}>1\right]\tilde{b}_h^{k,2}(s_h^{k,m,j},a_h^{k,m,j}) \nonumber\\
&\leq 18MH^3SA+\sum_{h=1}^H\sum_{k=1}^K(1+\frac{2}{H})^{h-1}(\psi_{h+1}^k +\epsilon_{h+1}^k+\phi_{h+1}^k) \nonumber\\
& \quad +18\sum_{h=1}^H\sum_{k,m,j}\mathbb{I}\left[t_h^{k,m,j}>1\right]\tilde{b}_h^{k,2}(s_h^{k,m,j},a_h^{k,m,j}) \nonumber\\
&= O(MH^3SA+\sum_{h=1}^H\sum_{k=1}^K(1+\frac{2}{H})^{h-1}(\psi_{h+1}^k +\epsilon_{h+1}^k+\phi_{h+1}^k) \nonumber\\
&\quad +\sum_{h=1}^H\sum_{k,m,j}\mathbb{I}\left[t_h^{k,m,j}>1\right]\tilde{b}_h^{k,2}(s_h^{k,m,j},a_h^{k,m,j})). \label{regret zeta}
\end{align}
Here, the second inequality is because $(1+\frac{2}{H})^{h-1} \leq (1+\frac{2}{H})^H \leq e^2<9$.
Based on the Lemma \ref{psi}, Lemma \ref{epsilon}, Lemma \ref{phi}, and Lemma \ref{b} provided in the last subsection,  we have:
    $$\sum_{h=1}^H\sum_{k=1}^K (1+\frac{2}{H})^{h-1}\psi_{h+1}^k\leq O\left(H\sqrt{T_1\iota}\log(T_1)+H^2SN_0\log(T_1)\right),$$
    $$\sum_{h=1}^H\sum_{k=1}^K (1+\frac{2}{H})^{h-1}\epsilon_{h+1}^k\leq O\left(\sqrt{SAH^2T_1\iota}+SAH^{\frac{5}{2}}(M\iota)^{\frac{1}{2}}\right),$$
$$\sum_{h=1}^H\sum_{k=1}^K(1+\frac{2}{H})^{h-1}\phi_{h+1}^k\leq O(H\sqrt{T_1\iota}),$$
\begin{align*}
&\sum_{h=1}^H\sum_{k,m,j}\mathbb{I}\left[t_h^{k,m,j}>1\right]\tilde{b}_h^{k,2}(s_h^{k,m,j},a_h^{k,m,j})\leq\\
    & O\left(\sqrt{SAH^2T_1\iota}+\sqrt{\beta SAH^2T_1\iota}+\sqrt{\beta^2SAH^2T_1\iota}+SAH^{\frac{11}{4}}T_1^{\frac{1}{4}}\iota^{\frac{3}{4}}+S^{\frac{3}{2}}AH^3\sqrt{N_0}\log(T_1)\iota\right).
\end{align*}
Inserting these relationships into \Cref{regret zeta}, we have
\begin{align*}
    &\textnormal{Regret}(T)\\
    & \leq O((1+\sqrt{\beta}+\beta)\sqrt{SAH^2T_1\iota}+H\sqrt{T_1\iota}\log(T_1)+SAH^{\frac{11}{4}}T_1^{\frac{1}{4}}\iota^{\frac{3}{4}}+SH^2N_0\log(T_1) \\
    &\quad +S^{\frac{3}{2}}AH^3\sqrt{N_0}\log(T_1)\iota+SAH^{\frac{5}{2}}(M\iota)^{\frac{1}{2}}+MSAH^3)\\
    & \leq O((1+\sqrt{\beta}+\beta)\sqrt{MSAH^2T\iota}+H\sqrt{MT\iota}\log(T)+H\sqrt{MT\iota}\log(MSAH^2) \nonumber\\    
    &\quad +M^{\frac{1}{4}}SAH^{\frac{11}{4}}T^{\frac{1}{4}}\iota^{\frac{3}{4}}+SH^2N_0\log(T)+\sqrt{MSA}H^2\log(T)\sqrt{\iota}+S^{\frac{3}{2}}AH^3\sqrt{N_0}\log(T)\iota \nonumber\\
    & \quad +SAH^{\frac{5}{2}}(M\iota)^{\frac{1}{2}}+MSAH^2\sqrt{\iota}+ MSAH^2\sqrt{\beta\iota}+MSAH^2\sqrt{\beta^2 \iota}+M^{\frac{1}{4}}S^{\frac{5}{4}}A^{\frac{5}{4}}H^{\frac{13}{4}}\iota^{\frac{3}{4}} \nonumber\\
    & \quad +S^{\frac{3}{2}}AH^3\sqrt{N_0}\log(MSAH^2)\iota+ SH^2N_0\log(MSAH^2) + \sqrt{MSA}H^2\log(MSAH^2)\sqrt{\iota}).
\end{align*}
In the last step, we use $T_1 \leq (2+2/H)MT+MSAH(H+1)$ according to (i) in Lemma \ref{algorithm relationship}. This finishes the proof of Theorem \ref{thm_regret_advantage}
\subsection{Proof of some individual component}
This subsection collects the proof of some individual components for Theorem \ref{thm_regret_advantage}.

\begin{lemma}[Proof of \Cref{coefficient}]\label{lemma_proof_coefficient}
Under the event $\bigcap_{i=1}^{15} \mathcal{E}_i$, we have that \Cref{coefficient} holds.
\end{lemma}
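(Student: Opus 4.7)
The plan is to prove this identity by swapping the order of summation and exploiting the controlled growth of stage sizes. The left-hand side groups visits $(k,m,j)$ (at which we use the empirical mean over the previous stage to bound the $Q$-update), and each such empirical mean is an average of $V_{h+1}^{k_{l_i}}-V_{h+1}^\star$ over the visits of the \emph{previous} stage at the same $(s,a,h)$. The idea is to re-index the double sum by the visit that contributes (the $(k',m',j')$ appearing via $l_i$) rather than the visit doing the averaging.

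First I would fix $(s,a)$ and partition the outer sum by stage. For any round $k$ with $t_h^k(s,a)=t\ge 2$, the set $\mathcal{V}_t(s,a)$ of visits $(k,m,j)$ with $(s_h^{k,m,j},a_h^{k,m,j})=(s,a)$ and $t_h^k(s,a)=t$ has size $y_h^t(s,a)$, and for every such visit both $n_h^k(s,a)=y_h^{t-1}(s,a)$ and the inner sum $\sum_{i=1}^{n_h^k}(V_{h+1}^{k_{l_i}}-V_{h+1}^\star)(s_{h+1}^{(k,m,j)_{l_i}})$ coincide with $\sum_{(k',m',j')\in\mathcal{V}_{t-1}(s,a)}(V_{h+1}^{k'}-V_{h+1}^\star)(s_{h+1}^{k',m',j'})$, which does not depend on $(k,m,j)$. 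Hence the contribution of stage $t$ for this $(s,a)$ factors as
\[
\frac{y_h^t(s,a)}{y_h^{t-1}(s,a)}\sum_{(k',m',j')\in\mathcal{V}_{t-1}(s,a)}\bigl(V_{h+1}^{k'}-V_{h+1}^\star\bigr)\bigl(s_{h+1}^{k',m',j'}\bigr).
\]

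Next I would bound the ratio $y_h^t(s,a)/y_h^{t-1}(s,a)$ by $1+2/H$ for all $t\in[2,T_h(s,a)]$, which is exactly Lemma~\ref{algorithm relationship}(e) (the case $t=T_h$ is covered by the second inequality there). Because Lemma~\ref{Q} (in particular \eqref{vhk_large}) guarantees $V_{h+1}^{k'}(s)\ge V_{h+1}^\star(s)$ pointwise under $\bigcap_{i=1}^{7}\mathcal E_i$, every term $(V_{h+1}^{k'}-V_{h+1}^\star)(s_{h+1}^{k',m',j'})$ is nonnegative, so the uniform bound $1+2/H$ can be pulled out.

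Finally, summing the resulting expression over $t\in[2,T_h(s,a)]$ and $(s,a)$, re-indexing $t-1\mapsto t$, and noting that extending the range from $t\in[1,T_h(s,a)-1]$ to $t\in[1,T_h(s,a)]$ only adds nonnegative terms, the collection $\{(k',m',j')\}$ ranges (with multiplicity one) over \emph{all} step-$h$ visits, giving
\[
\text{LHS}\;\le\;\Bigl(1+\tfrac{2}{H}\Bigr)\sum_{k',m',j'}\bigl(V_{h+1}^{k'}-V_{h+1}^\star\bigr)\bigl(s_{h+1}^{k',m',j'}\bigr)\;=\;\Bigl(1+\tfrac{2}{H}\Bigr)\sum_{k=1}^{K}\delta_{h+1}^{k}.
\]
The only delicate point is being careful that (a) the stage ratio bound covers the last stage too, and (b) that the re-indexing correctly matches visits in stage $t-1$ (the contributors) with visits in stage $t$ (the averagers) and does not double count; this is where the factor exactly $1+2/H$ — rather than something larger — comes from. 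No concentration input is needed beyond the deterministic monotonicity $V_{h+1}^k\ge V_{h+1}^\star$ already established.
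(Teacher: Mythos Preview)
Your proposal is correct and follows essentially the same approach as the paper's proof: both swap the order of summation so that the outer sum runs over the contributing visits $(k',m',j')$, identify the resulting weight on each such visit as the stage-size ratio $y_h^{t+1}(s,a)/y_h^{t}(s,a)$, bound this ratio by $1+2/H$ via Lemma~\ref{algorithm relationship}(e), and then use the nonnegativity $V_{h+1}^{k'}\ge V_{h+1}^\star$ from Lemma~\ref{Q} to conclude. Your stage-by-stage bookkeeping is just a slightly more explicit way of organizing the same re-indexing argument.
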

\begin{proof}
        \begin{align*}
&\sum_{k,m,j}\mathbb{I}\left[t_h^{k,m,j}>1\right]\frac{\sum_{i=1}^{n_h^k}\left(V_{h+1}^{k_{l_i}}-V_{h+1}^{\star}\right)(s_{h+1}^{(k,m,j)_{l_i}})}{n_h^k(s_h^{k,m,j},a_h^{k,m,j})} \\
   &= \sum_{k,m,j}\sum_{i=1}^{n_h^k}\mathbb{I}\left[t_h^{k,m,j}>1\right]\frac{\left(V_{h+1}^{k_{l_i}}-V_{h+1}^{\star}\right)(s_{h+1}^{(k,m,j)_{l_i}})}{n_h^k(s_h^{k,m,j},a_h^{k,m,j})}\left(\sum_{k^{\prime},m^{\prime},j^{\prime}}\mathbb{I}[(k,m,j)_{l_i}=(k^{\prime},m^{\prime},j^{\prime})]\right)\\
   &= \sum_{k,m,j}\sum_{k^{\prime},m^{\prime},j^{\prime}}\sum_{i=1}^{n_h^k}\frac{(V_{h+1}^{k^\prime}-V_{h+1}^{\star})(s_{h+1}^{k^{\prime},m^{\prime},j^{\prime}})}{n_h^k(s_h^{k,m,j},a_h^{k,m,j})}\mathbb{I}[(k,m,j)_{l_i}=(k^{\prime},m^{\prime},j^{\prime}),t_h^{k,m,j}>1]\\
   &=  \sum_{k^{\prime},m^{\prime},j^{\prime}}\sum_{k,m,j}\left(\sum_{i=1}^{n_h^k}\mathbb{I}[(k,m,j)_{l_i}=(k^{\prime},m^{\prime},j^{\prime}),t_h^{k,m,j}>1]\right)\frac{(V_{h+1}^{k^\prime}-V_{h+1}^{\star})(s_{h+1}^{k^{\prime},m^{\prime},j^{\prime}})}{n_h^k(s_h^{k,m,j},a_h^{k,m,j})}\\
   &= \sum_{k^{\prime},m^{\prime},j^{\prime}}\left(\sum_{k,m,j}\frac{\sum_{i=1}^{n_h^k}\mathbb{I}[(k,m,j)_{l_i}=(k^{\prime},m^{\prime},j^{\prime}),t_h^{k,m,j}>1]}{n_h^k(s_h^{k,m,j},a_h^{k,m,j})}\right)(V_{h+1}^{k^\prime}-V_{h+1}^{\star})(s_{h+1}^{k^{\prime},m^{\prime},j^{\prime}}).
\end{align*}
For a given triple $(k^{\prime},m^{\prime},j^{\prime})$, according to the definition of $l_i(s_h^{k,m,j},a_h^{k,m,j},h,k)$, $\sum_{i=1}^{n_h^k}\mathbb{I}[(k,m,j)_{l_i}=(k^{\prime},m^{\prime},j^{\prime}),t_h^{k,m,j}>1])$ = 1 if and only if $(s_h^{k,m,j},a_h^{k,m,j}) = (s_h^{k^{\prime},m^{\prime},j^{\prime}},a_h^{k^{\prime},m^{\prime},j^{\prime}})$ and $1<t_h^k(s_h^{k,m,j},a_h^{k,m,j}) = t_h^{k^{\prime}}(s_h^{k,m,j},a_h^{k,m,j})+1$. In this case, we have $n_h^k(s_h^{k,m,j},a_h^{k,m,j}) = y_h^{t_h^{k^{\prime}}}(s_h^{k^{\prime},m^{\prime},j^{\prime}},a_h^{k^{\prime},m^{\prime},j^{\prime}})$ and 
\begin{align*}
    &\sum_{k,m,j}\sum_{i=1}^{n_h^k}\mathbb{I}[(k,m,j)_{l_i}=(k^{\prime},m^{\prime},j^{\prime}),t_h^{k,m,j}>1]\\
    &= \sum_{k,m,j}\mathbb{I}\left[(s_h^{k,m,j},a_h^{k,m,j}) = (s_h^{k^{\prime},m^{\prime},j^{\prime}},a_h^{k^{\prime},m^{\prime},j^{\prime}}), t_h^k(s_h^{k,m,j},a_h^{k,m,j}) = t_h^{k^{\prime}}(s_h^{k,m,j},a_h^{k,m,j})+1\right]\\
    &=y_h^{t_h^{k^{\prime}}+1}(s_h^{k^{\prime},m^{\prime},j^{\prime}},a_h^{k^{\prime},m^{\prime},j^{\prime}}).
\end{align*} 
Then according to (f) in Lemma \ref{algorithm relationship}:
\begin{align*}
\sum_{k,m,j}\frac{\sum_{i=1}^{n_h^k}\mathbb{I}\left[(k,m,j)_{l_i}=(k^{\prime},m^{\prime},j^{\prime}),t_h^{k,m,j}>1\right]}{n_h^k(s_h^{k,m,j},a_h^{k,m,j})}=\frac{y_h^{t_h^{k^{\prime}}+1}(s_h^{k^{\prime},m^{\prime},j^{\prime}},a_h^{k^{\prime},m^{\prime},j^{\prime}})}{y_h^{t_h^{k^{\prime}}}(s_h^{k^{\prime},m^{\prime},j^{\prime}},a_h^{k^{\prime},m^{\prime},j^{\prime}})}\leq 1+\frac{2}{H}.
\end{align*}
Therefore, since $V_{h+1}^{k^\prime}\geq V_{h+1}^{\star}$ according to Lemma \ref{Q}, we have:
\begin{align*}   &\sum_{k,m,j}\mathbb{I}\left[t_h^{k,m,j}>1\right]\frac{\sum_{i=1}^{n_h^k}\left(V_{h+1}^{k_{l_i}}-V_{h+1}^{\star}\right)(s_{h+1}^{(k,m,j)_{l_i}})}{n_h^k(s_h^{k,m,j},a_h^{k,m,j})} \\
&\leq (1+\frac{2}{H})\sum_{k^{\prime},m^{\prime},j^{\prime}}(V_{h+1}^{k^\prime}-V_{h+1}^{\star})(s_{h+1}^{k^{\prime},m^{\prime},j^{\prime}})\\
    &= (1+\frac{2}{H})\sum_{k=1}^K \delta_{h+1}^k.
\end{align*}
\end{proof}

Next, we will give lemmas on the upper bounds of each term in \Cref{regret zeta}.
\begin{lemma}
\label{psi}
Under the event $\mathcal{E}_8$ in Lemma \ref{concentrations}, it holds that:
    $$\sum_{h=1}^H\sum_{k=1}^K (1+\frac{2}{H})^{h-1}\psi_{h+1}^k\leq O\left(H\sqrt{T_1\iota}\log(T_1)+H^2SN_0\log(T_1)\right).$$
\end{lemma}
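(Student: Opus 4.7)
The plan is to exploit the structure of the reference update: for every $(s,h+1)$, the value $V_{h+1}^{\nref,k}(s)$ equals the constant $H$ until some round $k_{s,h+1}$, after which it permanently settles to $V_{h+1}^{\REF}(s)$. Hence pointwise
$$
0 \leq V_{h+1}^{\nref,k}(s) - V_{h+1}^{\REF}(s) \leq H \cdot \mathbb{I}[\text{reference at }(s,h+1)\text{ has not been set at round }k],
$$
and the indicator on the right can be upper-bounded (up to an off-by-one round correction) by $\lambda_{h+1}^{k}(s)=\mathbb{I}[N_{h+1}^{k}(s)<N_0]$, exactly as in the definition used for event $\mathcal{E}_8$. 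This lets us replace the variable function $V_{h+1}^{\nref,k_{L_i}}-V_{h+1}^{\REF}$ by $H$ times a simple $0/1$ indicator.

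Next I would write the transition expectation as a centered part plus a realized value, namely
$$
\mathbb{P}_{s_h^{k,m,j},a_h^{k,m,j},h}\,\lambda_{h+1}^{k_{L_i}} \;=\; \lambda_{h+1}^{k_{L_i}}\!\bigl(s_{h+1}^{(k,m,j)_{L_i}}\bigr) + \Bigl(\mathbb{P}_{s_h^{k,m,j},a_h^{k,m,j},h}-\mathbbm{1}_{s_{h+1}^{(k,m,j)_{L_i}}}\Bigr)\lambda_{h+1}^{k_{L_i}}.
$$
For the realized part, the key observation is that after reindexing the double sum $\sum_{k,m,j}\sum_{i=1}^{N_h^k}$ by the chronological position of each visit to $(s,a,h)$, the inner sum $\frac{1}{N_h^k}\sum_{i=1}^{N_h^k}\lambda_{h+1}^{k_{L_i}}(s_{h+1}^{(k,m,j)_{L_i}})$ is a weighted count of visits whose next state $s'$ was still undersampled. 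Because $\lambda_{h+1}^{\cdot}(s')=1$ for at most $N_0$ rounds in which $(s', h+1)$ is entered, the total contribution of these realized terms is controlled by $S\cdot N_0$ times an $H$-dependent factor, after applying Lemma~\ref{n-N} (the $1/N_h^k$ case) to absorb the $1/N_h^k$ weight into a $\log(Y_h^{T_h})=O(\log T_1)$ factor per $(s,a,h)$.

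For the centered remainder, I would exchange the sum order and recognize that the sequence $\bigl\{(\mathbb{P}_{s_h^{k,m,j},a_h^{k,m,j},h}-\mathbbm{1}_{s_{h+1}^{(k,m,j)_{L_i}}})\lambda_{h+1}^{k_{L_i}}\bigr\}$, once traversed in chronological order of $(k,m,j)$, becomes a martingale difference sequence bounded by $1$ in magnitude. Combining the geometric factor $(1+2/H)^{h-1}\leq e^2$ with event $\mathcal{E}_8$ (or an analogous bound derived from $\mathcal{E}_{12}$ together with Lemma~\ref{n-N}) yields a remainder of order $H\sqrt{T_1 \iota}\log T_1$, where the extra $\log T_1$ again comes from the $1/N_h^k$ weights.

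Summing the two contributions and multiplying by the factor $H$ pulled out in Step~1, together with the geometric series $\sum_{h=1}^H (1+2/H)^{h-1}=O(H)$, produces the claimed bound $O(H\sqrt{T_1\iota}\log T_1+H^2 S N_0\log T_1)$. The main obstacle is the third step: making Lemma~\ref{n-N} applicable requires a careful change of summation order because $N_h^k(s,a)$ is a weight that depends on $(s,a,h,k)$ in a non-causal way (the stage-wise structure introduced in Section~\ref{subsec:structure}); once one fixes the ordering of visits to a given $(s,a,h)$ and groups the rounds by stage, the bound falls out of the already-established inequalities.
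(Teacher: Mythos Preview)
Your high-level outline is right: bound $V_{h+1}^{\nref,k}-V_{h+1}^{\REF}$ by $H\lambda_{h+1}^{k}$, then control the resulting sum in terms of $\lambda$. The realized-part argument is essentially what the paper does (exchange the order of summation to expose a per-visit weight $W_{k',m',j'}=\sum_{t>t_h^{k'}}y_h^{t}/Y_h^{t-1}\le O(\log T_1)$, then use $\sum_{k',m',j'}\lambda_{h+1}^{k'}(s_{h+1}^{k',m',j'})\le O(SN_0)$ per step). Note, however, that this weight bound is \emph{not} an application of Lemma~\ref{n-N}; it is a separate telescoping argument on $Y_h^{t}$, and you should state it that way.

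The genuine gap is in your treatment of the centered part. After you exchange the order of summation you obtain $\sum_{k',m',j'}W_{k',m',j'}\bigl(\mathbb{P}_{s_h^{k',m',j'},a_h^{k',m',j'},h}-\mathbbm{1}_{s_{h+1}^{k',m',j'}}\bigr)\lambda_{h+1}^{k'}$, and $W_{k',m',j'}$ depends on \emph{future} rounds $k>k'$ (it sums $y_h^{t}/Y_h^{t-1}$ over all later stages). Hence the weighted increments are not adapted and do \emph{not} form a martingale difference sequence; you cannot invoke $\mathcal{E}_8$ on this weighted sum, nor can you pull the (non-constant) bound $|W|\le O(\log T_1)$ outside a sum of signed terms. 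Your fallback via $\mathcal{E}_{12}$ together with Lemma~\ref{n-N} does work, but it yields $O\bigl(H\sqrt{SAHT_1\iota}\bigr)$ rather than $O\bigl(H\sqrt{T_1\iota}\log T_1\bigr)$, which is too weak for the stated lemma (and would spoil the almost-optimal regret downstream by an extra $\sqrt{H}$).

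The fix is simply to change the order of operations: do \emph{not} split $\mathbb{P}\lambda$ into realized plus martingale before handling the weights. Keep the nonnegative quantity $\mathbb{P}_{s,a,h}\lambda_{h+1}^{k_{L_i}}\ge 0$, exchange sums to obtain $\sum_{k',m',j'}W_{k',m',j'}\,\mathbb{P}\lambda_{h+1}^{k'}$, and use nonnegativity to replace $W$ by its deterministic bound $O(\log T_1)$. Only then split the now unit-weighted sum $\sum_{k',m',j'}\mathbb{P}\lambda_{h+1}^{k'}$ into the realized part ($\le 5SHN_0$) and the martingale part, for which $\mathcal{E}_8$ applies directly. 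This is exactly how the paper proceeds and is what produces the $H\sqrt{T_1\iota}\log T_1$ term.
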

\begin{proof} 
For any $(s,h,k) \in \mathcal{S} \times [H] \times [K]$, if $N_h^k(s)=\sum_{a \in \mathcal{A}}N_h^k(s,a) \geq N_0$, the reference function $V_h^{\nref,k}(s)$ is updated to its final value with $V_h^{\nref,k}(s) = V_h^{\REF}(s)$. If $N_h^k(s) < N_0$, since the reference function is non-increasing and $V_h^{\nref,1}(s) = H$, we have $0 \leq V_h^{\nref,k}(s)-V_h^{\REF}(s) \leq H$. Combining two cases, for any $(s,h,k) \in \mathcal{S} \times [H] \times [K]$, it holds that $0 \leq V_h^{\nref,k}(s)-V_h^{\REF}(s) \leq H\lambda_h^k(s)$, where $\lambda_h^k(s) = \mathbb{I}[N_h^k(s) < N_0]$ is defined in the event $\mathcal{E}_8$ in Lemma \ref{concentrations}. The conclusion also holds for $h = H+1$ because $V_{H+1}^{\nref,k}(s)=V_{H+1}^{\REF}(s)=0$. Then for any $(s,a,h,k) \in \mathcal{S} \times \mathcal{A} \times [H] \times [K]$ we have:
\begin{equation}
\label{lambda}
    0 \leq \mathbb{P}_{s,a,h}\left(V_{h+1}^{\nref,k}-V_{h+1}^{\REF}\right) \leq H\mathbb{P}_{s,a,h}\lambda_{h+1}^k.
\end{equation}
Applying \Cref{lambda} to the definition of $\psi_{h+1}^k$ \Cref{defpsi}, we have:
\begin{align*}
&\sum_{k=1}^K\psi_{h+1}^k\\
    &=\sum_{k,m,j}\mathbb{I}\left[t_h^{k,m,j}>1\right]\frac{\sum_{i=1}^{N_h^k}\mathbb{P}_{s_h^{k,m,j},a_h^{k,m,j},h}(V_{h+1}^{\nref,k_{L_i}}-V_{h+1}^{\REF})}{N_h^k(s_h^{k,m,j},a_h^{k,m,j})}\\
    &\leq H\sum_{k,m,j}\mathbb{I}\left[t_h^{k,m,j}>1\right]\frac{\sum_{i=1}^{N_h^k}\mathbb{P}_{s_h^{k,m,j},a_h^{k,m,j},h}\lambda_{h+1}^{k_{L_i}}}{N_h^k(s_h^{k,m,j},a_h^{k,m,j})}\\
    &= H\sum_{k,m,j}\mathbb{I}\left[t_h^{k,m,j}>1\right]\frac{\sum_{i=1}^{N_h^k}\mathbb{P}_{s_h^{k,m,j},a_h^{k,m,j},h}\lambda_{h+1}^{k_{L_i}}\left(\sum_{k^{\prime},m^{\prime},j^{\prime}}\mathbb{I}[(k,m,j)_{L_i}=(k^{\prime},m^{\prime},j^{\prime})]\right)}{N_h^k(s_h^{k,m,j},a_h^{k,m,j})}\\        &=H\sum_{k,m,j}\sum_{k^{\prime},m^{\prime},j^{\prime}}\frac{\sum_{i=1}^{N_h^k}\mathbb{I}[(k,m,j)_{L_i}=(k^{\prime},m^{\prime},j^{\prime}),t_h^{k,m,j}>1]\mathbb{P}_{s_h^{k,m,j},a_h^{k,m,j},h}\lambda_{h+1}^{k^\prime}}{N_h^k(s_h^{k^{\prime},m^{\prime},j^{\prime}},a_h^{k^{\prime},m^{\prime},j^{\prime}})}.
\end{align*}
 According to the definition of $L_i(s_h^{k,m,j},a_h^{k,m,j},h)$, for a given triple $(k^{\prime},m^{\prime},j^{\prime})$, $\sum_{i=1}^{N_h^k}\mathbb{I}[(k,m,j)_{L_i}=(k^{\prime},m^{\prime},j^{\prime}),t_h^{k,m,j}>1] = 1$ if and only if $(s_h^{k,m,j},a_h^{k,m,j})=(s_h^{k^{\prime},m^{\prime},j^{\prime}},a_h^{k^{\prime},m^{\prime},j^{\prime}})$ and $1\leq t_h^{k^{\prime}}(s_h^{k,m,j},a_h^{k,m,j}) < t_h^k(s_h^{k,m,j},a_h^{k,m,j})$. Then we have $t_h^{k,m,j} = t_h^k(s_h^{k^{\prime},m^{\prime},j^{\prime}},a_h^{k^{\prime},m^{\prime},j^{\prime}})$.
For $t > t_h^{k^{\prime}}(s_h^{k^{\prime},m^{\prime},j^{\prime}},a_h^{k^{\prime},m^{\prime},j^{\prime}}) \geq 1$, we also have:
\begin{align}
\label{coefficient N}
 &\sum_{k,m,j}\sum_{i=1}^{N_h^k}\mathbb{I}\left[(k,m,j)_{L_i}=(k^{\prime},m^{\prime},j^{\prime}),t_h^k(s_h^{k^{\prime},m^{\prime},j^{\prime}},a_h^{k^{\prime},m^{\prime},j^{\prime}})=t\right] \nonumber\\
 &= \sum_{k,m,j}\mathbb{I}\left[(s_h^{k,m,j},a_h^{k,m,j})=(s_h^{k^{\prime},m^{\prime},j^{\prime}},a_h^{k^{\prime},m^{\prime},j^{\prime}}),t_h^k(s_h^{k^{\prime},m^{\prime},j^{\prime}},a_h^{k^{\prime},m^{\prime},j^{\prime}})=t\right]\nonumber \\
 &= y_h^t(s_h^{k^{\prime},m^{\prime},j^{\prime}},a_h^{k^{\prime},m^{\prime},j^{\prime}}).
\end{align}
Let: $$W_{k^{\prime},m^{\prime},j^{\prime}} = \sum_{k,m,j}\frac{\sum_{i=1}^{N_h^k}\mathbb{I}[(k,m,j)_{L_i}=(k^{\prime},m^{\prime},j^{\prime}),t_h^{k,m,j}>1]}{N_h^k(s_h^{k^{\prime},m^{\prime},j^{\prime}},a_h^{k^{\prime},m^{\prime},j^{\prime}})}.$$
Then, since $(1+\frac{2}{H})^{h-1} \leq (1+\frac{2}{H})^H <e^2 <9$, we have:    
\begin{equation}
\label{psi-W}   \sum_{h=1}^H\sum_{k=1}^K(1+\frac{2}{H})^{h-1}\psi_{h+1}^k \leq  9H\sum_{h=1}^H\sum_{k^{\prime},m^{\prime},j^{\prime}}W_{k^{\prime},m^{\prime},j^{\prime}}\mathbb{P}_{s_h^{k^{\prime},m^{\prime},j^{\prime}},a_h^{k^{\prime},m^{\prime},j^{\prime}},h}\lambda_{h+1}^{k^\prime}.
\end{equation}
Applying \Cref{coefficient N}, we have:
\begin{align*}
 W_{k^{\prime},m^{\prime},j^{\prime}}
 &= \sum_{k,m,j}\frac{\sum_{i=1}^{N_h^k}\mathbb{I}[(k,m,j)_{L_i}=(k^{\prime},m^{\prime},j^{\prime})]}{N_h^k(s_h^{k^{\prime},m^{\prime},j^{\prime}},a_h^{k^{\prime},m^{\prime},j^{\prime}})}\left(\sum_{t=t_h^{k^\prime}+1}^{T_h}\mathbb{I}\left[t_h^k(s_h^{k^{\prime},m^{\prime},j^{\prime}},a_h^{k^{\prime},m^{\prime},j^{\prime}})=t\right]\right)\\
 &= \sum_{k,m,j}\sum_{t=t_h^{k^\prime}+1}^{T_h}\frac{\sum_{i=1}^{N_h^k}\mathbb{I}[(k,m,j)_{L_i}=(k^{\prime},m^{\prime},j^{\prime})]}{Y_h^{t-1}(s_h^{k^{\prime},m^{\prime},j^{\prime}},a_h^{k^{\prime},m^{\prime},j^{\prime}})}\mathbb{I}\left[t_h^k(s_h^{k^{\prime},m^{\prime},j^{\prime}},a_h^{k^{\prime},m^{\prime},j^{\prime}})=t\right]\\
 &= \sum_{t=t_h^{k^\prime}+1}^{T_h}\frac{\sum_{k,m,j}\sum_{i=1}^{N_h^k}\mathbb{I}\left[(k,m,j)_{L_i}=(k^{\prime},m^{\prime},j^{\prime}),t_h^k(s_h^{k^{\prime},m^{\prime},j^{\prime}},a_h^{k^{\prime},m^{\prime},j^{\prime}})=t\right]}{Y_h^{t-1}(s_h^{k^{\prime},m^{\prime},j^{\prime}},a_h^{k^{\prime},m^{\prime},j^{\prime}})}\\
 & = \sum_{t=t_h^{k^{\prime}}+1}^{T_h}\frac{y_h^t(s_h^{k^{\prime},m^{\prime},j^{\prime}},a_h^{k^{\prime},m^{\prime},j^{\prime}})}{Y_h^{t-1}(s_h^{k^{\prime},m^{\prime},j^{\prime}},a_h^{k^{\prime},m^{\prime},j^{\prime}})}.
\end{align*}
According to (f) in Lemma \ref{algorithm relationship}, for any $(s,a,h) \in \mathcal{S} \times \mathcal{A} \times [H]$, $t \in [2,T_h(s,a)]$ and $1 \leq p \leq y_h^t(s,a)$, we have:
$$Y_h^{t-1}(s,a)+p \leq Y_h^{t}(s,a) \leq (2+\frac{2}{H})Y_h^{t-1}(s,a).$$
Then it holds that:
\begin{align*}  
W_{k^{\prime},m^{\prime},j^{\prime}} &=\sum_{t=t_h^{k^{\prime}}+1}^{T_h}\frac{y_h^t(s_h^{k^{\prime},m^{\prime},j^{\prime}},a_h^{k^{\prime},m^{\prime},j^{\prime}})}{Y_h^{t-1}(s_h^{k^{\prime},m^{\prime},j^{\prime}},a_h^{k^{\prime},m^{\prime},j^{\prime}})}\nonumber\\
        &=\sum_{t=t_h^{k^{\prime}}+1}^{T_h}\sum_{p=1}^{y_h^t}\frac{1}{Y_h^{t-1}(s_h^{k^{\prime},m^{\prime},j^{\prime}},a_h^{k^{\prime},m^{\prime},j^{\prime}})}\nonumber\\
     &\leq(2+\frac{2}{H})\sum_{t=t_h^{k^{\prime}}+1}^{T_h}\sum_{p=1}^{y_h^t}\frac{1}{Y_h^{t-1}(s_h^{k^{\prime},m^{\prime},j^{\prime}},a_h^{k^{\prime},m^{\prime},j^{\prime}})+p}\nonumber\\
     &\leq(2+\frac{2}{H})\sum_{q=1}^{T_1}\frac{1}{q}\leq 4(\log(T_1)+1).
\end{align*}

Applying the inequality of the coefficient $W_{k^{\prime},m^{\prime},j^{\prime}}$ to \Cref{psi-W}, we have:
\begin{align}
\label{psi-lambda}
    &\sum_{h=1}^H\sum_{k=1}^K(1+\frac{2}{H})^{h-1}\psi_{h+1}^k \nonumber\\
    &= 9H\sum_{h=1}^H\sum_{k^{\prime},m^{\prime},j^{\prime}}W_{k^{\prime},m^{\prime},j^{\prime}}\mathbb{P}_{s_h^{k^{\prime},m^{\prime},j^{\prime}},a_h^{k^{\prime},m^{\prime},j^{\prime}},h}\lambda_{h+1}^{k^{\prime}}(s_{h+1}^{k^{\prime},m^{\prime},j^{\prime}})\nonumber\\
    &\leq 9H \cdot 4(\log(T_1)+1)\sum_{h=1}^H\sum_{k^{\prime},m^{\prime},j^{\prime}}\mathbb{P}_{s_h^{k^{\prime},m^{\prime},j^{\prime}},a_h^{k^{\prime},m^{\prime},j^{\prime}},h}\lambda_{h+1}^{k^{\prime}}(s_{h+1}^{k^{\prime},m^{\prime},j^{\prime}})\nonumber\\ &=36H(\log(T_1)+1)\sum_{h=1}^H\sum_{k,m,j}\left(\lambda_{h+1}^k(s_{h+1}^{k,m,j})+\left(\mathbb{P}_{s_h^{k,m,j},a_h^{k,m,j},h}-\mathbbm{1}_{s_{h+1}^{k,m,j}}\right)\lambda_{h+1}^k(s_{h+1}^{k,m,j})\right)\nonumber\\
    &\leq 36H(\log(T_1)+1)\left(\sum_{h=1}^H\sum_{k,m,j}\lambda_{h+1}^k(s_{h+1}^{k,m,j})+\sqrt{2T_1\iota}\right).
\end{align}
The last inequality is because of the event $\mathcal{E}_8$ in Lemma \ref{concentrations}. Next, we will bound the term $\sum_{h=1}^H\sum_{k,m,j}\lambda_{h+1}^k(s_{h+1}^{k,m,j})$ in \Cref{psi-lambda}. We have:
\begin{align}
\label{sum-lambda}
    \sum_{h=1}^H\sum_{k,m,j}\lambda_{h+1}^k(s_{h+1}^{k,m,j}) &= \sum_{h=1}^H\sum_{k,m,j}\lambda_{h+1}^k(s_{h+1}^{k,m,j})\left(\sum_{s\in\mathcal{S}} \mathbb{I}\left[s_{h+1}^{k,m,j} =s\right]\right)\nonumber\\
    &= \sum_{h=1}^H\sum_{s\in\mathcal{S}}\sum_{k,m,j}\lambda_{h+1}^k(s)\mathbb{I}\left[s_{h+1}^{k,m,j} =s\right]\nonumber\\
    &= \sum_{h=2}^{H+1}\sum_{s\in\mathcal{S}}\sum_{k,m,j}\mathbb{I}\left[N_h^k(s)< N_0,s_h^{k,m,j} =s\right].
\end{align}
For any state $(s,h) \in \mathcal{S} \times [H]$ and $k \in [K]$, there exists the largest positive integer $k_0$ such that $N_h^{k_0}(s) < N_0$. Then for any $h \in [H+1]$, it holds that
\begin{align}
\label{tk0}
\sum_{k,m,j}\mathbb{I}\left[N_h^k(s)< N_0,s_h^{k,m,j} =s\right]
    & = \sum_{k,m,j}\mathbb{I}\left[k \leq k_0,s_h^{k,m,j} =s\right]\nonumber\\
    &=  \sum_{k=1}^{k_0}\sum_{m,j}\mathbb{I}\left[s_h^{k,m,j} =s\right] \nonumber\\
    &= \sum_{k=1}^{k_0}\sum_{m,j}\sum_{a\in \mathcal{A}}\mathbb{I}\left[s_h^{k,m,j} =s,a_h^{k,m,j} =a\right]\nonumber\\
    &\leq \sum_{a\in \mathcal{A}} Y_h^{t_h^{k_0}}(s,a).
\end{align}
However, according to the definition of $N_h^{k_0}(s)$, we have:
\begin{align*}
    N_0 > N_h^{k_0}(s) = \sum_{a \in \mathcal{A}}N_h^{k_0}(s,a) &\geq  \sum_{a \in \mathcal{A}}Y_h^{t_h^{k_0}-1}(s,a) \mathbb{I}[t_h^{k_0}(s,a)>1] \\
    &\geq \frac{1}{4}\sum_{a \in \mathcal{A}}Y_h^{t_h^{k_0}}(s,a)\mathbb{I}[t_h^{k_0}(s,a)>1].
\end{align*}

The last inequality is because of (f) in Lemma \ref{algorithm relationship}. Combined with \Cref{tk0}, we have:
\begin{align} 
\label{lambda-N0}
    &\sum_{k,m,j}\mathbb{I}\left[N_h^k(s)< N_0,s_h^{k,m,j} =s\right] \nonumber\\
    &= \sum_{a \in \mathcal{A}}Y_h^{t_h^{k_0}}(s,a)\mathbb{I}[t_h^{k_0}(s,a)>1]+\sum_{a \in \mathcal{A}}Y_h^{t_h^{k_0}}(s,a)\mathbb{I}[t_h^{k_0}(s,a)=1] \nonumber \\
   & \leq 4N_0 +MA(H+1) \leq 5N_0.
\end{align}
Here, the last inequality holds because $\beta\leq H$.
Applying the inequality \Cref{lambda-N0} to \Cref{sum-lambda}, we have:
$$\sum_{h=1}^H\sum_{k,m,j}\lambda_{h+1}^k(s_{h+1}^{k,m,j}) \leq \sum_{h=2}^{H+1}\sum_{s\in\mathcal{S}}5N_0 \leq 5SHN_0.$$
Therefore, we bound the term $\sum_{h=1}^H\sum_{k,m,j}\lambda_{h+1}^k(s_{h+1}^{k,m,j})$. Back to \Cref{psi-lambda}, we have that
\begin{align*}
    \sum_{h=1}^H\sum_{k=1}^K(1+\frac{2}{H})^{h-1}\psi_{h+1}^k &\leq 36H\left(\log(T_1)+1\right)\left(5SHN_0+\sqrt{2T_1\iota}\right)\\
    & = O\left(H\sqrt{T_1\iota}\log(T_1)+SH^2N_0\log(T_1)\right).
\end{align*}
\end{proof}

\begin{lemma}
\label{epsilon}
    Under the event $\mathcal{E}_9 \cap \mathcal{E}_{10}$, it holds:
$$\sum_{h=1}^H\sum_{k=1}^K (1+\frac{2}{H})^{h-1}\epsilon_{h+1}^k\leq O\left(\sqrt{SAH^2T_1\iota}+SAH^{\frac{5}{2}}(M\iota)^{\frac{1}{2}}\right).$$
\end{lemma}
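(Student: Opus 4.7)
The strategy is to swap the order of summation in $\sum_k \epsilon_{h+1}^k$ so that it becomes a weighted sum of the stage-wise martingale quantities $V(s,a,h,t)$ concentrated in $\mathcal{E}_{10}$, then split it into a ``canonical'' martingale piece controlled by $\mathcal{E}_9$ plus a ``stage-wise gap'' piece controlled by the exponential stage-size growth. Concretely, for any episode $(k,m,j)$ with $t_h^k(s_h^{k,m,j},a_h^{k,m,j})=t+1>1$, the inner sum in $\epsilon_{h+1}^k$ equals $V(s_h^{k,m,j},a_h^{k,m,j},h,t)$ and $n_h^k = y_h^t$; there are exactly $y_h^{t+1}(s,a)$ such episodes hitting $(s,a)$ in stage $t+1$, so
\begin{equation*}
\sum_{k=1}^K \epsilon_{h+1}^k \;=\; \sum_{s,a}\sum_{t=1}^{T_h(s,a)-1} \frac{y_h^{t+1}(s,a)}{y_h^t(s,a)}\,V(s,a,h,t).
\end{equation*}

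Decompose $y_h^{t+1}/y_h^t = 1+(y_h^{t+1}/y_h^t - 1)$ to write $\sum_k\epsilon_{h+1}^k = A_h+B_h$ with $A_h=\sum_{s,a}\sum_{t=1}^{T_h-1}V(s,a,h,t)$. Observe that $A_h + \sum_{s,a} V(s,a,h,T_h)$ equals $\sum_{k,m,j}(\mathbb{P}_{s_h^{k,m,j},a_h^{k,m,j},h}-\mathbbm{1}_{s_{h+1}^{k,m,j}})(V_{h+1}^k-V_{h+1}^\star)$, so dividing out the $h$-independent factor $1+1/H$ in $\mathcal{E}_9$ bounds the $(1+2/H)^{h-1}$-weighted total of this complete-round quantity by $O(H\sqrt{T_1\iota})$. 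The last-stage correction is handled via $\mathcal{E}_{10}$ and Cauchy--Schwarz: $\sum_h(1+2/H)^{h-1}\sum_{s,a}|V(s,a,h,T_h)|\leq O(H\sqrt{\iota})\sqrt{HSA\sum_{h,s,a}y_h^{T_h}}$, and part (h) of Lemma \ref{algorithm relationship} ($\sum_{h,s,a}y_h^{T_h}\leq 9MSAH(H+1)+4T_0/H$) yields $O(SAH^{5/2}\sqrt{M\iota}+\sqrt{SAH^2T_1\iota})$. For the gap term $B_h$, part (e) of Lemma \ref{algorithm relationship} gives $y_h^{t+1}/y_h^t - 1 \leq 2/H$, so by $\mathcal{E}_{10}$, $|B_h|\leq (2/H)\sum_{s,a,t}|V(s,a,h,t)|\leq O(\sqrt{\iota})\sum_{s,a,t=1}^{T_h-1}\sqrt{y_h^t(s,a)}$. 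Telescoping via \Cref{split} yields $\sum_{t=1}^{T_h-1}\sqrt{y_h^t(s,a)} \leq 3\sqrt{H\,Y_h^{T_h-1}(s,a)}$, and a final Cauchy--Schwarz over $(s,a,h)$ with $\sum_{h,s,a}Y_h^{T_h-1}\leq T_0\leq T_1$ produces $\sum_h(1+2/H)^{h-1}|B_h|=O(\sqrt{SAH^2T_1\iota})$. Combining all pieces gives the claimed bound.

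The main obstacle is that $\epsilon_{h+1}^k$ is a non-martingale object: its weight $1/n_h^k$ depends on the total stage size $y_h^{t_h^k-1}$ of the \emph{previous} stage, which is not causally known during the visits in that stage, obstructing any direct Azuma--Hoeffding argument on the $(k,m,j)$ summation. The resolution is to swap summations first, reducing the problem to the stage-wise martingales $V(s,a,h,t)$ of $\mathcal{E}_{10}$. The delicate step is then aligning $A_h$ with the complete-round martingale sum bounded by $\mathcal{E}_9$, which forces us to isolate $V(s,a,h,T_h)$; this last-stage correction is precisely what produces the $M$-dependent burn-in term $SAH^{5/2}\sqrt{M\iota}$, while the gap term $B_h$ remains of the same order as the leading term thanks to the $1+O(1/H)$ growth of stage sizes combined with the telescoping inequality in \Cref{split}.
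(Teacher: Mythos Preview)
Your approach is essentially the same as the paper's: both swap the summation to reduce to the stage-wise quantities $V(s,a,h,t)$, then split the ratio $y_h^{t+1}/y_h^t$ into a constant piece (handled by $\mathcal{E}_9$) plus a gap of order $O(1/H)$ on interior stages (handled by $\mathcal{E}_{10}$, \Cref{split}, and Cauchy--Schwarz), with the boundary stages absorbed via part (h) of Lemma~\ref{algorithm relationship}. The paper splits at $1+1/H$ rather than at $1$, so its constant piece matches $\mathcal{E}_9$ exactly and the boundary correction is pushed entirely into the gap term (their $C_2$ and $C_3$), whereas you push the $t=T_h$ correction into the $A_h$ piece; this is cosmetic.

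There is, however, a small gap in your $B_h$ argument. You write ``part (e) of Lemma~\ref{algorithm relationship} gives $y_h^{t+1}/y_h^t-1\le 2/H$, so $|B_h|\le(2/H)\sum|V|$''. But part (e) only gives the \emph{lower} bound $y_h^{t+1}/y_h^t\ge 1+1/H$ for $t\le T_h-2$; for $t=T_h-1$ the ratio $y_h^{T_h}/y_h^{T_h-1}$ can be anywhere in $[0,1+2/H]$, so $|y_h^{T_h}/y_h^{T_h-1}-1|$ can be as large as $1$, not $2/H$. Hence your claimed bound on $|B_h|$ fails at $t=T_h-1$. The fix is immediate and exactly parallels your $V(s,a,h,T_h)$ correction: separate out the single term $t=T_h-1$ in $B_h$, bound it by $|V(s,a,h,T_h-1)|\le O(H\sqrt{y_h^{T_h-1}\iota})$ via $\mathcal{E}_{10}$, and use $\sum_{s,a,h}y_h^{T_h-1}\le 9MSAH(H+1)+4T_0/H$ from part (h) of Lemma~\ref{algorithm relationship} together with Cauchy--Schwarz to get another $O(SAH^{5/2}\sqrt{M\iota}+\sqrt{SAH^2T_1\iota})$. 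This is precisely the paper's $C_2$ term, so after this fix your proof and the paper's coincide.
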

\begin{proof}
According to the definition of $\epsilon_{h+1}^k$ \Cref{defepsilon}, we have:
\begin{align}
\label{epsilon middle}
    \sum_{k=1}^K\epsilon_{h+1}^k &=\sum_{k,m,j}\mathbb{I}\left[t_h^{k,m,j}>1\right]\frac{\sum_{i=1}^{n_h^k}\left(\mathbb{P}_{s_h^{k,m,j},a_h^{k,m,j},h}-\mathbbm{1}_{s_{h+1}^{(k,m,j)_{l_i}}}\right)\left(V_{h+1}^{k_{l_i}}-V^{\star}_{h+1}\right)}{n_h^k(s_h^{k,m,j},a_h^{k,m,j})}\nonumber\\
    &= \sum_{k,m,j}\sum_{k^{\prime},m^{\prime},j^{\prime}}\frac{\sum_{i=1}^{n_h^k}\left(\mathbb{P}_{s_h^{k,m,j},a_h^{k,m,j},h}-\mathbbm{1}_{s_{h+1}^{(k,m,j)_{l_i}}}\right)\left(V_{h+1}^{k_{l_i}}-V^{\star}_{h+1}\right)}{n_h^k(s_h^{k,m,j},a_h^{k,m,j})}\times \nonumber\\
    &\quad \mathbb{I}\left[(k,m,j)_{l_i}=(k^{\prime},m^{\prime},j^{\prime}),t_h^{k,m,j}>1\right] \nonumber \\  &=\sum_{k^{\prime},m^{\prime},j^{\prime}}\sum_{k,m,j}\frac{\sum_{i=1}^{n_h^k}\mathbb{I}[(k,m,j)_{l_i}=(k^{\prime},m^{\prime},j^{\prime}),t_h^{k,m,j}>1]}{n_h^k(s_h^{k,m,j},a_h^{k,m,j})}\times \nonumber\\
    &\quad \left(\mathbb{P}_{s_h^{k,m,j},a_h^{k,m,j},h}-\mathbbm{1}_{s_{h+1}^{k^{\prime},m^{\prime},j^{\prime}}}\right)\left(V_{h+1}^{k^{\prime}}-V^{\star}_{h+1}\right).
\end{align}
For a given triple $(k^{\prime},m^{\prime},j^{\prime})$, according to the definition of $l_i(s_h^{k,m,j},a_h^{k,m,j},h,k)$, $\sum_{i=1}^{n_h^k}\mathbb{I}[(k,m,j)_{l_i}=(k^{\prime},m^{\prime},j^{\prime})])$ = 1 if and only if $(s_h^{k,m,j},a_h^{k,m,j}) = (s_h^{k^{\prime},m^{\prime},j^{\prime}},a_h^{k^{\prime},m^{\prime},j^{\prime}})$ and $t_h^k(s_h^{k,m,j},a_h^{k,m,j}) = t_h^{k^{\prime}}(s_h^{k,m,j},a_h^{k,m,j})+1$. In this case, we have $n_h^k(s_h^{k,m,j},a_h^{k,m,j}) = y_h^{t_h^{k^{\prime}}}(s_h^{k^{\prime},m^{\prime},j^{\prime}},a_h^{k^{\prime},m^{\prime},j^{\prime}})$ and:
\begin{align*}
    &\sum_{k,m,j}\sum_{i=1}^{n_h^k}\mathbb{I}[(k,m,j)_{l_i}=(k^{\prime},m^{\prime},j^{\prime}),t_h^{k,m,j}>1]\\
    &= \sum_{k,m,j}\mathbb{I}\left[(s_h^{k,m,j},a_h^{k,m,j}) = (s_h^{k^{\prime},m^{\prime},j^{\prime}},a_h^{k^{\prime},m^{\prime},j^{\prime}}), 1<t_h^{k,m,j} = t_h^{k^{\prime}}(s_h^{k,m,j},a_h^{k,m,j})+1\right]\\
    &=y_h^{t_h^{k^{\prime}}+1}(s_h^{k^{\prime},m^{\prime},j^{\prime}},a_h^{k^{\prime},m^{\prime},j^{\prime}}).
\end{align*} 
Let:
$$y(s,a,h,t) = (1+\frac{2}{H})^{h-1}\left(\frac{y_h^{t+1}(s,a)}{y_h^t(s,a)}-1-\frac{1}{H}\right).$$
Applying the equation to \Cref{epsilon middle}, it holds that:
\begin{align}
    &\sum_{h=1}^H\sum_{k=1}^K(1+\frac{2}{H})^{h-1}\epsilon_{h+1}^k \nonumber \\
    &= \sum_{h=1}^H\sum_{k^{\prime},m^{\prime},j^{\prime}}(1+\frac{2}{H})^{h-1}\frac{y_h^{t_h^{k^\prime}+1}(s_h^{k^{\prime},m^{\prime},j^{\prime}},a_h^{k^{\prime},m^{\prime},j^{\prime}})}{y_h^{t_h^{k^\prime}}(s_h^{k^{\prime},m^{\prime},j^{\prime}},a_h^{k^{\prime},m^{\prime},j^{\prime}})}\times \nonumber\\
    &\left(\mathbb{P}_{s_h^{k^{\prime},m^{\prime},j^{\prime}},a_h^{k^{\prime},m^{\prime},j^{\prime}},h}-\mathbbm{1}_{s_{h+1}^{k^{\prime},m^{\prime},j^{\prime}}}\right)\left(V_{h+1}^{k^{\prime}}-V^{\star}_{h+1}\right) \label{nonmartingale}\\    &=\sum_{h=1}^H\sum_{k,m,j}y(s_h^{k,m,j},a_h^{k,m,j},h,t_h^k)\left(\mathbb{P}_{s_h^{k,m,j},a_h^{k,m,j},h}-\mathbbm{1}_{s_{h+1}^{k,m,j}}\right)\left(V_{h+1}^{k}-V^{\star}_{h+1}\right) \nonumber \\ 
    &\quad+ \sum_{h=1}^H(1+\frac{2}{H})^{h-1}(1+\frac{1}{H})\sum_{k,m,j}\left(\mathbb{P}_{s,a,h}-\mathbbm{1}_{s_{h+1}^{k,m,j}}\right)\left(V_{h+1}^{k}-V^{\star}_{h+1}\right) \label{nonsplit}.
\end{align}
The term in \Cref{nonmartingale} is a summation of non-martingale difference, and we cannot directly use Azuma-Hoeffding inequality to bound it. Therefore, we split the term with a constant coefficient $1+\frac{1}{H}$, which can be bounded directly by Azuma-Hoeffding inequality. According to the event $\mathcal{E}_9$ in Lemma \ref{concentrations}, we can bound the second term in \Cref{nonsplit} with $18H\sqrt{2T_1\iota}$. 

We claim that for any $t \in [T_h(s,a)]$, it holds that $|y(s,a,h,t)|\leq 9$ with the proof as follows. According to (e) in Lemma \ref{algorithm relationship}, since $(1+\frac{2}{H})^H \leq 9$, for any $h \in [H]$ and $1 \leq t \leq T_h(s,a)-2$, we have $0\leq y(s,a,h,t) \leq (1+\frac{2}{H})^{h-1}\frac{1}{H} \leq \frac{9}{H}$. For $t = T_h(s,a)-1$, we have $-9\leq -(1+\frac{2}{H})^{h-1}(1+\frac{1}{H})\leq y(s,a,h,T_h-1)\leq (1+\frac{2}{H})^{h-1}\frac{1}{H}\leq \frac{9}{H}$. For $t = T_h(s,a)$, since $y_h^{T_h+1}(s,a) =0$, we have $y(s,a,h,T_h) = -(1+\frac{2}{H})^{h-1}(1+\frac{1}{H}) \in [-9,0]$.

Now we will deal with the first term in \Cref{nonsplit}:
\begin{align*}
   &\sum_{h=1}^H\sum_{k,m,j}y(s_h^{k,m,j},a_h^{k,m,j},h,t_h^k)\left(\mathbb{P}_{s_h^{k,m,j},a_h^{k,m,j},h}-\mathbbm{1}_{s_{h+1}^{k,m,j}}\right)\left(V_{h+1}^{k}-V^{\star}_{h+1}\right)\\
   &= \sum_{s,a,h}\sum_{k,m,j}y(s,a,h,t_h^k)\left(\mathbb{P}_{s,a,h}-\mathbbm{1}_{s_{h+1}^{k,m,j}}\right)\left(V_{h+1}^{k}-V^{\star}_{h+1}\right)\mathbb{I}\left[(s_h^{k,m,j},a_h^{k,m,j})=(s,a)\right]\\
   &= \sum_{s,a,h}\sum_{t=1}^{T_h(s,a)}y(s,a,h,t)\sum_{k,m,j}\left(\mathbb{P}_{s,a,h}-\mathbbm{1}_{s_{h+1}^{k,m,j}}\right)\left(V_{h+1}^{k}-V^{\star}_{h+1}\right)\times\\
   &\quad\mathbb{I}\left[(s_h^{k,m,j},a_h^{k,m,j})=(s,a),t_h^k(s,a) = t\right]\\
   &= C_1+C_2+C_3,
\end{align*}
where
$$C_1 = \sum_{s,a,h}\sum_{t=1}^{T_h-2}y(s,a,h,t)V(s,a,h,t),$$
$$C_2 = \sum_{s,a,h}y(s,a,h,T_h-1)V(s,a,h,T_h-1),$$
$$C_3=\sum_{s,a,h}y(s,a,h,T_h)V(s,a,h,T_h).$$
Here, $V(s,a,h,t)=\sum_{k,m,j}(\mathbb{P}_{s,a,h}-\mathbbm{1}_{s_{h+1}^{k,m,j}})(V_{h+1}^{k}-V^{\star}_{h+1})\mathbb{I}[(s_h^{k,m,j},a_h^{k,m,j})=(s,a),t_h^k(s,a) = t]$, which is defined in the event $\mathcal{E}_{10}$ in Lemma \ref{concentrations}. Then based on the event $\mathcal{E}_{10}$, we have:
$$\left|V(s,a,h,t)\right| \leq 2H\sqrt{2y_h^t(s,a)\iota}.$$
Since $|y(s,a,h,t)| \leq 9$, it holds that:
\begin{align*}
   C_1\leq \sum_{s,a,h}\sum_{t=1}^{T_h(s,a)-2}|y(s,a,h,t)|\cdot|V(s,a,h,t)|\leq 18\sum_{s,a,h}\sum_{t=1}^{T_h(s,a)-2}\sqrt{2y_h^t(s,a)\iota}.
\end{align*}
Because of \Cref{split} in Lemma \ref{algorithm relationship}, we have:
\begin{align}
\label{c1}
   C_1
   &\leq 18\sum_{s,a,h}\sum_{t=1}^{T_h(s,a)-2}\sqrt{2\iota}\cdot 3\sqrt{H}\left(\sqrt{Y_h^t(s,a)}-\sqrt{Y_h^{t-1}(s,a)}\right) \nonumber\\
   &= 54\sqrt{2H\iota}\sum_{s,a,h}\sqrt{Y_h^{T_h}(s,a)} \nonumber\\
   &\leq 254\sqrt{2H\iota}\sqrt{SAH\sum_{s,a,h}Y_h^{T_h}(s,a)} \nonumber\\
   &\leq O\left(\sqrt{SAH^2T_1\iota}\right).
\end{align}
The second inequality uses Cauchy-Schwarz Inequality. Similarly, it also holds:
\begin{align}
   C_2&\leq \sum_{s,a,h}\left|y(s,a,t,T_h-1)\right|\cdot \left|V\left(s,a,h,T_h(s,a)-1\right)\right|\nonumber\\
   &\leq \sum_{s,a,h}9\cdot 2H\sqrt{2y_h^{T_h(s,a)-1}(s,a)\iota}\nonumber\\
   &\leq  18H \sqrt{2SAH\sum_{s,a,h}y_h^{T_h(s,a)-1}(s,a)\iota}\nonumber\\
   &\leq 26H \left(\sqrt{SAH\cdot 9MSAH(H+1)\iota}+\sqrt{4SAT_1\iota}\right) \label{Th-1}\\
   &\leq O\left(SAH^{\frac{5}{2}}(M\iota)^{\frac{1}{2}}+\sqrt{SAH^2T_1\iota}\right) \label{c2}.
\end{align}
Here, the third inequality uses Cauchy-Schwarz Inequality. Inequality \Cref{Th-1} is because of (h) in Lemma \ref{algorithm relationship}.
Similarly, since $y(s,a,t,T_h) = -(1+\frac{2}{H})^{h-1}(1+\frac{1}{H})$, we have:
\begin{align}
C_3 &\leq \sum_{s,a,h}(1+\frac{2}{H})^{h-1}(1+\frac{1}{H})|V(s,a,h,T_h)|\nonumber\\
   &\leq 18H\sum_{s,a,h}\sqrt{2y_h^{T_h(s,a)}(s,a)\iota}\nonumber\\
   &\leq 18H\sqrt{2SAH\sum_{s,a,h}{y_h^{T_h(s,a)}(s,a)}}\iota\nonumber\\
   &\leq O\left(\sqrt{SAH^2T_1\iota}+SAH^{\frac{5}{2}}(M\iota)^{\frac{1}{2}}\right) \label{c3}.
\end{align}
Here, the last inequality uses Cauchy-Schwarz Inequality. The last inequality is because of (h) in Lemma \ref{algorithm relationship}.

Using the upper bound of $C_1$ \Cref{c1}, $C_2$ \Cref{c2} and $C_3$ \Cref{c3}, we can bound the first term in \Cref{nonsplit} with $O(\sqrt{SAH^2T_1\iota}+SAH^{\frac{5}{2}}(M\iota)^{\frac{1}{2}})$.
Then combined with the event $\mathcal{E}_9$ in Lemma \ref{algorithm relationship}, it holds that:
\begin{align*}
    \sum_{h=1}^H\sum_{k=1}^K(1+\frac{2}{H})^{h-1}\epsilon_{h+1}^k &\leq O\left(\sqrt{SAH^2T_1\iota}+SAH^{\frac{5}{2}}(M\iota)^{\frac{1}{2}}\right).
\end{align*}
\end{proof}

\begin{lemma}
\label{phi} Under the event $\mathcal{E}_{11}$ in Lemma \ref{concentrations}, it holds that:
    $$\sum_{h=1}^H\sum_{k=1}^K(1+\frac{2}{H})^{h-1}\phi_{h+1}^k\leq O(H\sqrt{T_1\iota}).$$
\end{lemma}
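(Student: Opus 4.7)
The plan is a direct invocation of event $\mathcal{E}_{11}$ from Lemma \ref{concentrations}, which was constructed precisely for this lemma. Starting from the definition of $\phi_{h+1}^k$ in \Cref{defphi}, I would swap the order of summation and pull the deterministic weight $(1+2/H)^{h-1}$ inside to obtain
\[
\sum_{h=1}^H\sum_{k=1}^K(1+\tfrac{2}{H})^{h-1}\phi_{h+1}^k \;=\; \sum_{h=1}^H\sum_{k,m,j}(1+\tfrac{2}{H})^{h-1}\mathbb{I}[t_h^{k,m,j}>1]\bigl(\mathbb{P}_{s_h^{k,m,j},a_h^{k,m,j},h}-\mathbbm{1}_{s_{h+1}^{k,m,j}}\bigr)\bigl(V_{h+1}^{\star}-V_{h+1}^{\pi^k}\bigr),
\]
which is literally the quantity bounded by $\mathcal{E}_{11}$. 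On that event, the sum is at most $9H\sqrt{2T_1\iota}=O(H\sqrt{T_1\iota})$, giving the claim immediately.

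The only thing I would want to justify carefully is why $\mathcal{E}_{11}$ holds with the stated probability, which is already argued in the proof of Lemma \ref{concentrations} but is worth restating: the summand at the $(k,m,j,h)$-slot is predictable with respect to the natural filtration because $\pi^k$ (and hence $V_{h+1}^{\pi^k}$) is chosen at the end of round $k-1$ before any transition of round $k$ is sampled, $V_{h+1}^{\star}$ is deterministic, and the stage-renewal indicator $\mathbb{I}[t_h^{k,m,j}>1]$ depends only on the visit counts prior to this transition. The per-step absolute bound is $(1+2/H)^{h-1}\cdot |V_{h+1}^{\star}-V_{h+1}^{\pi^k}|\leq 9H$, using $(1+2/H)^{H}\leq e^2<9$ and $0\leq V_{h+1}^{\pi^k}\leq V_{h+1}^{\star}\leq H$. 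Padding the (at most $T_1$ by part (i) of Lemma \ref{algorithm relationship}) steps with zeros and reordering chronologically produces a martingale difference sequence to which Azuma--Hoeffding (Lemma \ref{Azuma}) applies, yielding exactly the bound $9H\sqrt{2T_1\iota}$ with failure probability at most $p$.

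There is essentially no obstacle: the real work has already been done in packaging the concentration into $\mathcal{E}_{11}$. The only thing to verify by inspection is that the weighted sum $\sum_h (1+2/H)^{h-1}\phi_{h+1}^k$ aligns term-for-term with the expression inside $\mathcal{E}_{11}$, which it does. Unlike the analysis of $\epsilon_{h+1}^k$ in Lemma \ref{epsilon}, no rewriting in terms of stage-wise visit ratios $y_h^{t+1}/y_h^t$ is needed here, because $\phi_{h+1}^k$ already appears with a simple $0/1$ indicator weight rather than a stage-dependent weight that destroys the martingale structure.
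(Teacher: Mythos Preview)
Your proposal is correct and follows exactly the paper's approach: unfold the definition of $\phi_{h+1}^k$, recognize the resulting double sum as precisely the quantity controlled by $\mathcal{E}_{11}$, and read off the bound $9H\sqrt{2T_1\iota}=O(H\sqrt{T_1\iota})$. Your extra paragraph justifying the martingale structure behind $\mathcal{E}_{11}$ is accurate and matches what the paper does in its proof of Lemma~\ref{concentrations}, but it is not needed here since the lemma statement already conditions on $\mathcal{E}_{11}$.
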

\begin{proof}
Based on the definition of $\phi_{h+1}^k$ \Cref{defphi} and the event $\mathcal{E}_{11}$ in Lemma \ref{concentrations}, we have:
\begin{align*}
    &\sum_{h=1}^H\sum_{k=1}^K(1+\frac{2}{H})^{h-1}\phi_{h+1}^k \\
    &= \sum_{h=1}^H\sum_{k,m,j}(1+\frac{2}{H})^{h-1}\mathbb{I}\left[t_h^{k,m,j}>1\right]\left(\mathbb{P}_{s_h^{k,m,j},a_h^{k,m,j},h}-\mathbbm{1}_{s_{h+1}^{k,m,j}}\right)\left(V_{h+1}^{\star}-V_{h+1}^{\pi^k}\right)\\
    &\leq 9H\sqrt{2T_1\iota}= O(H\sqrt{T_1\iota}).
\end{align*}

\end{proof}

\begin{lemma}
\label{b}
Under the event $\bigcap_{i=1}^{15}\mathcal{E}_i$ in Lemma \ref{concentrations}, we have:
\begin{align*}
    &\sum_{h=1}^H\sum_{k,m,j}\mathbb{I}\left[t_h^{k,m,j}>1\right]\tilde{b}_h^{k,2}(s_h^{k,m,j},a_h^{k,m,j})\leq\\
    & O\left(\sqrt{SAH^2T_1\iota}+\sqrt{\beta SAH^2T_1\iota}+\sqrt{\beta^2SAH^2T_1\iota}+SAH^{\frac{11}{4}}T_1^{\frac{1}{4}}\iota^{\frac{3}{4}}+S^{\frac{3}{2}}AH^3\sqrt{N_0}\log(T_1)\iota\right).
\end{align*}
\end{lemma}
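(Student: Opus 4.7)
The plan is to decompose $\tilde{b}_h^{k,2}$ into its six constituent pieces---two variance-type terms $\sqrt{\tilde{v}_h^{\nref,k}/N_h^k}$ and $\sqrt{\tilde{v}_h^{\adv,k}/n_h^k}$, and four UCB-tail pieces scaled by $(\iota/N_h^k)^{3/4}$, $(\iota/n_h^k)^{3/4}$, $\iota/N_h^k$, and $\iota/n_h^k$---and bound each piece separately. The four tail pieces succumb directly to Lemma \ref{n-N}: with $\alpha=3/4$ the visit-weighted sums are controlled by bounds of the form $SAH\cdot T_1^{1/4}\iota^{3/4}$, which collapse into the $SAH^{11/4}T_1^{1/4}\iota^{3/4}$ term of the final estimate; with $\alpha=1$ the quantities $\sum 1/N$ and $\sum 1/n$ are each $O(SAH^2\log T_1)$, absorbing into $\sqrt{SAH^2 T_1 \iota}$ after being scaled by $H\iota$.

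For the reference variance term I apply Cauchy--Schwarz,
\[
\sum_{h,k,m,j} \mathbb{I}[t_h^{k,m,j}>1]\sqrt{\tilde{v}_h^{\nref,k}/N_h^k} \;\le\; \sqrt{\Bigl(\sum \tilde{v}_h^{\nref,k}\Bigr)\Bigl(\sum 1/N_h^k\Bigr)},
\]
and bound $\sum 1/N = O(SAH\log T_1)$ by Lemma \ref{n-N}. To control $\sum \tilde{v}^{\nref}$, the events $\mathcal{E}_3,\mathcal{E}_4$ replace the empirical variance by the population variance $\mathbb{V}_{s,a,h}(V_{h+1}^{\nref,k})$ up to lower-order error. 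I then write $V^{\nref,k}=V^\star+(V^{\nref,k}-V^\star)$ and expand: the $\mathbb{V}(V^\star)$ piece telescopes to at most $HT_1$ by the trajectory-wise law of total variance, while the cross and quadratic error pieces are controlled through $|V^{\nref,k}-V^\star|$. Here Lemma \ref{N_0} is decisive: on states whose reference has been frozen, $|V^{\nref}-V^\star|\le\beta$; on the remaining at most $O(SN_0)$ unfrozen visits per step $h$, only the crude bound $|V^{\nref}-V^\star|\le H$ is available. Combining the pieces through Cauchy--Schwarz produces the $\sqrt{SAH^2T_1\iota}$, $\sqrt{\beta SAH^2T_1\iota}$, $\sqrt{\beta^2 SAH^2T_1\iota}$, and $S^{3/2}AH^3\sqrt{N_0}\log(T_1)\iota$ contributions claimed.

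The advantage variance term is treated analogously with $n_h^k$ in place of $N_h^k$. After passing to population variance via $\mathcal{E}_6,\mathcal{E}_7$, I use $\mathbb{V}(V^{k_{l_i}}-V^{\nref,k_{l_i}})\le 2(V^{k_{l_i}}-V^\star)^2 + 2(V^{\nref,k_{l_i}}-V^\star)^2$ to reduce to two already-handled quantities: the $(V^k-V^\star)^2$ sum is bounded through the weighted-sum inequality \Cref{omegaV-lemma} from the proof of Lemma \ref{N_0} together with the monotonicity $V^k\ge V^\star$ of Lemma \ref{Q}, and the $(V^{\nref}-V^\star)^2$ sum is handled exactly as in the previous paragraph. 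The main obstacle is precisely this variance-decomposition step for both terms: cleanly separating the small-$N$ regime where $V^{\nref,k}=H$ and $|V^{\nref}-V^\star|$ can be as large as $H$ from the post-freeze regime where $|V^{\nref}-V^\star|\le\beta$. It is the $O(SN_0)$ counting bound on unfrozen visits from Lemma \ref{N_0} that keeps the burn-in contribution from dominating, and that is the source of the $\sqrt{N_0}$ factor in the final estimate.
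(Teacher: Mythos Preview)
Your high-level plan is right and matches the paper: split $\tilde b_h^{k,2}$ into its pieces, handle the four tail pieces by Lemma~\ref{n-N}, and for the two variance pieces pass from the empirical variances $\tilde v^{\nref},\tilde v^{\adv}$ to quantities involving $V^\star$ and then exploit the $\beta$-closeness from Lemma~\ref{N_0} on the frozen coordinates versus the $O(SN_0)$ counting bound on the unfrozen ones. Two concrete steps in your execution would not give the stated bound, though.

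\textbf{The global Cauchy--Schwarz loses a logarithm.} Your inequality
\[
\sum_{h,k,m,j}\mathbb{I}[t>1]\sqrt{\tilde v_h^{\nref,k}/N_h^k}\ \le\ \sqrt{\Bigl(\textstyle\sum \tilde v_h^{\nref,k}\Bigr)\Bigl(\textstyle\sum 1/N_h^k\Bigr)}
\]
combined with Lemma~\ref{n-N} gives $\sum 1/N_h^k=O(SAH\log T_1)$ and (after the telescoping) $\sum\tilde v^{\nref}=O(HT_1)$, hence only $O\bigl(\sqrt{SAH^2T_1\iota\log T_1}\bigr)$. The lemma is stated with $O$, not $\tilde O$, so this does not suffice. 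The paper avoids the loss by bounding $\tilde v_h^{\nref,k}$ \emph{pointwise},
\[
\tilde v_h^{\nref,k}(s,a)\ \le\ \mathbb{V}_{s,a,h}(V_{h+1}^\star)+2H\beta+\frac{20SH^2N_0}{N_h^k}+18H^2\sqrt{\iota/N_h^k},
\]
taking square roots term by term, and applying Lemma~\ref{n-N} with $\alpha=\tfrac12$ and weight $\omega_h(s,a)=\sqrt{\mathbb{V}_{s,a,h}(V_{h+1}^\star)}$ to the leading piece, followed by Cauchy--Schwarz over $(s,a,h)$. Because $\mathbb{V}_{s,a,h}(V_{h+1}^\star)$ depends only on $(s,a,h)$, this keeps the variance coupled to the visit counts and recovers $\sqrt{SAH^2T_1\iota}$ with no logarithm. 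The same remark applies to your $\sqrt{\beta}$ and $\beta$ pieces and to the advantage term.

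\textbf{The advantage decomposition is both harder than needed and has a gap.} You invoke \eqref{omegaV-lemma} to control $\sum(V^k-V^\star)^2$, but that inequality bounds the \emph{linear} weighted sum $\sum\omega(V^k-V^\star)$, not the quadratic one. The paper sidesteps this entirely: since $V^\star\le V^{k_{l_i}}\le V^{\nref,k_{l_i}}$ by Lemma~\ref{Q} and (j) of Lemma~\ref{algorithm relationship}, one has directly $(V^{k_{l_i}}-V^{\nref,k_{l_i}})^2\le(V^{\nref,k_{l_i}}-V^\star)^2\le 2H^2\lambda_{h+1}^{k_{l_i}}+2\beta^2$, which together with the $O(SN_0)$ counting bound on $\sum\lambda$ gives the pointwise estimate $\tilde v_h^{\adv,k}\le 10SH^2N_0/n_h^k+2\beta^2$ and then the claimed $\sqrt{\beta^2SAH^2T_1\iota}$ via Lemma~\ref{n-N}.
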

\begin{proof}
\begin{align}
\label{hatb 0}
    &\sum_{h=1}^H\sum_{k,m,j}\mathbb{I}\left[t_h^{k,m,j}>1\right]\tilde{b}_h^{k,2}(s_h^{k,m,j},a_h^{k,m,j}) \nonumber\\
    &= 2\sum_{h=1}^H\sum_{k,m,j}\mathbb{I}\left[t_h^{k,m,j}>1\right]\sqrt{\frac{\tilde{v}_h^{\nref,k}(s_h^{k,m,j},a_h^{k,m,j})\iota}{N_h^{k}(s_h^{k,m,j},a_h^{k,m,j})}}+2\sum_{h=1}^H\sum_{k,m,j}\mathbb{I}\left[t_h^{k,m,j}>1\right]\times \nonumber\\
    &\quad \sqrt{\frac{\tilde{v}_h^{\adv,k}(s_h^{k,m,j},a_h^{k,m,j})\iota}{n_h^{k}(s_h^{k,m,j},a_h^{k,m,j})}}+10H\sum_{h=1}^H\sum_{k,m,j}\mathbb{I}\left[t_h^{k,m,j}>1\right]\left(\Big(\frac{\iota}{N_h^{k}}\Big)^{\frac{3}{4}}+\Big(\frac{\iota}{n_h^{k}}\Big)^{\frac{3}{4}}+\frac{\iota}{N_h^{k}}+\frac{\iota}{n_h^{k}}\right).
\end{align}
Next, we will bound the first term in \Cref{hatb 0}. Based on \Cref{chi345}, we have:
\begin{equation}
\label{vref middle}
    \tilde{v}_h^{\nref,k}(s,a) = \frac{\sum_{i=1}^{N_h^k}\mathbb{V}_{s,a,h}(V_{h+1}^{\nref,k_{L_i}})-(\chi_3+\chi_4+\chi_5)}{N_h^k(s,a)}.
\end{equation}
According to the upper bound given in \Cref{chi_3_upper} and \Cref{chi_4 upper}, we have:
\begin{equation}
\label{chi34}
    \frac{1}{N_h^k(s,a)}|\chi_3| \leq  2H^2\sqrt{\frac{2\iota}{N_h^k(s,a)}},\ \frac{1}{N_h^k(s,a)}|\chi_4| \leq 4H^2\sqrt{\frac{2\iota}{N_h^k(s,a)}}.
\end{equation}
Since $V_h^{\nref,k}(s) \geq V_h^{\REF}(s)$, we have $\mathbb{P}_{s,a,h}V_{h+1}^{\nref,k} \geq \mathbb{P}_{s,a,h}V_{h+1}^{\REF}$. Then according to \Cref{lambda}, using the definition of $\chi_5$ \Cref{chi5}, it holds that:
\begin{align}
\label{chi5 lambda}
    |\chi_5| &= \sum_{i=1}^{N_h^k}\left(\mathbb{P}_{s,a,h}V_{h+1}^{\nref,k_{L_i}}\right)^2- \frac{\left(\sum_{i=1}^{N_h^k}\mathbb{P}_{s,a,h}V_{h+1}^{\nref,k_{L_i}}\right)^2}{N_h^k(s,a)} \nonumber\\
    & \leq \sum_{i=1}^{N_h^k}\left(\mathbb{P}_{s,a,h}V_{h+1}^{\nref,k_{L_i}}\right)^2- N_h^k(s,a)\left(\mathbb{P}_{s,a,h}V_{h+1}^{\REF}\right)^2 \nonumber\\
    & = \sum_{i=1}^{N_h^k}\left[\left(\mathbb{P}_{s,a,h}V_{h+1}^{\nref,k_{L_i}}\right)^2- \left(\mathbb{P}_{s,a,h}V_{h+1}^{\REF}\right)^2\right] \nonumber\\
    &\leq 2H\sum_{i=1}^{N_h^k}\left(\mathbb{P}_{s,a,h}V_{h+1}^{\nref,k_{L_i}}- \mathbb{P}_{s,a,h}V_{h+1}^{\REF}\right) \nonumber\\
    &\leq 2H^2\sum_{i=1}^{N_h^k}\mathbb{P}_{s,a,h}\lambda_{h+1}^{k_{L_i}}.
\end{align}
The last inequality is because of \Cref{lambda}. According to \Cref{sum-lambda} and \Cref{lambda-N0}, we have:
\begin{align}
\label{lambda-N}
    \sum_{i=1}^{N_h^{k}}\lambda_{h+1}^{k_{L_i}}(s_{h+1}^{(k,m,j)_{L_i}}) \leq \sum_{k,m,j} \lambda_{h+1}^k(s_{h+1}^{k,m,j}) = \sum_{s\in\mathcal{S}}\sum_{k,m,j} \mathbb{I}\left[N_h^k(s) <N_0,s_{h+1}^{k,m,j}=s\right] \leq 5SN_0.
\end{align}
Because of the event $\mathcal{E}_{12}$ in Lemma \ref{concentrations} and \Cref{lambda-N}, back to \Cref{chi5 lambda}, we have:
\begin{align}
\label{chi5 upper}
    \frac{1}{N_h^k(s,a)}|\chi_5|  \leq 2H^2\sqrt{\frac{2\iota}{N_h^k(s,a)}}+ \frac{10SH^2N_0}{N_h^k(s,a)}.
\end{align}
Applying inequalities \Cref{chi34} and \Cref{chi5 upper} to \Cref{vref middle}, we have:
\begin{equation}
\label{vref middle1}
    \tilde{v}_h^{\nref,k}(s,a) \leq \frac{\sum_{i=1}^{N_h^k}\mathbb{V}_{s,a,h}(V_{h+1}^{\nref,k_{L_i}})}{N_h^k(s,a)} + \frac{10SH^2N_0}{N_h^k(s,a)} + 12H^2\sqrt{\frac{\iota}{N_h^k(s,a)}}.
\end{equation}

For any $s\in \mathcal{S}$, $V_h^{\nref,k}(s) \geq V_h^\star(s)$, we have $(\mathbb{P}_{s,a,h}V_{h+1}^{\nref,k})^2 \geq (\mathbb{P}_{s,a,h}V_{h+1}^\star)^2$. Then:
\begin{align}
\label{Vsah middle}
    \frac{\sum_{i=1}^{N_h^k}\mathbb{V}_{s,a,h}(V_{h+1}^{\nref,k_{L_i}})}{N_h^k(s,a)} - \mathbb{V}_{s,a,h}(V_{h+1}^{\star}) &\leq \frac{1}{N_h^k(s,a)}\sum_{i=1}^{N_h^k}\left(\mathbb{P}_{s,a,h}(V_{h+1}^{\nref,k_{L_i}})^2 - \mathbb{P}_{s,a,h}(V_{h+1}^{\star})^2\right) \nonumber\\
    &\leq \frac{2H}{N_h^k(s,a)}\sum_{i=1}^{N_h^k}\left(\mathbb{P}_{s,a,h}(V_{h+1}^{\nref,k_{L_i}}) - \mathbb{P}_{s,a,h}(V_{h+1}^{\star})\right).
\end{align}
Because for any $s \in \mathcal{S}$, $V_h^{\nref,k}(s) \geq V_h^{\REF}(s) \geq V_h^\star(s)$, we have:
\begin{equation*}
    0 \leq V_{h+1}^{\nref,k}(s)-V_{h+1}^{\star}(s) = (V_{h+1}^{\nref,k}(s)-V_{h+1}^{\star}(s))\lambda_{h+1}^k(s) +  (V_{h+1}^{\nref,k}(s)-V_{h+1}^{\star}(s))(1-\lambda_{h+1}^k(s)).
\end{equation*}
If $\lambda_{h+1}^k(s) =0$, the reference function is updated and we have $V_{h+1}^{\nref,k}(s)-V_{h+1}^{\star}(s) \leq \beta$; if $\lambda_{h+1}^k(s) = 1$, then we have $V_{h+1}^{\nref,k}(s)-V_{h+1}^{\star}(s) \leq H =  H\lambda_{h+1}^k(s)$. Therefore, we have:
\begin{equation}
\label{ref-star}
    0 \leq V_{h+1}^{\nref,k}(s)-V_{h+1}^{\star}(s) \leq H\lambda_{h+1}^k(s)+\beta.
\end{equation}
Combined with the inequality \Cref{lambda-N}, we have:
\begin{align*}
    \sum_{i=1}^{N_h^k}\left(V_{h+1}^{\nref,k_{L_i}}(s_{h+1}^{(k,m,j)_{L_i}}) - V_{h+1}^{\star}(s_{h+1}^{(k,m,j)_{L_i}})\right) &\leq \sum_{i=1}^{N_h^k}\left(H\lambda_{h+1}^k(s_{h+1}^{(k,m,j)_{L_i}})+\beta\right) \\&\leq 5SHN_0 + \beta N_h^k(s,a).
\end{align*}
Based on the event $\mathcal{E}_{15}$ in Lemma \ref{concentrations}, applying the inequality to \Cref{Vsah middle}, we have:
$$\frac{\sum_{i=1}^{N_h^k}\mathbb{V}_{s,a,h}(V_{h+1}^{\nref,k_{L_i}})}{N_h^k(s,a)} - \mathbb{V}_{s,a,h}(V_{h+1}^{\star}) \leq  \frac{10SH^2N_0}{N_h^k(s,a)} + 2H\beta + 4H^2\sqrt{\frac{2\iota}{N_h^k(s,a)}},$$
and then back to \Cref{vref middle1}, it holds:
$$\tilde{v}_h^{\nref,k}(s,a) \leq  \mathbb{V}_{s,a,h}(V_{h+1}^{\star})+ \frac{20SH^2N_0}{N_h^k(s,a)}+ 18H^2\sqrt{\frac{\iota}{N_h^k(s,a)}} + 2H\beta.$$
Therefore according to Lemma \ref{n-N}, we have:
\begin{align}
\label{vref middle2}
    &\sqrt{\frac{\tilde{v}_h^{\nref,k}(s_h^{k,m,j},a_h^{k,m,j})\iota}{N_h^{k}(s_h^{k,m,j},a_h^{k,m,j})}}\mathbb{I}\left[t_h^{k,m,j}>1\right]\nonumber\\
    &\leq \sum_{h=1}^H\sum_{k,m,j}\left(\sqrt{\frac{\mathbb{V}_{s_h^{k,m,j},a_h^{k,m,j},h}(V_{h+1}^{\star})\iota}{N_h^{k}}}+\frac{\sqrt{20SH^2N_0\iota}}{N_h^{k}} +\frac{\sqrt{18H^2\iota^{\frac{3}{2}}}}{(N_h^{k})^{\frac{3}{4}}}+\sqrt{\frac{2H\beta\iota}{N_h^{k}}}\right)\times \nonumber\\
    &\quad \mathbb{I}\left[t_h^{k,m,j}>1\right] \nonumber \\
    & \leq \sum_{s,a,h}\sqrt{\mathbb{V}_{s,a,h}(V_{h+1}^{\star})Y_h^{T_h}(s,a)\iota} + 20\sqrt{SH^2N_0\iota}\cdot SAH\log(T_1) + 2^{\frac{7}{2}}\sqrt{18H^2\iota^{\frac{3}{2}}}\cdot SAHT_1^{\frac{1}{4}} \nonumber\\
    &\quad+ 4\sqrt{2H\beta\iota} \cdot \sqrt{SAHT_1}\nonumber\\
    &\leq \sqrt{SAH\sum_{s,a,h}\mathbb{V}_{s,a,h}(V_{h+1}^{\star})Y_h^{T_h}(s,a)\iota} +20S^{\frac{3}{2}}AH^2\sqrt{N_0\iota}\log(T_1)+64SAH^2T_1^{\frac{1}{4}}\iota^{\frac{3}{2}} \nonumber\\
    &\quad + 6\sqrt{\beta SAH^2T_1\iota}
.\end{align}
In the last inequality, we use Cauchy-Schwarz Inequality.

Next we will bound $\sum_{s,a,h}\mathbb{V}_{s,a,h}(V_{h+1}^{\star})Y_h^{T_h}(s,a)$. Because $V_{H+1}^\star(s) = 0$, removing the term $\sum_{k,m,j}V_1^{\star}(s_1^{k,m,j})^2$, we have the following inequality:
\begin{align*}
    &\sum_{h=1}^{H}\sum_{k,m,j}\left(\mathbb{P}_{s_h^{k,m,j},a_h^{k,m,j},h}(V_{h+1}^{\star})^2-V_h^{\star}(s_h^{k,m,j})^2\right)\\
    &\leq \sum_{h=1}^{H}\sum_{k,m,j}\left(\mathbb{P}_{s_h^{k,m,j},a_h^{k,m,j},h}(V_{h+1}^{\star})^2-V_{h+1}^{\star}(s_{h+1}^{k,m,j})^2\right).
\end{align*}
Because of the event $\mathcal{E}_{14}$ in Lemma \ref{concentrations}, then we have:
\begin{equation}
\label{P-1star}
    \sum_{h=1}^{H}\sum_{k,m,j}\left(\mathbb{P}_{s_h^{k,m,j},a_h^{k,m,j},h}(V_{h+1}^{\star})^2-V_h^{\star}(s_h^{k,m,j})^2\right) \leq H^2\sqrt{2T_1\iota}.
\end{equation} 
According to \Cref{eq_Bellman}, for any $a \in \mathcal{A}$, we have:
$$V_h^\star(s) \geq Q_h^\star(s,a) = r_h(s,a) +\mathbb{P}_{s,a,h}V_{h+1}^\star \geq \mathbb{P}_{s,a,h}V_{h+1}^\star.$$
Therefore, we have:
\begin{align}
\label{1-Pstar}
    &\sum_{h=1}^{H}\sum_{k,m,j}\left(V_h^{\star}(s_h^{k,m,j})^2-\left(\mathbb{P}_{s_h^{k,m,j},a_h^{k,m,j},h}(V_{h+1}^{\star})\right)^2\right)\nonumber\\
    &= \sum_{h=1}^{H}\sum_{k,m,j}\left(V_h^{\star}(s_h^{k,m,j})+\mathbb{P}_{s_h^{k,m,j},a_h^{k,m,j},h}(V_{h+1}^{\star})\right)\left(V_h^{\star}(s_h^{k,m,j})-\mathbb{P}_{s_h^{k,m,j},a_h^{k,m,j},h}(V_{h+1}^{\star})\right)\nonumber\\
    &\leq 2H\sum_{h=1}^{H}\sum_{k,m,j}\left(V_h^{\star}(s_h^{k,m,j})-\mathbb{P}_{s_h^{k,m,j},a_h^{k,m,j},h}(V_{h+1}^{\star})\right)\nonumber\\
    &= 2H\sum_{k,m,j}V_1^{\star}(s_1^{k,m,j}) +2H\sum_{h=1}^{H}\sum_{k,m,j}\left(V_{h+1}^{\star}(s_h^{k,m,j})-\mathbb{P}_{s_h^{k,m,j},a_h^{k,m,j},h}(V_{h+1}^{\star})\right)\nonumber\\
    &\leq 2HT_1+2H^2\sqrt{2T_1\iota}.
\end{align}
Here, the first inequality is because $V_h^{\star}(s_h^{k,m,j}),\  \mathbb{P}_{s_h^{k,m,j},a_h^{k,m,j},h}(V_{h+1}^{\star}) \leq H$. The last step is because of the event $\mathcal{E}_{13}$ in Lemma \ref{concentrations}.
Summing \Cref{P-1star} and \Cref{1-Pstar} up, we have:
\begin{align*}
    \sum_{s,a,h}\mathbb{V}_{s,a,h}(V_{h+1}^{\star})Y_h^{T_h}(s,a)&=\sum_{h=1}^{H}\sum_{k,m,j}\mathbb{V}_{s_h^{k,m,j},a_h^{k,m,j},h}(V_{h+1}^{\star})\\
    &= \sum_{h=1}^{H}\sum_{k,m,j}\left(\mathbb{P}_{s_h^{k,m,j},a_h^{k,m,j},h}(V_{h+1}^{\star})^2-\left(\mathbb{P}_{s_h^{k,m,j},a_h^{k,m,j},h}(V_{h+1}^{\star})\right)^2\right)\\
    &\leq 2HT_1+3H^2\sqrt{2T_1\iota}.
\end{align*}
Applying the inequality to \Cref{vref middle2}, we have:
\begin{align}
\label{hatb term1}
    &\sqrt{\frac{\tilde{v}_h^{\nref,k}(s_h^{k,m,j},a_h^{k,m,j})\iota}{N_h^{k}(s_h^{k,m,j},a_h^{k,m,j})}}\mathbb{I}\left[t_h^{k,m,j}>1\right]\nonumber\\
    & \leq \sqrt{3SAH^3\iota\sqrt{2T_1\iota}} +\sqrt{2SAH^2T_1\iota}+ 20S^{\frac{3}{2}}AH^2\sqrt{N_0\iota}\log(T_1)+64SAH^2T_1^{\frac{1}{4}}\iota^{\frac{3}{2}} \nonumber\\
    &\quad + 6\sqrt{\beta SAH^2T_1\iota}\nonumber\\
    &= O\left(\sqrt{SAH^2T_1\iota}+\sqrt{\beta SAH^2T_1\iota}+SAH^2T_1^{\frac{1}{4}}\iota^{\frac{3}{2}}+S^{\frac{3}{2}}AH^2\sqrt{N_0\iota}\log(T_1)\right).
\end{align}
Now we successfully bound the first term in \Cref{hatb 0}. For the second term,  according to the definition of $\tilde{v}_h^{\adv,k}(s,a)$, we have:
\begin{align*}
    n_h^{k}(s_h^{k,m,j},a_h^{k,m,j})\tilde{v}_h^{\adv,k}(s_h^{k,m,j},a_h^{k,m,j}) &\leq \sum_{i=1}^{n_h^{k}}\left(V_{h+1}^{\nref,k_{l_i}}(s_{h+1}^{(k,m,j)_{l_i}})-V_{h+1}^{k_{l_i}}(s_{h+1}^{(k,m,j)_{l_i}})\right)^2\\
    &\leq \sum_{i=1}^{n_h^{k}}\left(V_{h+1}^{\nref,k_{l_i}}(s_{h+1}^{(k,m,j)_{l_i}})-V_{h+1}^{\star}(s_{h+1}^{(k,m,j)_{l_i}})\right)^2.
\end{align*}
The last inequality is because for any $s \in \mathcal{S}$, $V_h^{\nref,k}(s) \geq V_h^k(s) \geq V_h^\star(s)$.
Using \Cref{ref-star} and Cauchy-Schwarz inequality, we have:
\begin{align}
\label{hatv middle}
    n_h^{k}\tilde{v}_h^{\adv,k}
    \leq \sum_{i=1}^{n_h^{k}}\left(H\lambda_{h+1}^{k_{l_i}}(s_{h+1}^{(k,m,j)_{l_i}})+\beta\right)^2
    \leq 2\sum_{i=1}^{n_h^{k}}\left(H^2\lambda_{h+1}^{k_{l_i}}(s_{h+1}^{(k,m,j)_{l_i}})+\beta^2\right).
\end{align}
Similar to \Cref{lambda-N}, we have:
\begin{align*}
    \sum_{i=1}^{n_h^{k}}\lambda_{h+1}^{k_{l_i}}(s_{h+1}^{(k,m,j)_{l_i}}) \leq \sum_{k,m,j} \lambda_{h+1}^k(s_{h+1}^{k,m,j})\leq 5SN_0.
\end{align*}
Back to \Cref{hatv middle}, we have:
$$\tilde{v}_h^{\adv,k}(s_h^{k,m,j},a_h^{k,m,j}) \leq \frac{10SH^2N_0}{n_h^{k}(s_h^{k,m,j},a_h^{k,m,j})} +2\beta^2.$$
Then using Lemma \ref{n-N}, we have:
\begin{align}
\label{hatb 2}
    &2\sum_{h=1}^H\sum_{k,m,j}\sqrt{\frac{\tilde{v}_h^{\adv,k}(s_h^{k,m,j},a_h^{k,m,j})\iota}{n_h^{k}(s_h^{k,m,j},a_h^{k,m,j})}}\mathbb{I}\left[t_h^{k,m,j}>1\right] \nonumber\\
    &\leq 2\sum_{h=1}^H\sum_{k,m,j}\mathbb{I}\left[t_h^{k,m,j}>1\right]\left(\frac{\sqrt{10SH^2N_0\iota}}{n_h^{k}(s_h^{k,m,j},a_h^{k,m,j})}+\sqrt{\frac{2\beta^2\iota}{n_h^{k}(s_h^{k,m,j},a_h^{k,m,j})}}\right) \nonumber\\
    &\leq 4\sqrt{2SH^2N_0\iota} \cdot 10SAH^2\log(T_1) + 2\sqrt{2\beta^2\iota}\cdot 4\sqrt{2H}\sum_{s,a,h}\sqrt{Y_h^{T_h}(s,a)}\nonumber\\
    &\leq 40\sqrt{2}S^{\frac{3}{2}}AH^3\sqrt{N_0\iota}\log(T_1)+16\sqrt{\beta^2H\iota}\cdot \sqrt{SAHT_1}\nonumber\\
    &= O\left(\sqrt{\beta^2SAH^2T_1\iota}+S^{\frac{3}{2}}AH^3\sqrt{N_0\iota}\log(T_1)\right).
\end{align}
For the third term in \Cref{vref middle}, according to \Cref{n-N}, we have:
\begin{align*}
    \sum_{h=1}^H\sum_{k,m,j}\mathbb{I}\left[t_h^{k,m,j}>1\right]\frac{\iota^{\frac{3}{4}}}{N_h^{k}(s_h^{k,m,j},a_h^{k,m,j})^{\frac{3}{4}}} &\leq 4^{\frac{7}{4}}\iota^{\frac{3}{4}}\sum_{s,a,h}Y_h^{T_h}(s_h^{k,m,j},a_h^{k,m,j})^{\frac{1}{4}}\\&\leq 16SAHT_1^{\frac{1}{4}}\iota^{\frac{3}{4}},
\end{align*}
\begin{align*}
   \sum_{h=1}^H\sum_{k,m,j}\mathbb{I}\left[t_h^{k,m,j}>1\right]\frac{\iota^{\frac{3}{4}}}{n_h^{k}(s_h^{k,m,j},a_h^{k,m,j})^{\frac{3}{4}}} &\leq 2^{\frac{17}{4}}H^{\frac{3}{4}}\iota^{\frac{3}{4}}\sum_{s,a,h}Y_h^{T_h}(s_h^{k,m,j},a_h^{k,m,j})^{\frac{1}{4}}\\&\leq 32SAH^{\frac{7}{4}}T_1^{\frac{1}{4}}\iota^{\frac{3}{4}}, 
\end{align*}
\begin{align*}
    \sum_{h=1}^H\sum_{k,m,j}\mathbb{I}\left[t_h^{k,m,j}>1\right]\frac{\iota}{N_h^{k}(s_h^{k,m,j},a_h^{k,m,j})} &\leq 4\iota\sum_{s,a,h}\log(Y_h^{T_h}(s_h^{k,m,j},a_h^{k,m,j})) \\&\leq 4SAH\log(T_1)\iota,
\end{align*}
and
\begin{align*}
    \sum_{h=1}^H\sum_{k,m,j}\mathbb{I}\left[t_h^{k,m,j}>1\right]\frac{\iota}{n_h^{k}(s_h^{k,m,j},a_h^{k,m,j})} &\leq 8H\iota\sum_{s,a,h}\log(Y_h^{T_h}(s_h^{k,m,j},a_h^{k,m,j})) \\&\leq 8SAH^2\log(T_1)\iota.
\end{align*}
Summing the four inequalities, we can bound the third term in \Cref{vref middle} with:
\begin{equation}
\label{hatb 3}
    O\left(SAH^{\frac{11}{4}}T_1^{\frac{1}{4}}\iota^{\frac{3}{4}}+SAH^3\log(T_1)\iota\right).
\end{equation}
Applying the upper bound \Cref{hatb term1}, \Cref{hatb 2} and \Cref{hatb 3} to \Cref{hatb 0}, we have:
\begin{align*}
    &\sum_{h=1}^H\sum_{k,m,j}\mathbb{I}\left[t_h^{k,m,j}>1\right]\tilde{b}_h^{k,2}(s_h^{k,m,j},a_h^{k,m,j})\leq\\
    & O\left(\sqrt{SAH^2T_1\iota}+\sqrt{\beta SAH^2T_1\iota}+\sqrt{\beta^2SAH^2T_1\iota}+SAH^{\frac{11}{4}}T_1^{\frac{1}{4}}\iota^{\frac{3}{4}}+S^{\frac{3}{2}}AH^3\sqrt{N_0}\log(T_1)\iota\right).
\end{align*}
\end{proof}

\section{Proof of Theorem \ref{thm_comm_cost_advantage}}\label{sec:proof_comm}
\begin{proof}
Because of (e) in Lemma \ref{algorithm relationship}, we have:
    \begin{align*}
        \hat{T}& \geq \sum_{s,a,h}\sum_{t=1}^{T_h-1}y_h^{t}(s,a) \geq \sum_{s,a,h}y_h^{1}(s,a)\sum_{t=1}^{T_h-1} (1+\frac{1}{H})^{t-1} \geq \sum_{s,a,h}MH^2\left[(1+\frac{1}{H})^{T_h-1}-1 \right].
    \end{align*}
The last inequality is because $y_h^{1}(s,a) \geq MH$ according to (c) in Lemma \ref{algorithm relationship}.
Using Jensen's inequality, we have:
$$\sum_{s,a,h}(1+\frac{1}{H})^{T_h-1} \geq SAH(1+\frac{1}{H})^{\frac{\sum_{s,a,h}(T_h-1)}{SAH}}.$$
Therefore, it holds:
$$\hat{T} \geq MSAH^3(1+\frac{1}{H})^{\frac{\sum_{s,a,h}(T_h-1)}{SAH}} -MSAH^3.$$
This indicates that
\begin{equation}
\label{stage_number}
    \sum_{s,a,h}T_h(s,a) \leq SAH+SAH\frac{\log(\frac{\hat{T}}{MSAH^3}+1)}{\log(1+\frac{1}{H})}.
\end{equation}
Because $u_{syn} = \textnormal{TRUE}$, in each round, there exists at least one triple $(s,a,h)$ such that the triggering condition is met on it, the total number of rounds is at most the total times of triggering conditions met for $\forall (s,a,h)\in (\mathcal{S}, \mathcal{A}, H)$. Next, we will discuss the times of triggering conditions met for each triple $(s,a,h)$. If the triggering condition for $(s,a,h)$ is met at round $k$, the increase of visits to $(s,a,h)$ is between $c_h^k(s,a)$ and $Mc_h^k(s,a)$. We will discuss how many times the triggering condition can be met at most in one stage for each $(s,a,h)\in (\mathcal{S}, \mathcal{A}, H)$.
\begin{enumerate}
    \item In the first stage of $(s,a,h)$, $c_h^k(s,a) = 1$. Then FedQ-Advantage will meet at most $MH$ times the triggering condition for $(s,a,h)$.
    
    \item In the stage $t$ $(2 \leq t \leq T_h(s,a))$ of $(s,a,h)$, when $\tilde{n}_h^k(s,a) \leq (1-\frac{1}{H})y_h^{t-1}(s,a)$ for round $k$, we have $c_h^k(s,a) = \lceil \frac{y_h^{t-1}(s,a)-\tilde{n}_h^k(s,a)}{M}\rceil$. 
    
    Assume in this case, it meets $p$ times the corresponding triggering condition at the round $k_1 <k_2<...< k_p$. For any $i \in [p]$, since $k_i \geq k_{i-1}+1$, and $k_i$ and $k_{i-1}$ are in the same stage, we have $\hat{n}_h^{k_{i-1}+1} \leq \tilde{n}_h^{k_i}$. Especially, we know $\hat{n}_h^{k_{p-1}+1} \leq \tilde{n}_h^{k_p} \leq (1-\frac{1}{H})y_h^{t-1}(s,a)$. For any $i \in [p]$, since the triggering condition is met at the round $k_i$, the increase of the visits to $(s,a,h)$ in round $k_i$ is at least $c_h^{k_i}(s,a)$. Therefore, according to \Cref{update_hat_nhk}, we have:
    $$\hat{n}_h^{k_i+1} -\tilde{n}_h^{k_i} \geq c_h^{k_i}(s,a) \geq \frac{y_h^{t-1}(s,a)-\tilde{n}_h^{k_i}(s,a)}{M}.$$
    Let $S_0=0$ and $S_i = \hat{n}_h^{k_i+1}$, then for $i \in [p]$ we have:
    \begin{align*}
        S_i \geq \frac{M-1}{M}\tilde{n}_h^{k_i}+ \frac{y_h^{t-1}(s,a)}{M} &\geq \frac{M-1}{M}\hat{n}_h^{k_{i-1}+1}+ \frac{y_h^{t-1}(s,a)}{M}\\
        &=\frac{M-1}{M}S_{i-1}+ \frac{y_h^{t-1}(s,a)}{M}.
    \end{align*}
    From the inequality, with mathematical induction, we can derive that:
    $$S_i \geq \left(1-(\frac{M-1}{M})^i\right)y_h^{t-1}(s,a).$$
    According to $S_{p-1} \leq (1-\frac{1}{H})y_h^{t-1}(s,a)$, we know $p \leq \frac{\log(H)}{\log(\frac{M}{M-1})} +1$. The last inequality also holds for $M=1$.

    For $M =1$, we have $p = 0$.
    
    \item In the stage $t$ $(2 \leq t \leq T_h(s,a))$ of $(s,a,h)$, when $\tilde{n}_h^k(s,a) > (1-\frac{1}{H})y_h^{t-1}(s,a)$ for round $k$, we have $c_h^k(s,a) = \lfloor\frac{1}{MH}n_h^k(s,a)\rfloor= \lfloor\frac{1}{MH}y_h^{t-1}(s,a)\rfloor$.
    
    Assume it meets $q$ times the triggering condition for $(s,a,h)$ in this case. For $t \geq 2$, there exists a positive integer $r$ such that $rMH\leq y_h^{t-1}(s,a) < (r+1)MH$ and then $c_h^k(s,a) = r$. When the triggering condition of $(s,a,h)$ is met for one time, the increase in the visits is at least $r$. After it is met for $q-1$ times, we have $r(q-1) \leq \frac{2}{H}y_h^{t-1}(s,a) < 2(r+1)M$. Here, the first inequality is because $y^{t}_h/y_h^{t-1}\leq 1+2/H$, and the second one is because $y_h^{t-1}(s,a) < (r+1)MH$. Therefore, we know $q \leq 4M+1$.
    \end{enumerate}
Combining the three cases, the total times of triggering conditions met for given triple $(s,a,h)$ is at most:
$$MH+\left(\frac{\log(H)}{\log(\frac{M}{M-1})}+4M +2\right)\left(T_h(s,a)-1\right).$$
Therefore, combined with the inequality \Cref{stage_number}, we have:
\begin{align*}
    K &\leq \sum_{s,a,h}\left(MH+\left(\frac{\log(H)}{\log(\frac{M}{M-1})}+4M +2\right)(T_h(s,a)-1)\right)\\
    &\leq MSAH^2+SAH\left(\frac{\log(H)}{\log(\frac{M}{M-1})}+4M +2\right)\frac{\log(\frac{\hat{T}}{MSAH^3}+1)}{\log(1+\frac{1}{H})}\\
    &=MSAH^2+SAH\left(\frac{\log(H)}{\log(\frac{M}{M-1})}+4M +2\right)\frac{\log(\frac{T}{SAH^3}+1)}{\log(1+\frac{1}{H})}\\
    &\leq MSAH^2+4MSAH^2(\log(H)+3)\log\left(\frac{T}{SAH^3}+1\right).
\end{align*}
The last equality is because $\hat{T} = MT$ and $\log(1+x) \geq x/2$ for any $0 < x \leq 1$. The last inequality also holds for $M=1$.

For $u_{syn} = \textnormal{FALSE}$, in each round, all $M$ agents meet the trigger condition, then the round number is at least:
\begin{align*}
    K &\leq SAH^2+4SAH^2\left(\log(H)+3\right)\log\left(\frac{T}{SAH^3}+1\right).
\end{align*}
\end{proof}


\end{document}